%% file: neurips_2026.tex
\documentclass{article}

\PassOptionsToPackage{numbers, compress}{natbib}

\usepackage[main, final]{neurips_2026}

\usepackage[utf8]{inputenc} 
\usepackage[T1]{fontenc}    
\usepackage{hyperref}       
\usepackage{url}            
\usepackage{booktabs}       
\usepackage{amsfonts}       
\usepackage{nicefrac}       
\usepackage{microtype}      
\usepackage{xcolor}         
\newcommand{\notecolor}{black}
\newcommand{\note}[1]{{\color{\notecolor}#1}}

\usepackage{amsmath}
\usepackage{algorithm, algpseudocode}

\usepackage{amsthm}
\newtheorem{theorem}{Theorem}
\usepackage{aliascnt} 
\newaliascnt{lemma}{theorem}\newtheorem{lemma}[lemma]{Lemma}\aliascntresetthe{lemma}
\newaliascnt{proposition}{theorem}\newtheorem{proposition}[proposition]{Proposition}\aliascntresetthe{proposition}
\newaliascnt{corollary}{theorem}\newtheorem{corollary}[corollary]{Corollary}\aliascntresetthe{corollary}

\usepackage{graphicx}
\usepackage{caption}

\title{Asymptotically Exact Negative Guidance of \\ Diffusion Models via Positive--Unlabeled Learning}

\author{%
  Hirohane Takagi$^{1}$ \qquad\qquad\qquad Masashi Sugiyama$^{2,1}$ \\
  $^{1}$\,The University of Tokyo, Tokyo, Japan \\
  $^{2}$\,RIKEN AIP, Tokyo, Japan \\
  \texttt{htakagi@is.s.u-tokyo.ac.jp} \quad \texttt{sugi@k.u-tokyo.ac.jp} \\
}

\begin{document}

\maketitle

\begin{abstract}
Negative guidance of diffusion models generates samples that avoid an undesired condition,
specified either by reference samples or by a conditional score alongside an unconditional model.
Existing methods inject repulsive terms derived from local information at each sample,
so the resulting sampler does not target an explicit density,
and its bias cannot be systematically reduced by increasing computation.
We instead present a negative-guidance method that samples from an explicit target density,
which uniformly handles both reference-sample and score-based specifications and suppresses probability mass near the undesired condition with an avoidance strength that parametrizes the target density itself.
Realizing this target requires estimating, at each diffusion timestep,
how likely each sample is to belong to the undesired condition.
We cast this estimation as a Positive--Unlabeled learning problem that admits lightweight online training along the diffusion trajectory,
using condition samples as positives and diffusion particles as unlabeled data.
Because the guided particles drift from the unconditional marginal,
we further correct the residual mismatch via Sequential Monte Carlo,
yielding a sampler that is asymptotically exact in the particle limit and that turns any existing negative-guidance method into its proposal kernel.
Empirically, our method approaches the ground-truth target on Gaussian mixtures and Pareto-dominates existing methods on trade-offs between avoidance and generation quality, distributional bias, and diversity,
including in an image-generation setting with a pretrained score network.
\end{abstract}

\section{Introduction}
\label{sec:intro}

Diffusion models have become a dominant framework for sampling from a data density $p(x)$ over objects $x$.
Their conditional generation mechanisms, including classifier guidance (CG;~\citealp{NEURIPS2021_49ad23d1}) and classifier-free guidance (CFG;~\citealp{ho2022classifierfreediffusionguidance}), are now standard across modalities,
including images \citep{Rombach_2022_CVPR}, molecules \citep{pmlr-v162-hoogeboom22a}, audio \citep{pmlr-v202-liu23f}, decision-making and robotic control \citep{pmlr-v162-janner22a,ajay2023is}.
CFG amplifies a \emph{desired} condition by extrapolating between conditional and unconditional scores,
and recent work has characterized the induced density and the resulting fidelity--diversity trade-off \citep{pmlr-v235-wu24b,NEURIPS2024_9a3942c2,bradley2025classifierfree,pmlr-v267-li25as,zhan2026understanding}.

In contrast, \emph{negative} guidance aims to steer generation \emph{away from} an undesired condition.
This capability is important for AI safety, where harmful outputs should be suppressed \citep{Schramowski_2023_CVPR,Gandikota_2023_ICCV,Gandikota_2024_WACV},
and where memorized or copyrighted training data should not be reproduced \citep{291199,Somepalli_2023_CVPR}.
It is also important for controllable generation, where constraints are often more naturally expressed by negation than by positive prompts \citep{armandpour2023re,10.1007/978-3-031-73024-5_12}.
Throughout this paper, $c$ denotes the undesired condition and $\neg c$ its complement.
The conditional probability $p(\neg c\mid x)$ satisfies $p(\neg c\mid x)=1-p(c\mid x)$.
Thus $c$ plays the opposite role to the conditioning variable in CFG: the goal is to reduce $p(c\mid x)$ for generated samples while preserving the structure of $p(x)$.

Existing negative guidance methods add a CFG-like repulsive term to the sampling dynamics of a fixed pretrained diffusion model,
either without retraining the score network \citep{Gandikota_2023_ICCV,takahashi2025positiveunlabeled} or by editing internal representations \citep{Gandikota_2024_WACV,biswas2025cure}.
The repulsion is constructed from either a pretrained conditional score for $p(x\mid c)$ \citep{koulischer2025dynamic},
or a finite set of reference samples from $p(x\mid c)$ \citep{kim2025trainingfree}.
These methods share two structural limitations:
the repulsive update uses only information local to each sample,
so the resulting sampler does not target an explicit density,
and the induced bias cannot be reduced by allocating more computation.
We formalize these limitations in \autoref{sec:existing}.

We instead formulate negative guidance as sampling from the explicit avoidance target
\begin{equation}
\label{eq:target}
p_{\mathrm{target}}(x)
\;\propto\;
p(x)\,(1-p(c\mid x))^{\lambda},
\qquad \lambda\ge 0,
\end{equation}
where $\lambda$ is a user-controlled avoidance strength that parametrizes the target density itself:
$\lambda=0$ recovers the unconditional density,
and $\lambda\to\infty$ concentrates mass on the support of $\neg c$.
When the condition $c$ is a semantic concept rather than a deterministic rule, $p(c\mid x)$ varies smoothly with $x$,
and intermediate $\lambda$ continuously interpolates between fidelity to $p(x)$ and avoidance of $c$.
The remaining challenge is to sample from~\eqref{eq:target} using only the information available in negative guidance:
an unconditional score for $p(x)$ together with either a conditional score for $p(x\mid c)$ or reference samples from $p(x\mid c)$.

Our approach rests on the observation that estimating $p(c\mid x)$ in this setting can be cast as a Positive--Unlabeled (PU) learning problem \citep{DENIS200570,10.1145/1401890.1401920},
with samples associated with class $c$ as positives and unconditional diffusion particles as unlabeled data.
We pair an online PU estimator of $p(c\mid x)$ with Sequential Monte Carlo (SMC;~\citealp{del2006sequential,chopin2020introduction}) targeting~\eqref{eq:target},
which corrects the residual mismatch between guided particles and the target marginal.
The resulting sampler is asymptotically exact in the particle limit,
in the same spirit as test-time diffusion alignment via SMC \citep{NEURIPS2023_63e8bc7b,pmlr-v267-singhal25b,kim2025testtime},
while keeping per-step PU updates lightweight.

\paragraph{Our contributions.}
\begin{itemize}
\item We formulate negative guidance as sampling from the explicit avoidance density~\eqref{eq:target},
\note{realized through the finite-step reverse process of a trained diffusion model,}
for both reference-sample and conditional-score specifications.
\item We design an SMC sampler driven by an online PU estimator of $p(c\mid x)$,
prove its asymptotic exactness, and show that any existing negative-guidance method can serve as its proposal kernel.
\item Empirically, our sampler matches an oracle diffusion sampler with access to the true score for $p(x \mid \neg c)$ on Gaussian mixtures and Pareto-dominates representative baselines on avoidance--quality trade-offs under both specifications, including image generation with a pretrained score network.
\end{itemize}

\section{Existing Negative Guidance Methods}
\label{sec:existing}

This section reviews representative negative guidance methods,
namely sign-flipped CFG (NegCFG; also analyzed as standard negative prompting by \citealp{10.1007/978-3-031-73024-5_12}),
Dynamic Negative Guidance (DNG; \citealp{koulischer2025dynamic}), and the Safe Denoiser \citep{kim2025trainingfree}.
We then identify structural limitations that motivate our formulation.

\paragraph{Diffusion models.}
We use a variance-preserving diffusion process \citep{song2021scorebased,ho2020denoising} with forward kernel
\begin{equation}
\label{eq:forward-kernel}
q_t(x_t\mid x_0)
=
\mathcal N(x_t;\sqrt{\bar\alpha_t}x_0,(1-\bar\alpha_t)I),
\qquad t=1,\dots,T,
\end{equation}
where $\mathcal{N}(x;\mu,\Sigma)$ denotes the Gaussian density in $x$ with mean $\mu$ and covariance $\Sigma$ and $I$ is the identity matrix.
This kernel induces noisy marginals $p_t(x_t)$ with $p_T(x_T)$ close to $\mathcal N(0,I)$.
A pretrained noise predictor $\hat\epsilon_\theta(x_t,t)$ parameterizes the unconditional score via
\[
\hat\epsilon_\theta(x_t,t)
\approx
-\sqrt{1-\bar\alpha_t}\,\nabla_{x_t}\log p_t(x_t),
\]
and, when available, a conditional predictor $\hat\epsilon_\theta(x_t,t\mid c)$ analogously parameterizes $\nabla_{x_t}\log p_t(x_t\mid c)$.
Tweedie's formula \cite{efron2011tweedie} gives the standard denoised estimate
\[
\hat x_0(x_t)
=
\frac{x_t-\sqrt{1-\bar\alpha_t}\hat\epsilon_\theta(x_t,t)}{\sqrt{\bar\alpha_t}}
\approx
\mathbb E[x_0\mid x_t].
\]
DDPM (Denoising Diffusion Probabilistic Models) sampling \citep{ho2020denoising} starts from $x_T\sim\mathcal N(0,I)$
and applies the reverse kernels $p_\theta(x_{t-1}\mid x_t)=\mathcal N(x_{t-1};\mu_\theta(x_t,t),\sigma_t^2I)$ for $t=T,T-1,\ldots,1$.
All guidance methods reviewed here modify only the mean $\mu_\theta$, leaving the variance $\sigma_t^2 I$ unchanged; this is exploited in \autoref{sec:method}.
Full closed-form expressions are deferred to Appendix~\ref{app:prelim:diffusion}.

\paragraph{Sign-flipped CFG (NegCFG).}
NegCFG combines unconditional and conditional noise predictors:
\[
\hat\epsilon_{\mathrm{CFG}} (x_t, t)
\leftarrow
\hat\epsilon_\theta(x_t,t)
-\eta
\ \bigl(
\hat\epsilon_\theta(x_t,t\mid c)-\hat\epsilon_\theta(x_t,t)
\bigr),
\]
and throughout this paper $\eta>0$ consistently denotes repulsion from condition $c$, while $\eta<0$ corresponds to standard CFG that amplifies $c$.
Several variants build on this basic rule with stabilization mechanisms that suppress artifacts arising from score extrapolation, though these mechanisms are tailored to specific applications or architectures~\citep{Schramowski_2023_CVPR,chen2026normalized}.

\paragraph{Dynamic Negative Guidance (DNG).}
DNG~\citep{koulischer2025dynamic} starts from the exact score decomposition
\begin{equation}
\label{eq:neg-score}
\nabla_{x_t}\log p_t(x_t\mid \neg c)
\leftarrow
\nabla_{x_t}\log p_t(x_t)
+
\eta \ 
\nabla_{x_t}\log\bigl(1-p_t(c\mid x_t)\bigr),
\end{equation}
which requires the global posterior $p_t(c\mid x_t)$.
Applying Bayes' rule yields a CFG-style repulsion scaled by the posterior odds:
\[
\nabla_{x_t}\log(1-p_t(c\mid x_t)) = -\tfrac{p_t(c\mid x_t)}{1-p_t(c\mid x_t)}\bigl(\nabla_{x_t}\log p_t(x_t\mid c) - \nabla_{x_t}\log p_t(x_t)\bigr).
\]
Since $p_t(c\mid x_t)$ is unavailable during sampling,
DNG estimates it recursively via a local Gaussian approximation to the reverse transition \citep{Li_2023_ICCV}:
\begin{equation}
\label{eq:dngposterior}
\log p_{\note{t-1}}(c\mid x_{t-1})
\leftarrow
\log p_t(c\mid x_t)
-
\tfrac{1}{2\sigma_t^2}
\left(
\|\note{x_{t-1}}-\mu_\theta(x_t,t\mid c)\|^2
-
\|\note{x_{t-1}}-\mu_\theta(x_t,t)\|^2
\right),
\end{equation}
introducing an approximation bias whose magnitude has no known bound.
\note{Appendix~\ref{app:prelim:dng} supplements this point with the conditions under which DNG becomes exact and how they differ from our setting.}

\paragraph{Reference-particle negative guidance.}
NegCFG and DNG both require a pretrained conditional score $\hat\epsilon_\theta(x_t,t\mid c)$, which does not exist for user-specified conditions such as copyrighted or memorized content \citep{kim2025trainingfree}; in such cases, the condition is instead provided as a finite reference set $\{x_c^i\}_{i=1}^N$.
The Safe Denoiser \citep{kim2025trainingfree} addresses this regime by modifying the denoised estimate itself:
\[
\hat x_{\mathrm{safe}}(x_t)
=
\hat x_0(x_t)
+
\frac{\eta}{N}
\sum_{i=1}^N q_t(x_t\mid x_c^i)
\left(
\hat x_0(x_t)
-
\frac{\sum_{k=1}^N x_c^k q_t(x_t\mid x_c^k)}
{\sum_{j=1}^N q_t(x_t\mid x_c^j)}
\right),
\]
where $q_t(x_t\mid x_c^i)$ is the forward kernel~\eqref{eq:forward-kernel} evaluated as a likelihood (not a sampling density) and $\eta>0$ controls the repulsion strength.
Geometrically, the bracketed term pushes $\hat x_0(x_t)$ away from the kernel-weighted mean of the references, and the prefactor amplifies the repulsion when $x_t$ lies near the reference region.
In practice, $\eta$ and the time window over which guidance is applied must be tuned per condition \citep{kim2025trainingfree,kim2026safetyguided}, navigating an implicit avoidance--quality trade-off.
Shielded Diffusion~\citep{pmlr-v267-kirchhof25a} takes a similar reference-particle approach but sparsifies the repulsion, activating it only when $\hat{x}_0(x_t)$ falls within a fixed radius of the reference region.

\paragraph{Structural limitations.}
These methods are lightweight and effective in practice, but share a structural limitation:
each replaces the global posterior $p_t(c\mid x_t)$ in \eqref{eq:neg-score} with a single-particle surrogate, namely a conditional score, a Gaussian-transition ratio, or nearby reference particles.
The repulsion-strength hyperparameter exposed by each method thus carries no density-level meaning, and the induced bias cannot be bounded or reduced by additional computation.
\autoref{sec:method} instead estimates $p_t(c\mid x_t)$ from multiple particles via online PU learning and corrects residual bias with SMC, targeting \eqref{eq:target} with $\lambda$ as the sole density-level knob.

\section{Proposed Method: PU Steering}
\label{sec:method}

The limitations above have a common root: without a global estimate of $p_t(c\mid x_t)$, no per-particle update can target~\eqref{eq:target} as an explicit density.
\emph{PU Steering} closes this gap by combining (i)~a tempered SMC path toward~\eqref{eq:target},
(ii)~an \emph{online} PU classifier $f_t\approx p_t(c\mid x_t)$ fit from the running particles,
and (iii)~a particle-density correction that makes those particles unbiased unlabeled data for the PU step.
Any existing negative-guidance method plugs in as the proposal kernel, and an analysis-friendly form of the sampler is asymptotically exact for~\eqref{eq:target} (\autoref{sec:theory}).
\autoref{sec:method:framework} presents the meta-algorithm; \autoref{sec:method:ref} and \autoref{sec:method:score} specialize it to the reference-sample and score-based regimes.

\subsection{Common Framework}
\label{sec:method:framework}

PU Steering combines three components, described in turn below:
an SMC backbone, a PU posterior estimator that yields $f_t$ online,
and an importance-weight correction that bridges them.

\paragraph{SMC backbone.}
SMC targets a sequence $\gamma_T, \gamma_{T-1}, \dots, \gamma_0$ in turn via a weighted particle system,
alternating at each step a \emph{proposal} that moves each particle,
and a \emph{weight update} that retargets it to the next $\gamma_t$.
Following test-time alignment of diffusion models via SMC \citep{NEURIPS2023_63e8bc7b,kim2025testtime} (recalled in Appendix~\ref{app:prelim:das}),
we deploy SMC along the reverse diffusion with the tempered path
\begin{equation}
\label{eq:density-path}
\gamma_t(x_t) \;\propto\; p_{\theta,t}(x_t)\,(1 - f_t(x_t))^{\lambda_t},
\qquad
0 = \lambda_T \le \ldots \le \lambda_0 = \lambda,
\end{equation}
realizing our target~\eqref{eq:target} at $t = 0$ whenever $f_0 = \note{p_{\theta,0}(c\mid \cdot)}$.
Here $p_{\theta,t}$ denotes the marginal of the pretrained reverse process in \autoref{sec:existing}, so $p_{\theta,t}(x_t) \approx p_t(x_t)$ and $\gamma_T(x_T) = \mathcal N(x_T; 0, I)$.
\note{All operations of PU Steering below are defined on this path of the pretrained sampler: posteriors and class-conditional densities are those of $p_{\theta,t}$, and $p_{\mathrm{target}}$ denotes~\eqref{eq:target} with $p$ replaced by $p_{\theta,0}$.}
Unlike \cite{NEURIPS2023_63e8bc7b,kim2025testtime}, where the potential defining $\gamma_t$ is pre-specified, we estimate $f_t$ online from the running particles.
\autoref{sec:theory} shows that the marginal of $\gamma_0$ depends only on the terminal $f_0$,
so neither accurate $\nabla_{x_t} f_t$ for all $t$ nor accurate $f_t$ at $t > 0$ is required.

Concretely, $M$ particles are initialized at $t = T$ as $\{x_T^j\}_{j=1}^M \stackrel{\mathrm{i.i.d.}}{\sim} \mathcal N(0, I)$ with $w_{q,T}^j = 1$ for all $j$.
At each subsequent step, a Gaussian proposal kernel 
$m_{t-1}(\,\cdot \mid x_t) = \mathcal N(\mu^{\mathrm{prop}}_t,\,\sigma_t^2 I)$
whose mean $\mu^{\mathrm{prop}}_t$ is specified per setting in \autoref{sec:method:ref} and \autoref{sec:method:score}, draws $x_{t-1}^j$;
the weight update from $\gamma_t$ to $\gamma_{t-1}$ then admits the closed-form recursion
\begin{align}
\label{eq:wq-recursion}
w_{q,t-1} &\;=\; w_{q,t}\;r_t
\frac{(1-f_{t-1}(x_{t-1}))^{\lambda_{t-1}}}{(1-f_t(x_t))^{\lambda_t}},
\; \ \text {where} \nonumber \\
r_t
&\;:=\; \frac{\exp\bigl(-\|x_{t-1}-\mu_\theta(x_t,t)\|^2 / 2\sigma_t^2\bigr)}{\exp\bigl(-\|x_{t-1}-\mu^{\mathrm{prop}}_t\|^2 / 2\sigma_t^2\bigr)}
\;=\; \frac{p_\theta(x_{t-1}\mid x_t)}{m_{t-1}(x_{t-1}\mid x_t)},
\end{align}
i.e., $r_t$ is the density ratio between the pretrained reverse kernel and the proposal;
the trailing classifier ratio in~\eqref{eq:wq-recursion} accounts for the change of tilting from $\gamma_t$ to $\gamma_{t-1}$ (derivation in Appendix~\ref{app:prelim:das}).

\paragraph{PU posterior estimator.}
Estimating $\note{p_{\theta,t}(c\mid x_t)}$ is a binary classification problem with positive class $c$ and negative class $\neg c$,
but no samples from $\note{p_{\theta,t}(\cdot\mid \neg c)}$ are directly available in either of our settings.
What both settings provide is (i)~positive samples from $\note{p_{\theta,t}(\cdot\mid c)}$ and (ii)~the pretrained unconditional marginal $p_{\theta,t}$,
which mixes the two class-conditional distributions at the prior rate $\pi = p(c)$:
\[
p_{\theta,t}(x_t) 
= \pi\,\note{p_{\theta,t}(x_t\mid c)} + (1-\pi)\,\note{p_{\theta,t}(x_t\mid \neg c)}.
\]
This identity recasts the binary classification risk using only samples from $p(\cdot\mid c)$ and unconditional samples alone,
giving the unbiased PU (uPU) risk \citep{NIPS2014_f032bc3f,pmlr-v37-plessis15}
\begin{equation}
\label{eq:upu}
\mathcal L_{\mathrm{uPU}}(f)
= \pi\,\mathbb E_{p(x\mid c)}[\ell(f(x), +1)]
+ \mathbb E_{p(x)}[\ell(f(x), -1)]
- \pi\,\mathbb E_{p(x\mid c)}[\ell(f(x), -1)],
\end{equation}
where $\mathbb{E}$ denotes the expectation. 
This risk equals the standard binary classification risk (derivation in Appendix~\ref{app:prelim:pu}).
Minimizing it with a proper loss yields $f_t \approx \note{p_{\theta,t}(c\mid x_t)}$,
provided $\pi$ is known or estimable from domain knowledge.
The SMC particles, however, do not follow $p_{\theta,t}$ and thus cannot directly serve as unlabeled samples in \eqref{eq:upu}.

\paragraph{Particle density correction.}
We correct this mismatch by importance reweighting \citep{Sugiyama+Kawanabe:2012}. 
The density $\hat\gamma_t$ induced by the $M$ particles under the proposal kernel and resampling rule has no closed form.
\note{Our key observation concerns} the importance weight retargeting it to $p_{\theta,t}$, \note{conceptually defined as}
\begin{equation}
\label{eq:w-empty}
w_{\emptyset,t}(x_t) \;:=\; \frac{p_{\theta,t}(x_t)}{\hat\gamma_t(x_t)}
\qquad (\emptyset\text{ marks the unconditional target}).
\end{equation}
Setting $\lambda_t \equiv 0$ in~\eqref{eq:wq-recursion} collapses $\gamma_t$ to $p_{\theta,t}$ and the classifier ratio to one, \note{giving its closed form}
\begin{equation}
\label{eq:wempty-recursion}
w_{\emptyset,t-1} \;=\; w_{\emptyset,t}\,r_t,
\qquad w_{\emptyset,T} = 1;
\end{equation}
i.e., $w_{\emptyset,t}$ is itself the SMC weight that would target $p_{\theta,t}$ along the \emph{same} particle trajectory as $w_{q,t}$.
The two recursions share $r_t$ and differ only in the tilting factor $(1-f_t)^{\lambda_t}$, so $w_{\emptyset,t}$ is tracked alongside $w_{q,t}$ at no additional cost.
Empirically approximating $\mathbb E_{p(x)}$ in~\eqref{eq:upu} with $\{(x_t^j, w_{\emptyset,t}^j)\}_{j=1}^M$ restores unbiasedness of the uPU risk used to fit $f_t$.
This bridges the SMC sampler and the PU estimator: 
\eqref{eq:wq-recursion} drives particles toward $\gamma_t$ for sampling, 
while~\eqref{eq:wempty-recursion} pulls them back to $p_{\theta,t}$ inside the uPU loss for $f_t$.

\begin{algorithm}[t]
\caption{PU Steering: one reverse-diffusion step ($t \to t{-}1$).}
\label{alg:pu-steer-meta}
\begin{algorithmic}[1]
\Require particles $\{(x_t^j, w_{q,t}^j, w_{\emptyset,t}^j)\}_{j=1}^M$,\ 
         classifier $f_t$,\ schedule entry $\lambda_{t-1}$,\ class prior $\pi$.

\Statex \hspace*{-0.5em}\textit{\color[gray]{0.6}\small\# Propose particles via any negative-guidance kernel}
\State $x_{t-1}^j \sim m_{t-1}(\,\cdot \mid x_t^j),\ j = 1,\dots,M$
       \Comment{e.g.,~\eqref{eq:prop-fixed}}

\Statex \hspace*{-0.5em}\textit{\color[gray]{0.6}\small\small\# Advance $w_\emptyset$ via the closed-form recursion}
\State update $w_{\emptyset,t-1}^j$ toward $p_{\theta,t-1}$
       \Comment{\eqref{eq:wempty-recursion}}

\Statex \hspace*{-0.5em}\textit{\color[gray]{0.6}\small\small\# Refit the PU classifier on new particles}
\State Sample positives $\mathcal P_{t-1}$ from $\note{p_{\theta,t-1}(\,\cdot \mid c)}$
       \Comment{\textit{setting-specific}: \autoref{sec:method:ref}/\autoref{sec:method:score}}
\State $f_{t-1} \leftarrow \arg\min_f \mathcal L_{\mathrm{uPU}}(f)$ on 
       $\mathcal P_{t-1}$ and $\{(x_{t-1}^j, w_{\emptyset,t-1}^j)\}_{j=1}^M$
       \Comment{\eqref{eq:upu}}

\Statex \hspace*{-0.5em}\textit{\color[gray]{0.6}\small\small\# Reweight $w_q$ via the shared closed-form recursion}
\State update $w_{q,t-1}^j$ toward $\gamma_{t-1}$
       \Comment{\eqref{eq:wq-recursion}}

\Statex \hspace*{-0.5em}\textit{\color[gray]{0.6}\small\small\# Resample only when particles degenerate}
\State \textbf{if} $\mathrm{ESS}(\{w_{q,t-1}^j\}) < \mathrm{ESS}_{\min}$ \textbf{then}
       resample $\propto w_{q,t-1}^j$ and reweight $w_{\emptyset,t-1}^j$ accordingly
\end{algorithmic}
\end{algorithm}

\paragraph{Meta-algorithm.}
\autoref{alg:pu-steer-meta} summarizes one timestep of PU Steering, 
which propagates particles through a negative-guidance proposal, 
refits the PU classifier on the reweighted particles, 
and updates the target weights before resampling. 
The full procedure initializes 
$\{x_T^j\}_{j=1}^M \stackrel{\mathrm{i.i.d.}}{\sim} \mathcal N(0, I)$, 
$w_{q,T}^j = w_{\emptyset,T}^j = 1$, and any $f_T$, 
and then applies \autoref{alg:pu-steer-meta} for $t = T, T-1, \ldots, 1$.
The two settings in \autoref{sec:method:ref} and \autoref{sec:method:score} 
differ only in the proposal kernel $m_{t-1}$ and in how positives $\mathcal P_{t-1}$ are obtained;
all other steps are shared.

Negative guidance ultimately requires a particle set in which the undesired condition $c$ is suppressed, not a weighted empirical distribution dominated by a few high-weight particles.
\emph{Resampling} enforces this: ancestors $a(j) \propto w_{q,t-1}^j$ are drawn,
each particle is replaced by $x_{t-1}^{a(j)}$, $w_{q,t-1}^j$ is reset to $1$, 
and $w_{\emptyset,t-1}^j$ is rescaled to 
$w_{\emptyset,t-1}^{a(j)} / w_{q,t-1}^{a(j)}$ to preserve the ratio interpretation~\eqref{eq:w-empty}.
Following \cite{chopin2020introduction}, we trigger resampling only when the effective sample size 
$\mathrm{ESS}(\{w_{q,t-1}^j\}) := \bigl(\sum_j w_{q,t-1}^j\bigr)^2 
\big/ \sum_j (w_{q,t-1}^j)^2$ 
falls below $\mathrm{ESS}_{\min}$,
since unnecessary resampling adds Monte Carlo noise and computational cost without benefit.

\subsection{Reference-Based Setting}
\label{sec:method:ref}

We instantiate \autoref{alg:pu-steer-meta} when the undesired condition $c$ is specified by a finite reference set $\{x_c^i\}_{i=1}^N$ \note{of clean samples}.
The setting-specific slots are the positive set $\mathcal P_{t-1}$ in Step~3 and the proposal mean $\mu_t^{\mathrm{prop}}$ in Step~1.
Detailed pseudocode is given as \autoref{alg:neg_das_pu_fixed} in \autoref{app:algorithms}.

\paragraph{PU classifier.}
Only the terminal $f_0$ controls the marginal of $\gamma_0$ (\autoref{sec:method:framework}), so $f_t$ for $t > 0$ need not be highly accurate.
Classification of $c$ is also intrinsically easier on clean samples than on noised ones,
since the forward kernel of \eqref{eq:forward-kernel} destroys class-discriminative structure as $t$ grows. 
Both observations motivate the parameterization
\[
f_t(x_t) \;:=\; g_0\bigl(\hat x_0(x_t)\bigr), 
\qquad g_0(x_0) \approx \note{p_{\theta,0}(c \mid x_0)}.
\]
At $t = 0$, $\hat x_0(x_0) = x_0$ gives $f_0 \equiv g_0$ and 
$\gamma_0(x_0) \propto \note{p_{\theta,0}(x_0)}\bigl(1 - g_0(x_0)\bigr)^\lambda$, recovering $p_{\mathrm{target}}$.

Because $g_0$ acts on inputs in the denoised space,
the uPU loss must be estimated with both positives and unlabeled samples projected through $\hat x_0$. We accordingly set
\[
\mathcal P_{t-1} = \bigl\{\hat x_0(\tilde x_{t-1}^i)\bigr\}_{i=1}^N,
\qquad
\mathcal U_{t-1} = \bigl\{\bigl(\hat x_0(x_{t-1}^j),\, w_{\emptyset,t-1}^j\bigr)\bigr\}_{j=1}^M,
\]
where $\tilde x_{t-1}^i \sim q_{t-1}(\cdot \mid x_c^i)$ is a forward-noised reference,
so that positives and unlabeled samples are aligned on the same $\hat x_0$-pushforward at each level $t$ and reduce to the original references at $t=0$.
Since $g_0$ is the only trainable object, the per-step refit reduces to a few warm-started gradient steps on its parameters,
with $f_{t-1} = g_0 \circ \hat x_0$ updated implicitly.

\paragraph{SMC proposal.}
\textit{(i) CG-style proposal driven by $f_t$.}
\begin{equation}
\label{eq:prop-fixed}
\mu_t^{\mathrm{prop}}
\;:=\;
\mu_\theta(x_t, t)
\,+\,
\sigma_t^2\,\lambda_{t-1}\,
\nabla_{x_t} \log\bigl(1 - f_t(x_t)\bigr).
\end{equation}
Under $f_t = g_0 \circ \hat x_0$ the gradient routes through Tweedie's formula,
making \eqref{eq:prop-fixed} a CG variant that lifts a clean-domain predictor through $\hat x_0$\citep{chung2023diffusion,Bansal_2023_CVPR,Yu_2023_ICCV}.
The difference is that no accurate predictor is available at the start of sampling in our setting, so $g_0$ is fit online via PU learning.

\textit{(ii) Existing reference-based guidance as proposal.}
Methods such as the Safe Denoiser \citep{kim2025trainingfree} provide efficient guidance but target no explicit density.
Using their guidance as $\mu_t^{\mathrm{prop}}$ leaves the per-step update intact while the SMC layer ensures asymptotic exactness for $p_{\mathrm{target}}$.

\subsection{Score-Based Setting}
\label{sec:method:score}

We instantiate \autoref{alg:pu-steer-meta} when $c$ is specified by a conditional noise predictor $\hat\epsilon_\theta(\cdot, \cdot \mid c)$ but no reference set is given.
The parameterization $f_t = g_0 \circ \hat x_0$ and the two proposal options of \autoref{sec:method:ref} carry over unchanged.
The non-trivial step is constructing $\mathcal P_{t-1}$ in Step~3 of \autoref{alg:pu-steer-meta} without samples from $p_0(\cdot \mid c)$.
Our key observation is that the particle-density correction of \autoref{sec:method:framework} extends directly to the conditional marginal once its score is available,
supplying positives from the same particle trajectory.
Detailed pseudocode is given as \autoref{alg:neg_das_pu_singlechain} in \autoref{app:algorithms}.

\paragraph{Positives via importance reweighting.}
Applying the construction of $w_{\emptyset,t}$ in 
\autoref{sec:method:framework} to the conditional target gives an analogous weight that retargets the particles to $p_{\theta,t}(\cdot \mid c)$,
\[
w_{c,t}(x_t) \;:=\; \frac{p_{\theta,t}(x_t \mid c)}{\hat\gamma_t(x_t)}.
\]
The same particles serve as positives and unlabeled data under different weights:
\[
\mathcal P_{t-1} = \bigl\{\bigl(\hat x_0(x_{t-1}^j),\, w_{c,t-1}^j\bigr)\bigr\}_{j=1}^M, 
\quad 
\mathcal U_{t-1} = \bigl\{\bigl(\hat x_0(x_{t-1}^j),\, w_{\emptyset,t-1}^j\bigr)\bigr\}_{j=1}^M.
\]
Sharing the particle set across the two terms of $\mathcal L_{\mathrm{uPU}}$ preserves unbiasedness in the particle limit (\autoref{sec:theory}).
Step~2 of \autoref{alg:pu-steer-meta} then carries a recursion for $w_{c,t}$ alongside the existing $w_{\emptyset,t}$ update;
both share the same closed-form recursion as in \autoref{sec:method:framework}, with $\mu_\theta(x_t,t)$ replaced by $\mu_\theta(x_t,t \mid c)$.

\paragraph{SMC proposal.}
Both options of \autoref{sec:method:ref} extend directly: 
\textit{(i)} the CG-style mean \eqref{eq:prop-fixed} driven by the $f_t$,
or \textit{(ii)} an existing score-based negative-guidance method, such as DNG \citep{koulischer2025dynamic}, plugged in as $\mu_t^{\mathrm{prop}}$,
with our SMC layer correcting its density-level bias against $p_{\mathrm{target}}$.
When the proposal already evaluates the conditional score, the $w_{c,t}$ recursion reuses that evaluation at no extra cost.

\section{Theoretical Analysis}
\label{sec:theory}

We summarize the convergence result for the reference-based instantiation of \autoref{alg:pu-steer-meta} (\autoref{sec:method:ref}) and extend the result to the score-based setting (\autoref{sec:method:score}).
Formal statements and proofs are in \autoref{app:full-proof}.

For implementation efficiency, \autoref{alg:pu-steer-meta} uses a single particle system to both train $f_t$ and form the final SMC estimator.
For analytical clarity, we instead train $f_t$ on an independent auxiliary system $\{x_t^k\}_{k=1}^{M'}$ driven by the same kernels,
which removes the dependence between the classifier and the main SMC particles 
$\{(x_t^j, w_{q,t}^j)\}_{j=1}^M$, whose weighted empirical measure we denote by $\widehat\nu_t^{\,M\mid M',N}$.
We establish asymptotic exactness in the resulting infinite-particle limit $\lim_{M',N\to\infty}\lim_{M\to\infty}$,
in which the inner $M\to\infty$ controls the main-system SMC error at frozen classifiers and the outer $M',N\to\infty$ drives the classifier estimation error to zero.

\paragraph{Assumptions.}
Let $\mathcal G_t$ denote the function class over which $f_t$ is fit.
We require four conditions on $\mathcal G_t$ and on the \note{posterior $p_{\theta,t}(c\mid x)$}.
\textit{(i) Uniform convergence on empirical positive samples}:
$\sup_{g\in\mathcal G_t}\bigl|\tfrac{1}{N}\sum_{i=1}^N g(X_i^+)
-\mathbb E_{\note{p_{\theta,t}(\cdot\mid c)}}[g]\bigr|\to 0$ almost surely as $N\to\infty$, with
$X_i^+\stackrel{\mathrm{i.i.d.}}{\sim}\note{p_{\theta,t}(\cdot\mid c)}$.
\textit{(ii) Realizability of the Bayes logit}:
$\log\{\note{p_{\theta,t}(c\mid x)}/(1-\note{p_{\theta,t}(c\mid x)})\}\in\mathcal G_t$.
\textit{(iii) Two-sided non-degeneracy}:
$\delta\le \note{p_{\theta,t}(c\mid x)}\le 1-\delta$ for some $\delta\in(0,1)$.
\textit{(iv) Total boundedness} of $\mathcal G_t$ in the supremum norm.

\paragraph{Reference-based setting.}
The $N$ i.i.d.\ references $\{x_c^i\}_{i=1}^N$ supply the positive term of the uPU risk~\eqref{eq:upu},
and the $M'$ auxiliary particles supply the unlabeled term.
We argue in three steps.
\textbf{Step~1}, fix timestep $t$ and assume that the unlabeled empirical average converges uniformly to $\mathbb E_{p_{\theta,t}}$ over $\mathcal G_t$.
Then uPU risk minimization \citep{pmlr-v37-plessis15}, combined with (i) and (ii), 
yields $L^1$-convergence $f_t\to \note{p_{\theta,t}(c\mid\cdot)}$ as $M',N\to\infty$. 
We will invoke this at $t=0$.
\textbf{Step 2}, this convergence does hold: under (iii) and (iv), and with a mild proposal regularization 
(Appendix~\ref{app:regularized-proposal}) that keeps the importance weights bounded,
the weighted empirical measure of the auxiliary particles $\{(x_t^k, w_{\emptyset,t}^{\note{k}})\}_{k=1}^{M'}$ \note{converges} uniformly over $\mathcal G_t$ to $p_{\theta,t}$ by the SMC law of large numbers (LLN) in \citep{chopin2020introduction}.
\textbf{Step 3}, the tempering factors $(1-f_t)^{\lambda_t}$ enter the recursion~\eqref{eq:wq-recursion} as ratios between consecutive times,
so the cumulative product telescopes and only the terminal $f_0$ enters $\gamma_0$, similar to \citep{kim2025testtime}.
Combining these three steps yields \autoref{thm:main-informal}.

\begin{theorem}[Asymptotic exactness, informal version of \autoref{thm:plugin_smc_exactness} in Appendix~\ref{app:plug-in-smc}]
\label{thm:main-informal}
Consider \autoref{alg:pu-steer-meta} with the auxiliary $M'$ system introduced above. For every bounded measurable
$\varphi:\mathcal X_0\to\mathbb R$,
\[
  \lim_{M',N\to\infty}\;\lim_{M\to\infty}\;
  \widehat\nu_0^{\,M\mid M',N}(\varphi)
  \;=\;p_{\mathrm{target}}(\varphi)\quad\text{almost surely,}
\]
under (i)--(iv) for the reference-based setting.
\end{theorem}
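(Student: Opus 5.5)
The plan is to condition on the auxiliary system and the references and argue in two layers that match the order of limits.

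\textbf{Inner limit (frozen classifiers).} Let $\mathcal F$ be the $\sigma$-algebra generated by the auxiliary particles $\{x_t^k\}_{k=1}^{M'}$ and the references $\{x_c^i\}_{i=1}^N$. Conditional on $\mathcal F$, the classifiers $f_T,\dots,f_0$ are deterministic. This uses that the auxiliary system is independent of the main system, and that $f_t=g_0\circ\hat x_0$ is fixed once its parameters are fit.

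The main system is then a standard SMC sampler with Gaussian proposals $m_{t-1}$ and incremental weights $r_t(1-f_{t-1})^{\lambda_{t-1}}/(1-f_t)^{\lambda_t}$. I would verify the hypotheses of the SMC LLN in \citep{chopin2020introduction} for this sampler:
\begin{itemize}
\item The factors $(1-f_t)^{\pm\lambda_t}$ are bounded, because we may clip or restrict $\mathcal G_t$ so that $f_t\in[\delta',1-\delta']$. This is compatible with (iii) through the realizable logit class.
\item The ratio $r_t$ is bounded under the regularized proposal of Appendix~\ref{app:regularized-proposal}.
\end{itemize}
This gives, almost surely conditionally on $\mathcal F$,
\[
\widehat\nu_0^{\,M\mid M',N}(\varphi)\to\gamma_0^{f}(\varphi)
\qquad\text{as } M\to\infty .
\]
Here $\gamma_0^f$ is the normalized terminal density of the Feynman--Kac model. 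Adaptive (ESS-triggered) resampling can be handled by the standard argument for adaptive SMC, or by assuming a deterministic resampling schedule in the analysis.

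\textbf{Telescoping.} To identify $\gamma_0^f$, multiply the incremental weights along a path $x_{T:0}$. The proposal densities cancel against $r_t$, leaving the pretrained reverse kernels $\prod_t p_\theta(x_{t-1}\mid x_t)\,\mathcal N(x_T;0,I)$. The tilting ratios telescope to
\[
\frac{(1-f_0(x_0))^{\lambda_0}}{(1-f_T(x_T))^{\lambda_T}}=(1-f_0(x_0))^{\lambda},
\qquad\text{since }\lambda_T=0 .
\]
Marginalizing $x_{T:1}$ then gives $\gamma_0^f(x_0)\propto p_{\theta,0}(x_0)(1-f_0(x_0))^\lambda$. So the inner limit equals
\[
\frac{\mathbb E_{p_{\theta,0}}[\varphi(1-f_0)^\lambda]}{\mathbb E_{p_{\theta,0}}[(1-f_0)^\lambda]} .
\]

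\textbf{Outer limit.} It remains to show $f_0\to p_{\theta,0}(c\mid\cdot)$ in $L^1(p_{\theta,0})$ almost surely as $M',N\to\infty$.
\begin{itemize}
\item \emph{Uniform convergence of the unlabeled term.} The auxiliary weighted measure $\sum_k w_{\emptyset,0}^k\delta_{x_0^k}$ (self-normalized) converges to $p_{\theta,0}$ for each fixed bounded $g$, by the same SMC LLN with bounded weights. To make this uniform over $\mathcal G_0$, use (iv): cover $\mathcal G_0$ by finitely many sup-norm $\epsilon$-balls, apply the LLN at each center, and note that the oscillation inside a ball is at most $\epsilon$ because the weights are normalized. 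Applying the loss composition preserves this, since the loss is Lipschitz on the bounded range given by (iii).
\item \emph{Uniform convergence of the positive term.} This is assumption (i).
\item \emph{Consistency of the minimizer.} The empirical uPU risk therefore converges uniformly to the population risk~\eqref{eq:upu}. By (ii), the population risk is minimized at the Bayes logit. A standard argmin-consistency argument for proper losses, following \citep{pmlr-v37-plessis15}, gives convergence of the excess risk, and strict properness converts this into $L^1$ convergence of $f_0$.
\end{itemize}

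Finally, $u\mapsto(1-u)^\lambda$ is Lipschitz on $[0,1-\delta']$ and $\varphi$ is bounded. Hence numerator and denominator converge, and the denominator stays bounded below by $\delta^\lambda$ by (iii). The ratio therefore converges to $p_{\mathrm{target}}(\varphi)$.

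\textbf{Main obstacle.} I expect the hardest step to be the uniform LLN for the auxiliary weighted measure, with the conversion to $L^1$ convergence of the argmin a secondary difficulty. The auxiliary particles are resampled and interacting, and $w_\emptyset$ is rescaled at resampling, so one must check that $\sum_k w_{\emptyset,0}^k\delta_{x_0^k}$ is a valid SMC estimator of $p_{\theta,0}$. I would first handle this by viewing $w_{\emptyset}/w_q$ as a bounded test-function multiplier within the $w_q$-targeted system, which is where the regularized proposal's weight bound enters. The countable-cover argument then upgrades the pointwise almost-sure convergence to uniform convergence.
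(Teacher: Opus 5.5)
Your proposal follows essentially the same route as the paper's proof. You condition on the auxiliary sample so the main system is a Feynman--Kac model whose potentials are deterministic and bounded by the regularized proposal and classifier clipping, apply the SMC strong law, and telescope the tilts to the $f_0$-tilted terminal target; you then get $L^1$ consistency of $f_0$ from uniform convergence of the uPU risk, where your treatment of the unlabeled term ($w_\emptyset/w_q=1/\omega_t$ as a bounded multiplier, plus an $\epsilon$-net under total boundedness) is exactly the paper's ratio-recovery argument. The paper pins down two points more concretely: it analyzes the auxiliary system with deterministic (oracle-clipped) tilts and kernels, or as plain unconditional simulation, so that the cited LLN applies to it directly rather than to a system steered by classifiers fit on its own particles, and it converts excess risk to $L^1$ error for the logistic loss via Pinsker and Jensen.
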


\paragraph{Extension to score-based setting.}
Without external references, the positive term of the uPU risk~\eqref{eq:upu} is supplied by the same $M'$ auxiliary particles,
re-weighted by $w_{c,t}$ and $w_{\emptyset,t}$.
The argument mirrors the reference-based one, with only \textbf{Step 2} changed.
By an analogous SMC LLN \citep{chopin2020introduction},
the weighted empirical measure of $\{(x_t^k, w_{c,t}^{\note{k}})\}_{k=1}^{M'}$ converges uniformly over $\mathcal G_t$ to $p_{\theta,t}(\cdot\mid c)$
as $M'\to\infty$.
Combined with the unlabeled-term convergence from \textbf{Step 2} of the reference-based setting,
this gives uniform convergence of both terms of the uPU risk and hence $f_t\to \note{p_{\theta,t}(c\mid\cdot)}$ as $M'\to\infty$ at each timestep $t$.
\textbf{Step 1} and \textbf{Step 3} then carry over unchanged, 
yielding $\widehat\nu_0^{M\mid M'}(\varphi)\to p_{\mathrm{target}}(\varphi)$ under (ii)--(iv),
for every bounded measurable $\varphi$ (\autoref{thm:alg2-plugin-exactness} in Appendix~\ref{app:alg2-dual-weight-exactness}).

\section{Experiments}
\label{sec:experiments}

\begin{figure}[t]
\centering
\begin{minipage}[c]{0.5\linewidth}
  \centering
  \includegraphics[width=\linewidth]{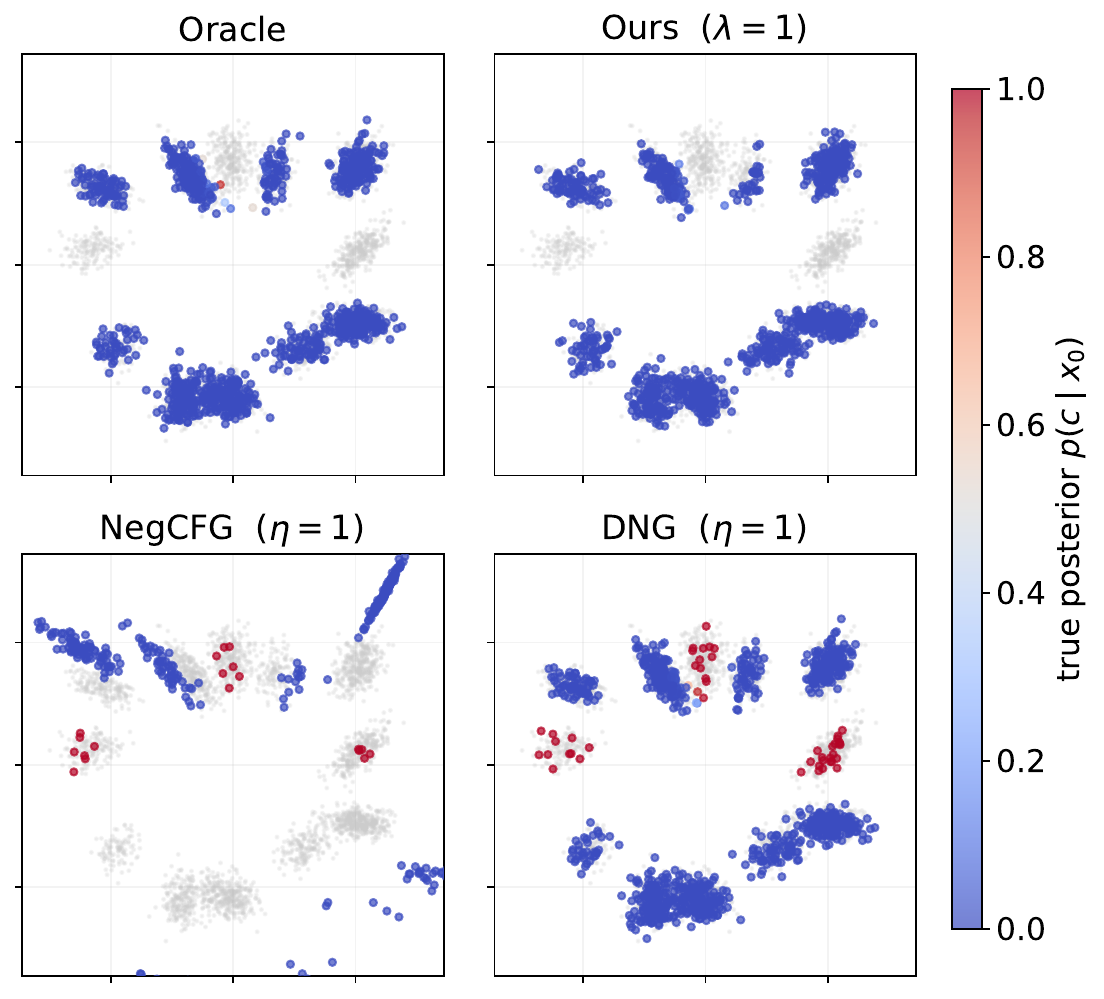}
\end{minipage}\hfill
\begin{minipage}[c]{0.5\linewidth}
  \centering
  \small
  \setlength{\tabcolsep}{4pt}
  \begin{tabular}{lcc}
    \toprule
    Method & Posterior leak\,$\downarrow$ & $W_2$\,$\downarrow$ \\
    \midrule
    Oracle & $0.012{\pm}0.010$ & $1.5{\pm}0.3$  \\
    Uncond ($\eta{=}0$) & $0.253{\pm}0.016$ & $3.6{\pm}0.6$ \\
    NegCFG ($\eta{=}0.5$) & $0.068{\pm}0.008$ & $9.7{\pm}2.4$ \\
    NegCFG ($\eta{=}1$) & $0.018{\pm}0.005$ & $20.5{\pm}6.3$ \\
    DNG ($\eta{=}1$) & $0.052{\pm}0.010$ & $1.6{\pm}0.3$ \\
    DNG ($\eta{=}2$) & $0.015{\pm}0.005$ & $3.5{\pm}0.7$ \\
    \textbf{Ours} & $\mathbf{0.012{\pm}0.013}$ & $\mathbf{1.8{\pm}0.6}$ \\
    \bottomrule
  \end{tabular}
  \phantomsection
  \captionof{table}{Comparison with baselines across repulsion strengths $\eta$, mean$\pm$std over $50$ trials.
  Uncond corresponds to NegCFG/DNG at $\eta{=}0$, which reduces to unconditional DDPM from $p(x)$. 
  Posterior leak $\mathbb{E}[p(c | x_0)]$ and $W_2$ measure $2$-Wasserstein distance to ground-truth. 
  Baselines show a trade-off between leak and $W_2$, while ours matches the Oracle in leak without inflating $W_2$.}
  \label{tab:mog}
\end{minipage}
\caption{Generated samples on one trial. Each point is a sample in its first two coordinates, colored by the true posterior $p(c\mid x)\in[0,1]$ (red: $p(c\mid x){=}1$, blue: $0$). Oracle and Ours avoid the condition $c$ and cover all remaining modes. NegCFG suppresses leak by collapsing onto a few remaining modes, and DNG keeps coverage but retains residual samples on the avoided modes.}
\label{fig:mog}
\end{figure}

This section summarizes our experimental results in \note{three settings of increasing scale}.
\autoref{sec:exp:gmm} uses the score-based setting (\autoref{sec:method:score}) on a 3-dimensional Gaussian mixture model (GMM),
where closed-form densities enable rigorous metric evaluation and direct visualization of the full distribution.
\autoref{sec:exp:cifar10} uses the reference-based setting (\autoref{sec:method:ref}) on CIFAR-10 image generation with a pretrained score network,
under different specifications of the avoidance condition $c$ (a class or set of classes to avoid).
\note{\autoref{sec:exp:sd} applies the score-based setting to text-to-image generation with Stable Diffusion.}
\note{All three} compare against the baselines from \autoref{sec:existing}, with full experimental details deferred to \autoref{app:exp}.
To align the evaluation protocol with these baselines,
we treat the final SMC output after intermediate resampling as particles with unit weight rather than as weighted particles.\footnote{\note{Code and experimental results are available at \url{https://github.com/htkg/pu-steering}.}}

\subsection{GMM with Score-Based Setting}
\label{sec:exp:gmm}

\paragraph{Setup.}
The unconditional density is a 3-dimensional GMM of $12$ modes,
with the avoided condition $c$ assigned to $3$ modes and $\neg c$ to the other $9$. 
The layout of the GMM is randomly drawn.
We define the ground-truth target by setting $\lambda{=}1$ in \eqref{eq:target},
giving $p_{\mathrm{target}}{=}p(\cdot\mid\neg c)$. 
All methods use a common $T{=}100$ DDPM discretization with $M{=}1024$ samples.
The Oracle alone has access to the true score for the target $\nabla \log p_t(x_t\mid\neg c)$,
serving as the achievable upper bound.
We compare against NegCFG and DNG from \autoref{sec:existing}, and our method uses a kernel-based PU classifier \citep{pmlr-v37-plessis15}.
We report posterior leak $\mathbb{E}[p(c\mid x_0)]$ (average posterior of $c$ at generated samples, lower is stronger avoidance) in closed form, and the $2$-Wasserstein distance ($W_2$, calculated by \citep{flamary2021pot}) to $3000$ i.i.d.\ draws from $p(\cdot\mid\neg c)$. 
Results are averaged over $50$ trials with the GMM layout and DDPM sampling seed both randomized.

\paragraph{Results.}
\autoref{fig:mog} shows the generated samples from each method on one trial.
NegCFG collapses onto a few $\neg c$-modes, and DNG retains samples on the avoided $c$-modes,
both consistent with the limitations of local repulsion away from $c$ discussed in \autoref{sec:existing}.
In contrast, our method preserves all nine $\neg c$-modes and matches the Oracle, 
leveraging global information across particles, consistent with \autoref{thm:main-informal}.
Table~\ref{tab:mog} reports the same trend quantitatively over $50$ trials.
Ours matches the Oracle in posterior leak (both $0.012$) and stays within one Oracle standard deviation in $W_2$ ($1.8{\pm}0.6$ vs.\ $1.5{\pm}0.3$), 
while NegCFG and DNG exhibit a trade-off between leak and $W_2$.
Full hyperparameter sweeps are in Appendix~\ref{app:exp:gmm}.

\subsection{CIFAR-10 with Reference-Based Setting}
\label{sec:exp:cifar10}

\begin{figure}[t]
\centering
\includegraphics[width=0.97\linewidth]{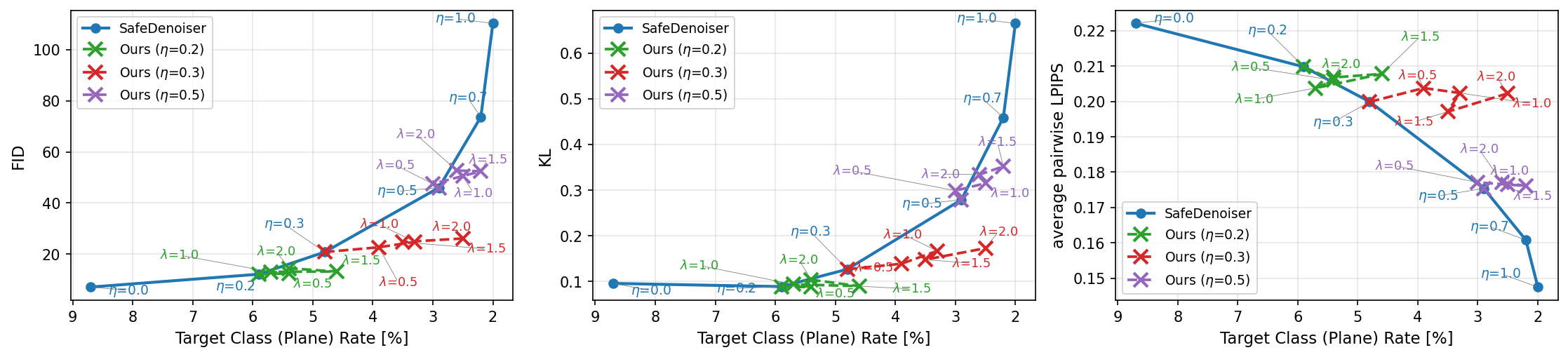}\\
\includegraphics[width=0.97\linewidth]{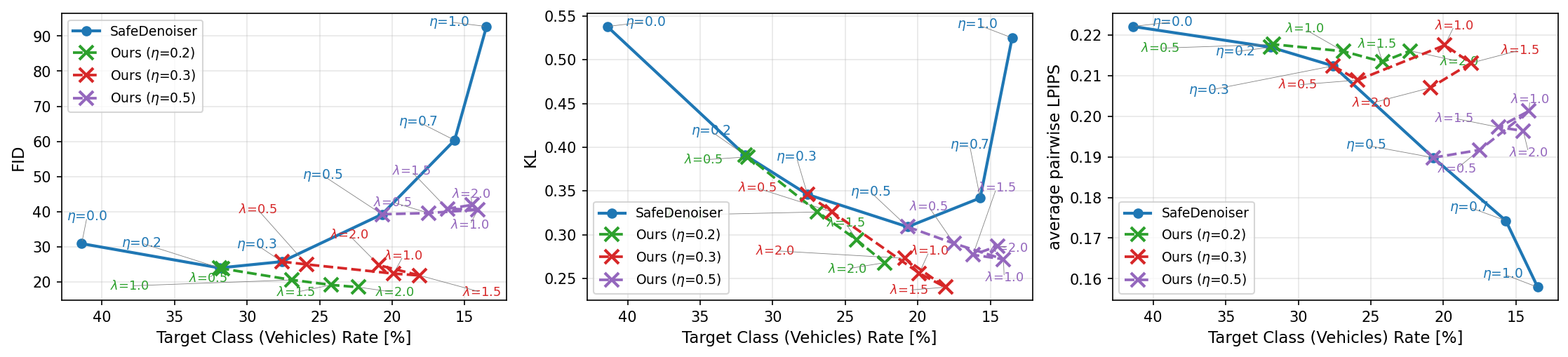}
\caption{Avoidance rate vs.\ generation quality on CIFAR-10. Top: plane. Bottom: four-class vehicles. Columns plot target-class rate against FID, KL, and average pairwise LPIPS (left to right); lower is better except for LPIPS. The Safe Denoiser sweeps $\eta$. Ours uses the Safe Denoiser as its proposal kernel, fixes $\eta\in\{0.2,0.3,0.5\}$, and sweeps $\lambda$, tracing the Pareto frontier in all panels.}
\label{fig:cifar10}
\end{figure}

\paragraph{Setup.}
We use a pretrained UNet for the unconditional CIFAR-10 diffusion model from \cite{pmlr-v139-nichol21a},
and run DDPM sampling with a batch size of $M{=}128$ over $T{=}250$ timesteps.
The avoided condition $c$ is one or more CIFAR-10 classes.
We evaluate plane (single class) and the four vehicle classes (plane, car, ship, truck).
For each class in $c$, we draw $N{=}200$ references from the training set.
The PU discriminator $f_t$ is a linear probe on pooled UNet bottleneck features.
Because $N$ and $M$ are both small here, we replace the uPU loss of \autoref{sec:method} with binary cross-entropy (BCE),
treating the references as positives and the SMC particles as soft negatives.
After resampling, with so few SMC particles in pixel space, those satisfying $c$ are quickly depleted, so the particle set increasingly behaves as a true negative set as $t$ decreases, making BCE an effective proxy in this regime.

\paragraph{Evaluation metrics.}
We aggregate $80$ seeds $\times\,128$ samples per configuration.
Fr\'echet Inception Distance (\emph{FID}, \citealp{heusel2017gans}) is computed against the CIFAR-10 training set from non-avoided classes, using the implementation of \cite{Seitzer2020FID}.
Each generated image is then classified by an independently trained ResNet-$18$~\citep{He_2016_CVPR} with $\sim$$95\%$ test accuracy into the class histogram $\hat p$,
from which we report the \emph{target-class rate} on avoided classes (lower is stronger avoidance) and the \emph{KL} divergence of $\hat p$ from a uniform over non-avoided classes (lower is more uniform).
As an auxiliary check, we also report average pairwise Learned Perceptual Image Patch Similarity (LPIPS; \citealp{Zhang_2018_CVPR}) via AlexNet \citep{NIPS2012_c399862d} on $256$ random samples (higher is more diverse).

\paragraph{Results.}
We first run the Safe Denoiser \citep{kim2025trainingfree} sweeping its repulsion strength $\eta$.
We then run our method (\autoref{sec:method:ref}), which uses the Safe Denoiser as its proposal $\mu_t^{\mathrm{prop}}$, at three fixed values $\eta\in\{0.2, 0.3, 0.5\}$ while sweeping the terminal tempering strength $\lambda$, so that both methods are compared on a shared $\eta$ axis.
\autoref{fig:cifar10} plots target-class rate against each quality metric for plane and four-class vehicles avoidance.
At each fixed $\eta$, our $\lambda$-sweep Pareto-dominates the Safe Denoiser at the same $\eta$ on FID and KL across the full operating range, and is no worse on LPIPS over the bulk of it.
A weak negative guidance leaves residual mass on the avoided classes that the classifier and resampling then correct, raising the avoidance rate without the quality degradation incurred by stronger negative guidance.
The added machinery beyond the Safe Denoiser is one linear probe on UNet features and ESS-based adaptive resampling on the classifier-driven weights.
Per-class results for cat, frog, and ship and regularization ablations are in Appendix~\ref{app:exp:cifar10}.

\note{
\subsection{Stable Diffusion with Score-Based Setting}
\label{sec:exp:sd}

\begin{figure}[t]
\centering
\begin{minipage}[c]{0.49\linewidth}
  \centering
  \includegraphics[width=\linewidth]{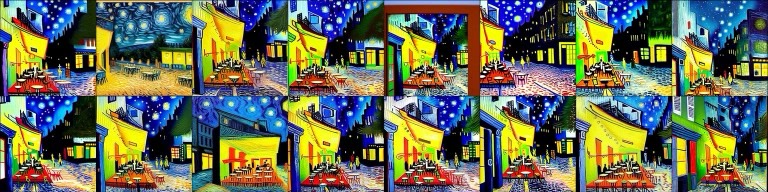}\\
  {\small (a) Avoided style}\\[10pt]
  \includegraphics[width=\linewidth]{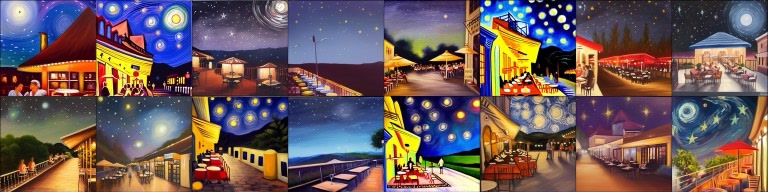}\\
  {\small (b) PU Steering (ours)}
\end{minipage}\hfill
\begin{minipage}[c]{0.49\linewidth}
  \centering
  \includegraphics[width=0.9\linewidth,trim=0 76 485 24,clip]{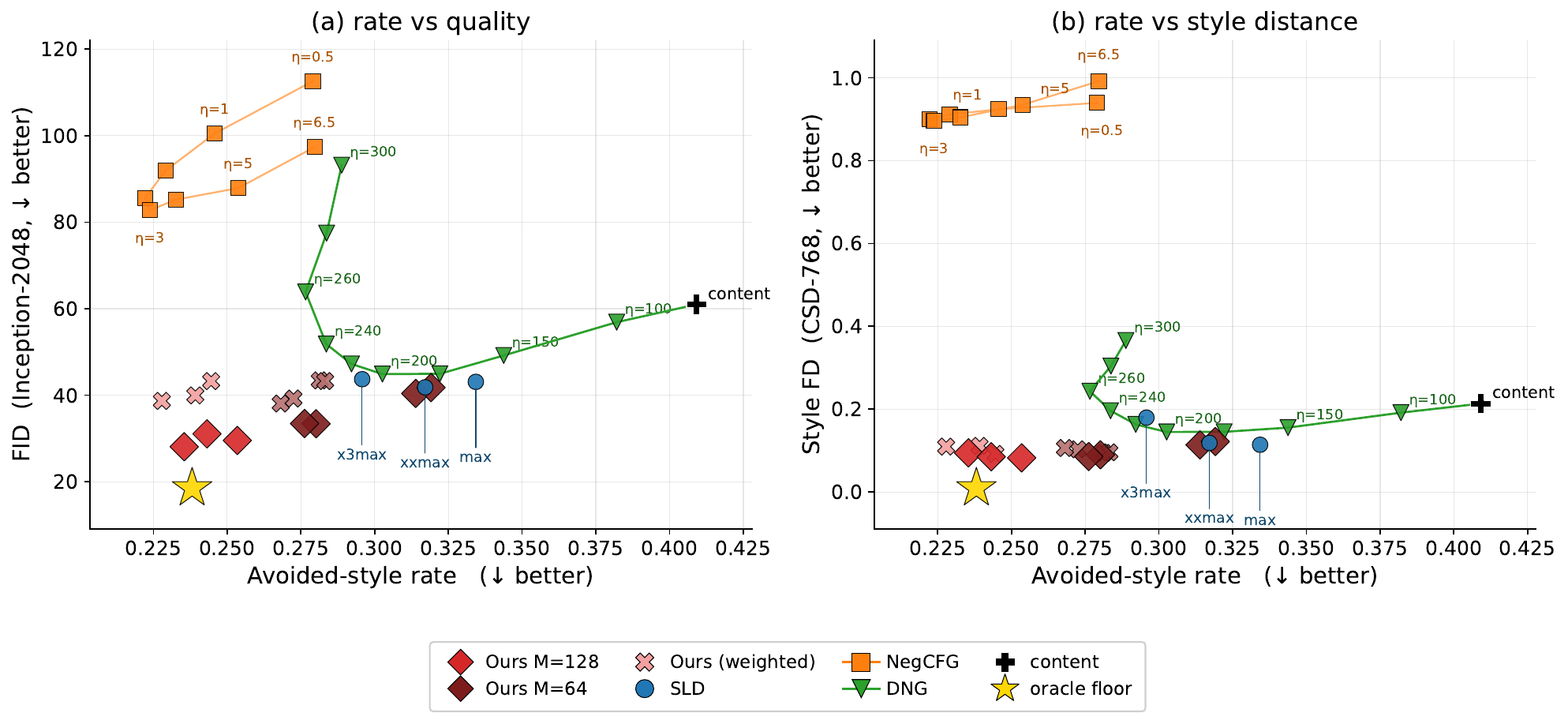}\\
  \includegraphics[width=0.9\linewidth,trim=259 5 253 386,clip]{sd_vangogh_frontier.pdf}
\end{minipage}
\caption{\note{Results of Van Gogh style avoidance on Stable Diffusion v1.4.
Left: generated examples of (a) the avoided style and (b) ours at $M{=}128$, $\lambda{=}16$,
each showing $16$ randomly selected images.
These are the top two rows of the random $8\times8$ grids in \autoref{fig:sd-grids} of Appendix~\ref{app:exp:sd},
which also shows the other methods.
Right: avoided-style rate against FID; lower left is better.
The baselines sweep their strengths, and ours uses SLD as its proposal kernel,
reaching the oracle-floor rate with the lowest FID while keeping the prompted scene.
Full results are in Appendix~\ref{app:exp:sd}.}}
\label{fig:sd-main}
\end{figure}

\paragraph{Setup.}
We use Stable Diffusion v1.4~\citep{Rombach_2022_CVPR}, a pretrained text-to-image latent diffusion model,
with classifier-free guidance at scale $7.5$ as the sampler of the base density $p_\theta$,
running a batch size of $M{=}128$ over $T{=}50$ timesteps.
The prompt \texttt{"a painting of a cafe terrace at night under a starry sky"} often yields Van Gogh's style,
which we specify as the avoided condition $c$ to suppress.
Appending \texttt{"in the style of Van Gogh"} to the prompt supplies the conditional score,
which generates images such as \autoref{fig:sd-main}(a),
and we use it to run the score-based setting of \autoref{sec:method:score}.
The PU classifier $f_t$ is a kernel logistic model on $\hat x_0(x_t)$.
Its positives and unlabeled data are $\hat x_0$ of the same particles under the style-appended and original prompts, respectively,
so they share the prompted content and differ mainly in the style.
This contrast keeps the fit stable, so we use the uPU loss~\eqref{eq:upu} as written, unlike the BCE in \autoref{sec:exp:cifar10}.

\paragraph{Evaluation metrics.}
We aggregate $2048$ images per configuration, generated with distinct seeds.
An independently trained detector, a logistic head on a style embedding~\citep{somepalli2024measuring},
assigns each image a probability of the avoided style, and we report its mean as the \emph{avoided-style rate} (lower is stronger avoidance).
As no samples of the target are available, we approximate them by filtering:
from $8192$ images of the prompt without avoidance, the $2048$ least like the style under the detector form the reference set,
against which FID and KID~\citep{binkowski2018demystifying} are computed (lower is better).
Applying the same filter to an independent pool generated with disjoint seeds gives the \emph{oracle floor},
a reference point for an ideal avoidance rather than a lower bound.

\paragraph{Results.}
We first run SLD~\citep{Schramowski_2023_CVPR}, a negative-guidance method specialized to Stable Diffusion, and DNG, each sweeping its strength.
We then run our method (\autoref{sec:method:score}), which uses SLD as its proposal $\mu_t^{\mathrm{prop}}$, while sweeping $\lambda$.
\autoref{fig:sd-main} (right) plots the avoided-style rate against FID.
DNG traces an avoidance--quality trade-off: at its best strengths it is comparable to SLD, and stronger guidance only degrades FID.
NegCFG lowers the rate only at FIDs above $80$.
Our method at $M{=}128$, $\lambda{=}16$ reaches a rate of $0.235$, at the oracle floor of $0.238$, with the lowest FID of $28.10$,
improving both avoidance and quality beyond SLD and DNG.
Halving the particle count to $M{=}64$ gives a rate of $0.276$ and an FID of $33.46$, so additional particles improve both.
Qualitatively, \autoref{fig:sd-main} (left) shows that our samples keep the prompted scene while suppressing the avoided style and spreading over a variety of painting styles.
The Stable Diffusion results are thus consistent with the CIFAR-10 results in \autoref{sec:exp:cifar10}.
The full setup and results are in Appendix~\ref{app:exp:sd}.

}

\section{Limitations and Future Work}
\label{sec:limitations}

\paragraph{\note{Scope of the experiments.}}
\note{Our experiments cover several settings up to image-generation scale (\autoref{sec:experiments}).
Extending the framework to the broader range of applications that motivate negative guidance (\autoref{sec:intro}) is left open.}
\note{Our} setting also differs from SMC-based diffusion alignment~\citep{NEURIPS2023_63e8bc7b,pmlr-v267-singhal25b,kim2025testtime},
which sharpens particles toward a desired condition.
PU Steering instead targets everything other than the avoided condition $c$,
which is harder to evaluate because no single reference distribution is available beyond a labeled validation set.
A parallel line of recent work develops SMC and other samplers that route interacting particles to a target along the diffusion path~\citep{wu2026reverse,takagi2026alternating}, 
training-free and with \note{convergence} guarantees on multi-modal targets.
These samplers are demonstrated on low- to moderate-dimensional settings ($d<100$),
and scaling them to image-scale dimensions is left open.
PU Steering reaches CIFAR-10 \note{and Stable Diffusion} by combining a pretrained UNet with an existing negative-guidance kernel~\citep{kim2025trainingfree,Schramowski_2023_CVPR} as the SMC proposal, and the same recipe applies to any pretrained diffusion model with such a kernel.
This connects principled probabilistic samplers to alignment of modern generative models.
We discuss future directions for improving and \note{specializing} our framework below\note{, which would also let us examine the effect of scaling the particle count}.

\paragraph{Variance of the importance weights.}
A known challenge in SMC is weight degeneracy,
in which the variance of the importance weights grows and the ESS decreases \citep{chopin2020introduction}.
This can lead to frequent resampling, and may eventually result in a few common ancestors dominating the particle set.
For a fixed target path, \cite{kim2025testtime} derives a proposal kernel that minimizes the one-step weight variance.
In our case, the target path itself depends on the online classifier $f_t$, and inaccuracies in $f_t$ further affect the weight variance.
Extending such an analysis is left for future work.
Task-specific regularization and other practical stabilization techniques are natural directions in this line.

\paragraph{Stabilizing the online PU classifier.}
Our method trains the classifier $f_t$ in an online manner.
Recent PU learning literature offers stabilization techniques for limited positive samples,
small-batch variability, and high-capacity models such as neural networks that are prone to overfitting~\citep{NIPS2017_7cce53cf,NEURIPS2021_47b4f1bf}.
Online and importance-weighted PU learning further address streaming data and weighted unlabeled samples~\citep{ijcai2020p311,pmlr-v258-kumagai25a}.
Incorporating these techniques may further stabilize our method.

\section{Conclusion}
\label{sec:conclusion}

In this paper, we addressed negative guidance of diffusion models,
the task of steering generation away from an undesired condition.
We recast the problem as sampling from an explicit avoidance density and realized it via tempered SMC driven by an online PU classifier,
where particles exploit global information across samples rather than local repulsion.
Our asymptotic exactness analysis implies that additional computation reduces distributional bias and improves generation quality.
On the practical side, we showed that augmenting existing guidance methods with a lightweight classifier and resampling improves the avoidance--quality trade-off.

\begin{ack}
MS was supported by JST ASPIRE Grant Number JPMJAP25B1.
\end{ack}

\bibliographystyle{plainnat}
\bibliography{main_validated_titlecase}

\clearpage

\input{checklist.tex}

\clearpage

\appendix

\input{appendix1}
\input{appendix2}
\input{appendix_proof}
\input{appendix_exp}

\section{Broader Impacts}
\label{app:broader-impacts}
This paper presents work whose goal is to advance the statistical foundations of negative guidance for diffusion models.
The same mechanism can support safety-related applications of pretrained generative models.
It can also in principle be repurposed to amplify rather than suppress an undesired condition.
Such repurposing does not exceed what is already afforded by the underlying pretrained model and existing guidance methods,
and our work does not increase the generative capability of the base model.
We release no new pretrained model or dataset, and use only publicly released assets.





\end{document}

%% file: checklist.tex
\section*{NeurIPS Paper Checklist}

\begin{enumerate}

\item {\bf Claims}
    \item[] Question: Do the main claims made in the abstract and introduction accurately reflect the paper's contributions and scope?
    \item[] Answer: \answerYes{}
    \item[] Justification: Our claims match the theoretical and experimental results.
    \item[] Guidelines:
    \begin{itemize}
        \item The answer \answerNA{} means that the abstract and introduction do not include the claims made in the paper.
        \item The abstract and/or introduction should clearly state the claims made, including the contributions made in the paper and important assumptions and limitations. A \answerNo{} or \answerNA{} answer to this question will not be perceived well by the reviewers. 
        \item The claims made should match theoretical and experimental results, and reflect how much the results can be expected to generalize to other settings. 
        \item It is fine to include aspirational goals as motivation as long as it is clear that these goals are not attained by the paper. 
    \end{itemize}

\item {\bf Limitations}
    \item[] Question: Does the paper discuss the limitations of the work performed by the authors?
    \item[] Answer: \answerYes{}
    \item[] Justification: Limitations are discussed in \autoref{sec:limitations}
    \item[] Guidelines:
    \begin{itemize}
        \item The answer \answerNA{} means that the paper has no limitation while the answer \answerNo{} means that the paper has limitations, but those are not discussed in the paper. 
        \item The authors are encouraged to create a separate ``Limitations'' section in their paper.
        \item The paper should point out any strong assumptions and how robust the results are to violations of these assumptions (e.g., independence assumptions, noiseless settings, model well-specification, asymptotic approximations only holding locally). The authors should reflect on how these assumptions might be violated in practice and what the implications would be.
        \item The authors should reflect on the scope of the claims made, e.g., if the approach was only tested on a few datasets or with a few runs. In general, empirical results often depend on implicit assumptions, which should be articulated.
        \item The authors should reflect on the factors that influence the performance of the approach. For example, a facial recognition algorithm may perform poorly when image resolution is low or images are taken in low lighting. Or a speech-to-text system might not be used reliably to provide closed captions for online lectures because it fails to handle technical jargon.
        \item The authors should discuss the computational efficiency of the proposed algorithms and how they scale with dataset size.
        \item If applicable, the authors should discuss possible limitations of their approach to address problems of privacy and fairness.
        \item While the authors might fear that complete honesty about limitations might be used by reviewers as grounds for rejection, a worse outcome might be that reviewers discover limitations that aren't acknowledged in the paper. The authors should use their best judgment and recognize that individual actions in favor of transparency play an important role in developing norms that preserve the integrity of the community. Reviewers will be specifically instructed to not penalize honesty concerning limitations.
    \end{itemize}

\item {\bf Theory assumptions and proofs}
    \item[] Question: For each theoretical result, does the paper provide the full set of assumptions and a complete (and correct) proof?
    \item[] Answer: \answerYes{}
    \item[] Justification: We provide full assumptions and complete proofs in the \autoref{app:full-proof}.
    \item[] Guidelines:
    \begin{itemize}
        \item The answer \answerNA{} means that the paper does not include theoretical results. 
        \item All the theorems, formulas, and proofs in the paper should be numbered and cross-referenced.
        \item All assumptions should be clearly stated or referenced in the statement of any theorems.
        \item The proofs can either appear in the main paper or the supplemental material, but if they appear in the supplemental material, the authors are encouraged to provide a short proof sketch to provide intuition. 
        \item Inversely, any informal proof provided in the core of the paper should be complemented by formal proofs provided in appendix or supplemental material.
        \item Theorems and Lemmas that the proof relies upon should be properly referenced. 
    \end{itemize}

    \item {\bf Experimental result reproducibility}
    \item[] Question: Does the paper fully disclose all the information needed to reproduce the main experimental results of the paper to the extent that it affects the main claims and/or conclusions of the paper (regardless of whether the code and data are provided or not)?
    \item[] Answer: \answerYes{}
    \item[] Justification: All settings of the experiments are documented in \autoref{sec:experiments} and \autoref{app:exp}.
    \item[] Guidelines:
    \begin{itemize}
        \item The answer \answerNA{} means that the paper does not include experiments.
        \item If the paper includes experiments, a \answerNo{} answer to this question will not be perceived well by the reviewers: Making the paper reproducible is important, regardless of whether the code and data are provided or not.
        \item If the contribution is a dataset and\slash or model, the authors should describe the steps taken to make their results reproducible or verifiable. 
        \item Depending on the contribution, reproducibility can be accomplished in various ways. For example, if the contribution is a novel architecture, describing the architecture fully might suffice, or if the contribution is a specific model and empirical evaluation, it may be necessary to either make it possible for others to replicate the model with the same dataset, or provide access to the model. In general. releasing code and data is often one good way to accomplish this, but reproducibility can also be provided via detailed instructions for how to replicate the results, access to a hosted model (e.g., in the case of a large language model), releasing of a model checkpoint, or other means that are appropriate to the research performed.
        \item While NeurIPS does not require releasing code, the conference does require all submissions to provide some reasonable avenue for reproducibility, which may depend on the nature of the contribution. For example
        \begin{enumerate}
            \item If the contribution is primarily a new algorithm, the paper should make it clear how to reproduce that algorithm.
            \item If the contribution is primarily a new model architecture, the paper should describe the architecture clearly and fully.
            \item If the contribution is a new model (e.g., a large language model), then there should either be a way to access this model for reproducing the results or a way to reproduce the model (e.g., with an open-source dataset or instructions for how to construct the dataset).
            \item We recognize that reproducibility may be tricky in some cases, in which case authors are welcome to describe the particular way they provide for reproducibility. In the case of closed-source models, it may be that access to the model is limited in some way (e.g., to registered users), but it should be possible for other researchers to have some path to reproducing or verifying the results.
        \end{enumerate}
    \end{itemize}

\item {\bf Open access to data and code}
    \item[] Question: Does the paper provide open access to the data and code, with sufficient instructions to faithfully reproduce the main experimental results, as described in supplemental material?
    \item[] Answer: \answerYes{}
    \item[] Justification: Code and per-experiment documentation are provided in the supplementary material. \note{A cleaned-up version is available at \url{https://github.com/htkg/pu-steering}.}
    \item[] Guidelines:
    \begin{itemize}
        \item The answer \answerNA{} means that paper does not include experiments requiring code.
        \item Please see the NeurIPS code and data submission guidelines (\url{https://neurips.cc/public/guides/CodeSubmissionPolicy}) for more details.
        \item While we encourage the release of code and data, we understand that this might not be possible, so \answerNo{} is an acceptable answer. Papers cannot be rejected simply for not including code, unless this is central to the contribution (e.g., for a new open-source benchmark).
        \item The instructions should contain the exact command and environment needed to run to reproduce the results. See the NeurIPS code and data submission guidelines (\url{https://neurips.cc/public/guides/CodeSubmissionPolicy}) for more details.
        \item The authors should provide instructions on data access and preparation, including how to access the raw data, preprocessed data, intermediate data, and generated data, etc.
        \item The authors should provide scripts to reproduce all experimental results for the new proposed method and baselines. If only a subset of experiments are reproducible, they should state which ones are omitted from the script and why.
        \item At submission time, to preserve anonymity, the authors should release anonymized versions (if applicable).
        \item Providing as much information as possible in supplemental material (appended to the paper) is recommended, but including URLs to data and code is permitted.
    \end{itemize}

\item {\bf Experimental setting/details}
    \item[] Question: Does the paper specify all the training and test details (e.g., data splits, hyperparameters, how they were chosen, type of optimizer) necessary to understand the results?
    \item[] Answer: \answerYes{}
    \item[] Justification: The details of the implementation are presented in \autoref{sec:experiments} and \autoref{app:exp}.
    \item[] Guidelines:
    \begin{itemize}
        \item The answer \answerNA{} means that the paper does not include experiments.
        \item The experimental setting should be presented in the core of the paper to a level of detail that is necessary to appreciate the results and make sense of them.
        \item The full details can be provided either with the code, in appendix, or as supplemental material.
    \end{itemize}

\item {\bf Experiment statistical significance}
    \item[] Question: Does the paper report error bars suitably and correctly defined or other appropriate information about the statistical significance of the experiments?
    \item[] Answer: \answerYes{}
    \item[] Justification: For experiments on CPUs in \autoref{sec:exp:gmm}, results are reported as mean $\pm$ standard deviation (1-sigma) over a sufficient number of trials. For experiments on GPUs in \autoref{sec:exp:cifar10}, only the mean over a large sample size is reported, and error bars are not included mainly due to the lack of computational resources.
    \item[] Guidelines:
    \begin{itemize}
        \item The answer \answerNA{} means that the paper does not include experiments.
        \item The authors should answer \answerYes{} if the results are accompanied by error bars, confidence intervals, or statistical significance tests, at least for the experiments that support the main claims of the paper.
        \item The factors of variability that the error bars are capturing should be clearly stated (for example, train/test split, initialization, random drawing of some parameter, or overall run with given experimental conditions).
        \item The method for calculating the error bars should be explained (closed form formula, call to a library function, bootstrap, etc.)
        \item The assumptions made should be given (e.g., Normally distributed errors).
        \item It should be clear whether the error bar is the standard deviation or the standard error of the mean.
        \item It is OK to report 1-sigma error bars, but one should state it. The authors should preferably report a 2-sigma error bar than state that they have a 96\% CI, if the hypothesis of Normality of errors is not verified.
        \item For asymmetric distributions, the authors should be careful not to show in tables or figures symmetric error bars that would yield results that are out of range (e.g., negative error rates).
        \item If error bars are reported in tables or plots, the authors should explain in the text how they were calculated and reference the corresponding figures or tables in the text.
    \end{itemize}

\item {\bf Experiments compute resources}
    \item[] Question: For each experiment, does the paper provide sufficient information on the computer resources (type of compute workers, memory, time of execution) needed to reproduce the experiments?
    \item[] Answer: \answerYes{}
    \item[] Justification: We explain the compute resources we used for experiments in \autoref{app:exp}.
    \item[] Guidelines:
    \begin{itemize}
        \item The answer \answerNA{} means that the paper does not include experiments.
        \item The paper should indicate the type of compute workers CPU or GPU, internal cluster, or cloud provider, including relevant memory and storage.
        \item The paper should provide the amount of compute required for each of the individual experimental runs as well as estimate the total compute. 
        \item The paper should disclose whether the full research project required more compute than the experiments reported in the paper (e.g., preliminary or failed experiments that didn't make it into the paper). 
    \end{itemize}
    
\item {\bf Code of ethics}
    \item[] Question: Does the research conducted in the paper conform, in every respect, with the NeurIPS Code of Ethics \url{https://neurips.cc/public/EthicsGuidelines}?
    \item[] Answer: \answerYes{}
    \item[] Justification: The research conforms with the NeurIPS Code of Ethics.
    \item[] Guidelines:
    \begin{itemize}
        \item The answer \answerNA{} means that the authors have not reviewed the NeurIPS Code of Ethics.
        \item If the authors answer \answerNo, they should explain the special circumstances that require a deviation from the Code of Ethics.
        \item The authors should make sure to preserve anonymity (e.g., if there is a special consideration due to laws or regulations in their jurisdiction).
    \end{itemize}

\item {\bf Broader impacts}
    \item[] Question: Does the paper discuss both potential positive societal impacts and negative societal impacts of the work performed?
    \item[] Answer: \answerYes{}
    \item[] Justification: Broader impacts are discussed in \autoref{app:broader-impacts}.
    \item[] Guidelines:
    \begin{itemize}
        \item The answer \answerNA{} means that there is no societal impact of the work performed.
        \item If the authors answer \answerNA{} or \answerNo, they should explain why their work has no societal impact or why the paper does not address societal impact.
        \item Examples of negative societal impacts include potential malicious or unintended uses (e.g., disinformation, generating fake profiles, surveillance), fairness considerations (e.g., deployment of technologies that could make decisions that unfairly impact specific groups), privacy considerations, and security considerations.
        \item The conference expects that many papers will be foundational research and not tied to particular applications, let alone deployments. However, if there is a direct path to any negative applications, the authors should point it out. For example, it is legitimate to point out that an improvement in the quality of generative models could be used to generate Deepfakes for disinformation. On the other hand, it is not needed to point out that a generic algorithm for optimizing neural networks could enable people to train models that generate Deepfakes faster.
        \item The authors should consider possible harms that could arise when the technology is being used as intended and functioning correctly, harms that could arise when the technology is being used as intended but gives incorrect results, and harms following from (intentional or unintentional) misuse of the technology.
        \item If there are negative societal impacts, the authors could also discuss possible mitigation strategies (e.g., gated release of models, providing defenses in addition to attacks, mechanisms for monitoring misuse, mechanisms to monitor how a system learns from feedback over time, improving the efficiency and accessibility of ML).
    \end{itemize}
    
\item {\bf Safeguards}
    \item[] Question: Does the paper describe safeguards that have been put in place for responsible release of data or models that have a high risk for misuse (e.g., pre-trained language models, image generators, or scraped datasets)?
    \item[] Answer: \answerNA{}
    \item[] Justification: We release no new pretrained model or dataset.
    \item[] Guidelines:
    \begin{itemize}
        \item The answer \answerNA{} means that the paper poses no such risks.
        \item Released models that have a high risk for misuse or dual-use should be released with necessary safeguards to allow for controlled use of the model, for example by requiring that users adhere to usage guidelines or restrictions to access the model or implementing safety filters. 
        \item Datasets that have been scraped from the Internet could pose safety risks. The authors should describe how they avoided releasing unsafe images.
        \item We recognize that providing effective safeguards is challenging, and many papers do not require this, but we encourage authors to take this into account and make a best faith effort.
    \end{itemize}

\item {\bf Licenses for existing assets}
    \item[] Question: Are the creators or original owners of assets (e.g., code, data, models), used in the paper, properly credited and are the license and terms of use explicitly mentioned and properly respected?
    \item[] Answer: \answerYes{}
    \item[] Justification: We cite the original references of the assets used in the paper, and document the assets actually used in \autoref{app:exp} and the files of the supplementary material.
    \item[] Guidelines:
    \begin{itemize}
        \item The answer \answerNA{} means that the paper does not use existing assets.
        \item The authors should cite the original paper that produced the code package or dataset.
        \item The authors should state which version of the asset is used and, if possible, include a URL.
        \item The name of the license (e.g., CC-BY 4.0) should be included for each asset.
        \item For scraped data from a particular source (e.g., website), the copyright and terms of service of that source should be provided.
        \item If assets are released, the license, copyright information, and terms of use in the package should be provided. For popular datasets, \url{paperswithcode.com/datasets} has curated licenses for some datasets. Their licensing guide can help determine the license of a dataset.
        \item For existing datasets that are re-packaged, both the original license and the license of the derived asset (if it has changed) should be provided.
        \item If this information is not available online, the authors are encouraged to reach out to the asset's creators.
    \end{itemize}

\item {\bf New assets}
    \item[] Question: Are new assets introduced in the paper well documented and is the documentation provided alongside the assets?
    \item[] Answer: \answerNA{}
    \item[] Justification: We do not release new pretrained models or datasets.
    \item[] Guidelines:
    \begin{itemize}
        \item The answer \answerNA{} means that the paper does not release new assets.
        \item Researchers should communicate the details of the dataset\slash code\slash model as part of their submissions via structured templates. This includes details about training, license, limitations, etc. 
        \item The paper should discuss whether and how consent was obtained from people whose asset is used.
        \item At submission time, remember to anonymize your assets (if applicable). You can either create an anonymized URL or include an anonymized zip file.
    \end{itemize}

\item {\bf Crowdsourcing and research with human subjects}
    \item[] Question: For crowdsourcing experiments and research with human subjects, does the paper include the full text of instructions given to participants and screenshots, if applicable, as well as details about compensation (if any)? 
    \item[] Answer: \answerNA{}
    \item[] Justification: The paper does not involve crowdsourcing or research with human subjects.
    \item[] Guidelines:
    \begin{itemize}
        \item The answer \answerNA{} means that the paper does not involve crowdsourcing nor research with human subjects.
        \item Including this information in the supplemental material is fine, but if the main contribution of the paper involves human subjects, then as much detail as possible should be included in the main paper. 
        \item According to the NeurIPS Code of Ethics, workers involved in data collection, curation, or other labor should be paid at least the minimum wage in the country of the data collector. 
    \end{itemize}

\item {\bf Institutional review board (IRB) approvals or equivalent for research with human subjects}
    \item[] Question: Does the paper describe potential risks incurred by study participants, whether such risks were disclosed to the subjects, and whether Institutional Review Board (IRB) approvals (or an equivalent approval/review based on the requirements of your country or institution) were obtained?
    \item[] Answer: \answerNA{}
    \item[] Justification: The paper does not involve crowdsourcing or research with human subjects.
    \item[] Guidelines:
    \begin{itemize}
        \item The answer \answerNA{} means that the paper does not involve crowdsourcing nor research with human subjects.
        \item Depending on the country in which research is conducted, IRB approval (or equivalent) may be required for any human subjects research. If you obtained IRB approval, you should clearly state this in the paper. 
        \item We recognize that the procedures for this may vary significantly between institutions and locations, and we expect authors to adhere to the NeurIPS Code of Ethics and the guidelines for their institution. 
        \item For initial submissions, do not include any information that would break anonymity (if applicable), such as the institution conducting the review.
    \end{itemize}

\item {\bf Declaration of LLM usage}
    \item[] Question: Does the paper describe the usage of LLMs if it is an important, original, or non-standard component of the core methods in this research? Note that if the LLM is used only for writing, editing, or formatting purposes and does \emph{not} impact the core methodology, scientific rigor, or originality of the research, declaration is not required.
    \item[] Answer: \answerNA{}
    \item[] Justification: LLMs are not a component of the core methodology. We use LLMs only for writing polish and coding assistance.
    \item[] Guidelines:
    \begin{itemize}
        \item The answer \answerNA{} means that the core method development in this research does not involve LLMs as any important, original, or non-standard components.
        \item Please refer to our LLM policy in the NeurIPS handbook for what should or should not be described.
    \end{itemize}

\end{enumerate}

%% file: appendix1.tex
\section{Extended Preliminaries}
\label{app:prelim}

This appendix provides standard background and derivations on three prerequisite topics---diffusion models \citep{song2021scorebased,ho2020denoising},
Positive--Unlabeled (PU) learning \citep{NIPS2014_f032bc3f,pmlr-v37-plessis15},
and Diffusion Alignment \note{as} Sampling (DAS; \citealp{kim2025testtime})---that the main text (\autoref{sec:existing} and \autoref{sec:method}) treats briefly.

\subsection{Background on Diffusion Models}
\label{app:prelim:diffusion}

Diffusion models \citep{ho2020denoising,song2021scorebased} are a class of generative models that learn to generate data by reversing a gradual noising process applied to training samples.

\paragraph{(i) Forward process.}
Let $\{\alpha_t\}_{t=1}^T$ be a noise schedule with $\alpha_t = 1-\beta_t \in (0,1)$ and cumulative product $\bar\alpha_t = \prod_{s=1}^t \alpha_s$.
The \emph{forward Markov chain} is
\[
q(x_t\mid x_{t-1}) \;=\; \mathcal N\bigl(x_t;\,\sqrt{\alpha_t}\,x_{t-1},\;\beta_t I\bigr),
\qquad t=1,\dots,T,
\]
which by Gaussian convolution admits the closed-form conditional marginal
\[
q_t(x_t\mid x_0) \;=\; \mathcal N\bigl(x_t;\,\sqrt{\bar\alpha_t}\,x_0,\;(1-\bar\alpha_t) I\bigr).
\]
The schedule is chosen so that $\bar\alpha_T\approx 0$, ensuring $p_T(x_T)\approx\mathcal N(x_T;\,0,\,I)$.

\paragraph{(ii) Denoising score matching and the noise-predictor identity.}
The \emph{score} of the time-$t$ marginal is $\nabla_{x_t}\log p_t(x_t)$.
Diffusion models are trained with the \emph{denoising score matching} objective \citep{song2021scorebased,ho2020denoising}
\[
\min_\theta \;\;
\mathbb E_{x_0\sim p,\;\epsilon\sim\mathcal N(0,I),\;t\sim\mathrm{Unif}\{1,\dots,T\}}
\Bigl[
\bigl\|\hat\epsilon_\theta(x_t,t) - \epsilon\bigr\|^2
\Bigr],
\qquad
x_t \;=\; \sqrt{\bar\alpha_t}\,x_0 + \sqrt{1-\bar\alpha_t}\,\epsilon.
\]
The optimum $\hat\epsilon_\theta^\star$ predicts the noise $\epsilon$ that was added to $x_0$ to produce $x_t$,
and differentiating the log of the Gaussian density $q_t(x_t\mid x_0)$ gives
$\nabla_{x_t}\log q_t(x_t\mid x_0) = -\epsilon/\sqrt{1-\bar\alpha_t}$.
Taking the conditional expectation over $x_0\mid x_t$ and using
$\nabla_{x_t}\log p_t(x_t) = \mathbb E_{x_0\mid x_t}[\nabla_{x_t}\log q_t(x_t\mid x_0)]$ yields the identity
\[
\hat\epsilon_\theta^\star(x_t,t) \;=\; -\sqrt{1-\bar\alpha_t}\,\nabla_{x_t}\log p_t(x_t)\note{.}
\]

\paragraph{(iii) Tweedie's formula.}
For $Y = \sqrt{\bar\alpha_t}\,X + \sqrt{1-\bar\alpha_t}\,Z$ with $Z\sim\mathcal N(0,I)$,
Tweedie's formula \citep{efron2011tweedie} gives the posterior mean
\[
\mathbb E[X\mid Y] \;=\; \frac{Y + (1-\bar\alpha_t)\,\nabla_Y\log p_Y(Y)}{\sqrt{\bar\alpha_t}}.
\]
Substituting $Y=x_t$, $X=x_0$ and the score identity from~(ii) yields
\[
\hat x_0(x_t) \;=\; \frac{x_t - \sqrt{1-\bar\alpha_t}\,\hat\epsilon_\theta(x_t,t)}{\sqrt{\bar\alpha_t}}
\;\approx\; \mathbb E[x_0\mid x_t].
\]
Many downstream tasks require evaluating a function $g_0$ that is defined on clean samples $x_0$,
but one only has access to noisy samples $x_t$ during the reverse process.
Using $\hat x_0(x_t)$ as a proxy for $x_0$, one can \emph{lift} $g_0$ to noisy-sample space by defining
$f_t(x_t) := g_0(\hat x_0(x_t))$.
This lifting technique is used in, e.g.,
Diffusion Posterior Sampling \citep{chung2023diffusion} and the Twisted Diffusion Sampler \citep{NEURIPS2023_63e8bc7b}.

\paragraph{(iv) Reverse process and DDPM parameterization.}
By Bayes' rule the ground-truth reverse transition conditioned on the clean sample is Gaussian,
\[
q(x_{t-1}\mid x_t, x_0)
\;=\; \mathcal N\bigl(x_{t-1};\,\tilde\mu_t(x_t,x_0),\;\tilde\sigma_t^2 I\bigr),
\quad
\tilde\sigma_t^2 \;=\; \frac{1-\bar\alpha_{t-1}}{1-\bar\alpha_t}\,\beta_t.
\]
The DDPM parameterization \citep{ho2020denoising} approximates this by plugging in the Tweedie estimate for $x_0$,
yielding the learned reverse kernel
\[
p_\theta(x_{t-1}\mid x_t) \;=\; \mathcal N\bigl(x_{t-1};\,\mu_\theta(x_t,t),\;\sigma_t^2 I\bigr),
\]
where
\[
\mu_\theta(x_t,t) \;=\; \frac{1}{\sqrt{\alpha_t}}\Bigl(x_t - \frac{\beta_t}{\sqrt{1-\bar\alpha_t}}\,\hat\epsilon_\theta(x_t,t)\Bigr),
\qquad
\sigma_t^2 \;=\; \tilde\sigma_t^2.
\]
Unconditional generation (``ancestral sampling''~\citep{song2021scorebased}) then initializes $x_T\sim\mathcal N(0,I)$ and iteratively draws $x_{t-1}\sim p_\theta(\cdot\mid x_t)$ for $t=T,T-1,\dots,1$.

\subsection{PU Learning}
\label{app:prelim:pu}

PU binary classification considers a setting where \emph{positive} samples are drawn from $p(x\mid c)$ and \emph{unlabeled} samples from the marginal $p(x)$,
which mixes $c$ and $\neg c$ in proportion $\pi=p(c)$,
but no labeled $\neg c$ samples are available.
The goal is to train a binary classifier with label $y=+1$ for class $c$ and $y=-1$ for $\neg c$.

We now derive the \emph{unbiased PU (uPU) risk}.
Given the data decomposition $p(x) = \pi\,p(x\mid c) + (1-\pi)\,p(x\mid\neg c)$
with class prior $\pi\in(0,1)$ and a per-sample loss $\ell(\cdot,y)$,
the standard fully supervised (Positive--Negative, PN) risk is
\[
\mathcal L_{\mathrm{PN}}(f)
= \pi\,\mathbb E_{p(x\mid c)}[\ell(f(x),+1)]
+ (1-\pi)\,\mathbb E_{p(x\mid\neg c)}[\ell(f(x),-1)].
\]
Rearranging the data decomposition $p(x) = \pi\,p(x\mid c) + (1-\pi)\,p(x\mid\neg c)$
expresses the unobservable negative-class term as
$(1-\pi)\,p(x\mid\neg c) = p(x) - \pi\,p(x\mid c)$,
replacing it with the marginal $p(x)$ and the positive class $p(x\mid c)$.
\cite{NIPS2014_f032bc3f,pmlr-v37-plessis15} apply this substitution at the level of expectations to obtain the exact identity
\[
(1-\pi)\,\mathbb E_{p(x\mid\neg c)}[\ell(f(x),-1)]
= \mathbb E_{p(x)}[\ell(f(x),-1)] - \pi\,\mathbb E_{p(x\mid c)}[\ell(f(x),-1)].
\]
Substituting yields the \emph{unbiased PU (uPU) risk}~\eqref{eq:upu},
\[
\mathcal L_{\mathrm{uPU}}(f)
= \pi\,\mathbb E_{p(x\mid c)}[\ell(f(x),+1)]
+ \mathbb E_{p(x)}[\ell(f(x),-1)]
- \pi\,\mathbb E_{p(x\mid c)}[\ell(f(x),-1)],
\]
which can be estimated from $P$- and $U$-samples alone.
When $\ell$ is a proper loss (e.g.\ logistic) and the hypothesis class is sufficiently expressive,
$\mathcal L_{\mathrm{uPU}}$ shares its population minimizer with $\mathcal L_{\mathrm{PN}}$;
the minimizer $f^\star$ is calibrated to $p(c\mid x)$ and therefore serves as a drop-in replacement for a supervised classifier.

\subsection{Diffusion Alignment \texorpdfstring{\note{as}}{as} Sampling (DAS)}
\label{app:prelim:das}

A \emph{reward} $R:\mathcal X_0\to\mathbb R$ assigns a scalar score to each clean sample $x_0$ according to how well it satisfies a desired property.
DAS \citep{kim2025testtime} draws samples from a reward-tilted target \citep[Eq.~(3)]{kim2025testtime}
$p_{\mathrm{DAS}}(x_0)\propto p_{\theta,0}(x_0)\exp(\lambda R(x_0))$
for guidance strength $\lambda>0$.
Because exact reward-conditioned marginals along the reverse process are intractable,
DAS defines tractable intermediate targets \citep[Eq.~(11)]{kim2025testtime}
\[
\gamma_t(x_t)
\;\propto\;
p_{\theta,t}(x_t)\exp\bigl(\lambda_t R(\hat x_0(x_t))\bigr),
\qquad
0=\lambda_T\le\cdots\le\lambda_0=\lambda,
\]
which lift the data-space reward to noisy-sample space via the Tweedie estimate $\hat x_0(x_t)$.
Sampling from this path uses SMC with a weighted particle system $\{(x_t^i,w_t^i)\}_{i=1}^M$,
initialized with $w_T^i=1$ (since $\lambda_T=0$ makes $\gamma_T\propto p_{\theta,T}$,
so particles drawn from $\mathcal N(0,I)=p_{\theta,T}$ already follow $\gamma_T$ exactly).
\emph{Remark on notation.}
\cite{kim2025testtime} parameterize the guidance strength as $1/\alpha$ with a schedule
$\lambda_t\in[0,1]$; we use the equivalent form $\lambda_t\in[0,\lambda]$ with $\lambda:=1/\alpha$,
matching the main text.
(Here $\alpha$ is the DAS hyperparameter, distinct from the diffusion noise schedule $\alpha_t$ of Appendix~\ref{app:prelim:diffusion}.)

\paragraph{Proposal kernel.}
The \emph{locally optimal proposal} $m_{t-1}^*$ is the kernel that minimizes the conditional variance $\mathrm{Var}(w_{t-1}\mid x_t)$ of the importance weight \citep[Proposition~1]{kim2025testtime}.
Because sampling from $m_{t-1}^*$ exactly is non-trivial, DAS approximates it \citep[Eq.~(13)]{kim2025testtime} with a Gaussian,
\[
m_{t-1}(x_{t-1}\mid x_t)
\;:=\;
\mathcal N\Bigl(
x_{t-1};\,
\underbrace{\mu_\theta(x_t,t)
+
\sigma_t^2\,\lambda_{t-1}\,\nabla_{x_t} R(\hat{x}_0(x_t))}_{\displaystyle\mu_t^{\mathrm{prop}}},\;
\sigma_t^2 I
\Bigr),
\]
which shifts the DDPM mean by the gradient of the lifted reward.

\paragraph{Weight update derivation.}
The general SMC importance weight is
\[
w_{t-1}(x_{t-1})
\;\propto\;
w_t(x_t)\,
\frac{\gamma_{t-1}(x_{t-1})\,L_t(x_t\mid x_{t-1})}{\gamma_t(x_t)\,m_{t-1}(x_{t-1}\mid x_t)},
\]
where $L_t(x_t\mid x_{t-1})$ is a \emph{backward kernel} that reverses the forward step.
DAS chooses \citep[Eq.~(9)]{kim2025testtime}
\[
L_t(x_t\mid x_{t-1})
\;=\; \frac{p_\theta(x_{t-1}\mid x_t)\,p_{\theta,t}(x_t)}{p_{\theta,t-1}(x_{t-1})}.
\]
Substituting $\gamma_t(x_t)=p_{\theta,t}(x_t)\exp(\lambda_t R(\hat x_0(x_t)))$ (dropping the common normalizing constants, which cancel in ratios) and the backward kernel:
\begin{align*}
\frac{\gamma_{t-1}(x_{t-1})\,L_t(x_t\mid x_{t-1})}{\gamma_t(x_t)}
&=
\frac{
  p_{\theta,t-1}(x_{t-1})\exp(\lambda_{t-1} R(\hat x_0(x_{t-1})))
  \;\cdot\;
  \dfrac{p_\theta(x_{t-1}\mid x_t)\,p_{\theta,t}(x_t)}{p_{\theta,t-1}(x_{t-1})}
}{
  p_{\theta,t}(x_t)\exp(\lambda_t R(\hat x_0(x_t)))
}\\[4pt]
&=
\frac{\exp(\lambda_{t-1} R(\hat x_0(x_{t-1})))\,p_\theta(x_{t-1}\mid x_t)}{\exp(\lambda_t R(\hat x_0(x_t)))}.
\end{align*}
Since $p_\theta(\cdot\mid x_t)$ and $m_{t-1}(\cdot\mid x_t)$ share covariance $\sigma_t^2 I$,
their ratio reduces to
\[
r_t
\;:=\;
\frac{p_\theta(x_{t-1}\mid x_t)}{m_{t-1}(x_{t-1}\mid x_t)}
\;=\;
\exp\Bigl(\frac{1}{2\sigma_t^2}\bigl(
\|x_{t-1}-\mu_t^{\mathrm{prop}}\|^2 - \|x_{t-1}-\mu_\theta\|^2
\bigr)\Bigr),
\]
giving the weight recursion \citep[Eq.~(14)]{kim2025testtime}
\[
w_{t-1}(x_{t-1})
\;=\;
w_t(x_t)
\;\frac{\exp(\lambda_{t-1} R(\hat{x}_0(x_{t-1})))}{\exp(\lambda_t R(\hat{x}_0(x_t)))}
\;r_t.
\]

\paragraph{Asymptotic exactness.}
Under standard regularity assumptions \citep[Appendix~C.2.2]{kim2025testtime},
the weighted particle system is asymptotically exact
\citep[Proposition~8, Appendix~C.2]{kim2025testtime}: for any $\varphi\in\mathcal C_b(\mathcal X_0)$,
\[
\sum_{i=1}^M \note{\frac{w_0^i}{\sum_{k=1}^M w_0^k}}\,\varphi(x_0^i)
\;\xrightarrow{\mathrm{a.s.}}\;
p_{\mathrm{DAS}}(\varphi)
\qquad\text{as }M\to\infty.
\]

\paragraph{Extension to our density path.}
Our method instead \note{uses} the density path \eqref{eq:density-path} in \autoref{sec:method:framework},
$\gamma_t(x_t)\propto p_{\theta,t}(x_t)(1-f_t(x_t))^{\lambda_t}$,
with the optional parameterization $f_t(x_t):=\note{g_0}(\hat x_0(x_t))$ of \autoref{sec:method:ref},
where $\note{g_0}$ is a PU classifier estimating $\note{p_{\theta,0}(c\mid x_0)}$.
This corresponds to instantiating the DAS framework with
$R(x_0)\leftarrow\log(1-\note{g_0}(x_0))$, since
\[
\exp\bigl(\lambda_t R(\hat x_0(x_t))\bigr)
\;=\;\exp\bigl(\lambda_t\log(1-\note{g_0}(\hat x_0(x_t)))\bigr)
\;=\;(1-f_t(x_t))^{\lambda_t}.
\]
Substituting into the weight recursion above yields
\begin{equation*}
w_{t-1}(x_{t-1})
\;=\;
w_{t}(x_t)\;r_t\;
\frac{(1-f_{t-1}(x_{t-1}))^{\lambda_{t-1}}}{(1-f_t(x_t))^{\lambda_t}},
\end{equation*}
which is the recursion~\eqref{eq:wq-recursion}.

\note{
\subsection{On the Exactness of Dynamic Negative Guidance}
\label{app:prelim:dng}

DNG~\citep{koulischer2025dynamic} is exact under three conditions:
exact scores, the limit $T\to\infty$, and repulsion strength $\eta=1$.
This subsection examines the last two.

\paragraph{The posterior recursion is exact only as $T\to\infty$.}
The guided score~\eqref{eq:neg-score} needs the marginal posterior
$p_t(c\mid x_t)$, whereas the recursion~\eqref{eq:dngposterior} is built from the trajectory $x_T,\dots,x_t$ generated so far.
For two reverse chains started at the same $x_T\sim\mathcal N(0,I)$, one unconditional and one conditioned on $c$, Bayes' rule gives
\begin{equation}
\label{eq:dng-traj}
  p(c\mid x_T,\dots,x_t)
  \;=\; p(c)\prod_{s=t+1}^{T}\frac{p(x_{s-1}\mid x_s,c)}{p(x_{s-1}\mid x_s)}
  \;=\; p_t(c\mid x_t)\,
     \underbrace{\frac{p(x_T,\dots,x_{t+1}\mid x_t,c)}
                      {p(x_T,\dots,x_{t+1}\mid x_t)}}_{(\star)} .
\end{equation}
The logarithm of the middle expression is exactly~\eqref{eq:dngposterior}, so the recursion tracks the trajectory posterior, which equals the marginal one only when $(\star)=1$.
\citet{koulischer2025dynamic} prove $(\star)=1$ in their Appendix~C as a consequence of the Markov assumption, the forward chain carrying no condition.
The argument presumes exact time reversal, which holds only as $T\to\infty$,
the reverse kernel being Gaussian only for small steps.
Table~\ref{tab:gmm-dng-T} measures the residual bias on the Gaussian mixture,
where the scores are exact.

\paragraph{The tempered density is not guaranteed as a data-level target.}
At $\eta=1$ the guided score is that of $p_t(\cdot\mid\neg c)$, the forward marginal of the data density $p_0(\cdot\mid\neg c)$, so the sampler reverses a genuine forward diffusion.
At $\eta>1$ the guidance instead follows
$\nabla_{x_t}\log\bigl[p_t(x_t)\bigl(1-p_t(c\mid x_t)\bigr)^{\eta}\bigr]$,
aiming at the tempered densities in score form, and Footnote~2 of~\citet{koulischer2025dynamic} states that this implies sampling from them.
The implication does not hold in general: raising the second factor to a power does not commute with the integral defining the forward marginal,
so the tempered densities need not be the forward marginals of any single data density.
Theorem~3.2 of \citet{bradley2025classifierfree} turns the same gap into a counterexample for CFG, where the sampler provably does not produce the powered density at a given guidance strength.

The analysis of \autoref{sec:theory} is stated under the complementary conditions: it is relative to the pretrained model's own reverse process, so it assumes no exact score,
it is carried out on the discrete chain at finite $T$,
and it holds for every avoidance strength $\lambda\ge 0$, with the remaining error controlled by the particle counts.
}

%% file: appendix2.tex
\section{Algorithms}
\label{app:algorithms}

\autoref{alg:neg_das_pu_fixed} gives detailed pseudocode for the reference-based setting (\autoref{sec:method:ref});
\autoref{alg:neg_das_pu_singlechain} for the score-based setting (\autoref{sec:method:score}).
The following notation supplements \autoref{sec:existing}.
\begin{itemize}
\item \textbf{Conditional Tweedie denoiser.}
$\hat x_0(x_t\mid c)
:= \bigl(x_t - \sqrt{1-\bar\alpha_t}\,\hat\epsilon_\theta(x_t,t\mid c)\bigr)/\sqrt{\bar\alpha_t}$
is the conditional analog of $\hat x_0(x_t)$.
\autoref{alg:neg_das_pu_singlechain} uses it for the positive set because conditioning on $c$ yields a sharper denoised view at intermediate~$t$,
easing discrimination.
Both $\hat x_0(x_t)$ and $\hat x_0(x_t\mid c)$ reduce to the identity at $t{=}0$.
\item \textbf{uPU oracle.}
$\mathcal{O}_{\mathrm{uPU}}(\pi,\mathcal P,\mathcal U,f)$ denotes a few gradient steps minimizing the uPU risk~\eqref{eq:upu} with positives~$\mathcal P$,
weighted unlabeled set~$\mathcal U$, class prior~$\pi$, warm-started from classifier~$f$.
At $t{=}T$ the oracle is called without~$f$ (cold start).
\item \textbf{Time-varying class prior.}
Both algorithms accept a schedule $\{\pi_t\}_{t=0}^T$.
Setting $\pi_t\equiv\pi$ recovers the main-text formulation.
A varying $\pi_t$ is straightforward in practice since the theory requires only $f_0$ to be accurate at $t{=}0$ (\autoref{sec:theory}).
\end{itemize}

\clearpage

\begin{algorithm}[H]
\caption{Negative Alignment via SMC with Tweedie--uPU: Reference-Based Setting}
\label{alg:neg_das_pu_fixed}
\begin{algorithmic}[1]
\Require Pretrained DDPM reverse mean $\mu_\theta$ and Tweedie denoiser $\hat x_0$ (see \autoref{sec:existing}),
noise schedule $\{\beta_t\}_{t=1}^T$,
particle count $M$,
resampling threshold $\mathrm{ESS}_{\min}$,
class priors $\{\pi_t\}_{t=0}^T$,
uPU oracle $\mathcal{O}_{\mathrm{uPU}}$,
reference sampler producing $\{x_{c,t}^{(i)}\}_{i=1}^N$ at all $t$,
proposal mean $\mu^{\mathrm{prop}}(x_t,t,\lambda,f)$ (default: \eqref{eq:prop-fixed}),
tempering: precomputed $\{\lambda_t\}_{t=0}^T$ ($0=\lambda_T\le\cdots\le\lambda_0$)
\textbf{or} ESS target $\mathrm{ESS}_{\mathrm{tar}}$ for adaptive tempering.

\vspace{4pt}
\State \textbf{Initialize} $\{x_T^{(j)}\}_{j=1}^M \sim \mathcal N(0,I)$,\quad
$w_{q,T}^{(j)} \leftarrow 1$,\quad $w_{\emptyset,T}^{(j)} \leftarrow 1$,\quad $\lambda_T \leftarrow 0$
\vspace{4pt}

\State $\mathcal P_T=\{\hat x_0(x_{c,T}^{(i)})\}_{i=1}^N, \quad
       \mathcal U_T=\{(\hat x_0(x_T^{(j)}),w_{\emptyset,T}^{(j)})\}_{j=1}^M$
\Comment{Initialize classifier}
\State $f_T \leftarrow \mathcal{O}_{\mathrm{uPU}}(\pi_T,\mathcal P_T,\mathcal U_T)$
\vspace{4pt}

\For{$t=T,T-1,\dots,1$}

\If{adaptive tempering}
\State Find $\delta_t \ge 0$ s.t.\
$\mathrm{ESS}(\{w_{q,t}^{(j)} (1-f_t(\hat x_0(x_t^{(j)})))^{\delta_t}\}) \approx \mathrm{ESS}_{\mathrm{tar}}$;\quad
$\lambda_{t-1}\leftarrow\min(\lambda_t+\delta_t,\,\note{\lambda_0})$
\Else
\State $\lambda_{t-1} \leftarrow$ precomputed value
\EndIf
\vspace{4pt}

\For{$j=1,\dots,M$} \Comment{Proposal}
\State $\mu_t^{(j)} = \mu^{\mathrm{prop}}(x_t^{(j)}, t, \lambda_{t-1}, f_t)$
{\color[gray]{0.6}\footnotesize
\Statex \hspace{40pt} // default:
$\mu_\theta(x_t^{(j)},t)+\sigma_t^2\lambda_{t-1}\nabla_{x_t}\log(1-f_t(\hat x_0(x_t^{(j)})))$
}
\State $x_{t-1}^{(j)}\sim\mathcal N(\mu_t^{(j)},\sigma_t^2 I)$
\State $r_t^{(j)} \leftarrow \exp\!\left(\tfrac{1}{2\sigma_t^2}\bigl(\|x_{t-1}^{(j)}-\mu_t^{(j)}\|^2-\|x_{t-1}^{(j)}-\mu_\theta(x_t^{(j)},t)\|^2\bigr)\right)$
\State $w_{\emptyset,t-1}^{(j)} \leftarrow w_{\emptyset,t}^{(j)}\, r_t^{(j)}$
\Comment{Update $w_\emptyset$}
\EndFor
\vspace{2pt}

\State $\mathcal P_{t-1}=\{\hat x_0(x_{c,t-1}^{(i)})\}_{i=1}^N,\quad
       \mathcal U_{t-1}=\{(\hat x_0(x_{t-1}^{(j)}),w_{\emptyset,t-1}^{(j)})\}_{j=1}^M$
\Comment{Refit classifier}
\State $f_{t-1} \leftarrow \mathcal{O}_{\mathrm{uPU}}(\pi_{t-1},\mathcal P_{t-1},\mathcal U_{t-1},f_t)$
\vspace{4pt}

\For{$j=1,\dots,M$} \Comment{Update $w_q$~\eqref{eq:wq-recursion}}
\State $w_{q,t-1}^{(j)} \leftarrow w_{q,t}^{(j)}\, r_t^{(j)}
\dfrac{\bigl(1-f_{t-1}(\hat x_0(x_{t-1}^{(j)}))\bigr)^{\lambda_{t-1}}}{\bigl(1-f_t(\hat x_0(x_t^{(j)}))\bigr)^{\lambda_t}}$
\EndFor

\If{$\mathrm{ESS}(\{w_{q,t-1}^{(j)}\})<\mathrm{ESS}_{\min}$} \Comment{Resampling}
\State Sample ancestors $a(1{:}M)$ using $\{w_{q,t-1}^{(j)}\}$
\For{$j=1,\dots,M$}
\State $x_{t-1}^{(j)}\leftarrow x_{t-1}^{(a(j))}$,\quad
      $w_{\emptyset,t-1}^{(j)}\leftarrow
      \dfrac{w_{\emptyset,t-1}^{(a(j))}}{w_{q,t-1}^{(a(j))}}$,\quad
      $w_{q,t-1}^{(j)}\leftarrow1$
\EndFor
\EndIf

\EndFor
\State \Return $\{x_0^{(j)},w_{q,0}^{(j)}\}_{j=1}^M$
\end{algorithmic}
\end{algorithm}

\begin{algorithm}[H]
\caption{Negative Alignment via SMC with Tweedie--uPU: Score-Based Setting}
\label{alg:neg_das_pu_singlechain}
\begin{algorithmic}[1]
\Require Pretrained DDPM reverse means $\mu_\theta(x_t,t)$ and $\mu_\theta(x_t,t\mid c)$,
Tweedie denoisers $\hat x_0(x_t)$ and $\hat x_0(x_t\mid c)$ (see \autoref{sec:existing} and above),
noise schedule $\{\beta_t\}_{t=1}^T$,
particle count $M$,
resampling threshold $\mathrm{ESS}_{\min}$,
class priors $\{\pi_t\}_{t=0}^T$,
uPU oracle $\mathcal{O}_{\mathrm{uPU}}$,
proposal mean $\mu^{\mathrm{prop}}(x_t,t,\lambda,f)$ (default: \eqref{eq:prop-fixed}),
tempering: precomputed $\{\lambda_t\}_{t=0}^T$ ($0=\lambda_T\le\cdots\le\lambda_0$)
\textbf{or} ESS target $\mathrm{ESS}_{\mathrm{tar}}$ for adaptive tempering;\quad
initialize $\lambda_T\leftarrow 0$.

\vspace{4pt}
\State \textbf{Initialize} $\{x_T^{(j)}\}_{j=1}^M \sim \mathcal N(0,I)$,\quad
$w_{q,T}^{(j)}\leftarrow 1$,\quad $w_{\emptyset,T}^{(j)}\leftarrow 1$,\quad $w_{c,T}^{(j)}\leftarrow 1 \quad\forall j$

\vspace{4pt}
\State $\mathcal P_T=\{(\hat x_0(x_T^{(j)}\mid c),w_{c,T}^{(j)})\}_{j=1}^M$,\quad
$\mathcal U_T=\{(\hat x_0(x_T^{(j)}),w_{\emptyset,T}^{(j)})\}_{j=1}^M$
\State $f_T \leftarrow \mathcal{O}_{\mathrm{uPU}}(\pi_T,\mathcal P_T,\mathcal U_T)$
\Comment{Initialize classifier}

\vspace{6pt}
\For{$t=T,T-1,\dots,1$}

\If{adaptive tempering}
\State Find $\delta_t\ge 0$ s.t.\
$\mathrm{ESS}(\{w_{q,t}^{(j)} (1-f_t(\hat x_0(x_t^{(j)})))^{\delta_t}\}) \approx \mathrm{ESS}_{\mathrm{tar}}$;\quad
$\lambda_{t-1}\leftarrow\min(\lambda_t+\delta_t,\,\note{\lambda_0})$
\Else
\State $\lambda_{t-1} \leftarrow$ precomputed value
\EndIf

\vspace{4pt}
\For{$j=1,\dots,M$} \Comment{Proposal}
\State $\mu_t^{(j)} = \mu^{\mathrm{prop}}(x_t^{(j)}, t, \lambda_{t-1}, f_t)$
{\color[gray]{0.6}\footnotesize
\Statex \hspace{40pt} // default:
$\mu_\theta(x_t^{(j)},t)+\sigma_t^2\lambda_{t-1}\nabla_{x_t}\log(1-f_t(\hat x_0(x_t^{(j)})))$
}
\State $x_{t-1}^{(j)}\sim\mathcal N(\mu_t^{(j)},\sigma_t^2 I)$
\State $r_t^{(j)} \leftarrow \exp\!\left(\tfrac{1}{{2\sigma_t^2}}\bigl(\|x_{t-1}^{(j)}-\mu_t^{(j)}\|^2-\|x_{t-1}^{(j)}-\mu_\theta(x_t^{(j)},t)\|^2\bigr)\right)$
\State $w_{\emptyset,t-1}^{(j)} \leftarrow w_{\emptyset,t}^{(j)}\, r_t^{(j)}$
\Comment{Update $w_\emptyset$}
\State $w_{c,t-1}^{(j)} \leftarrow w_{c,t}^{(j)}
\exp\!\left(\tfrac{1}{{2\sigma_t^2}}\bigl(\|x_{t-1}^{(j)}-\mu_t^{(j)}\|^2-\|x_{t-1}^{(j)}-\mu_\theta(x_t^{(j)},t\mid c)\|^2\bigr)\right)$
\Comment{Update $w_c$}
\EndFor

\vspace{4pt}
\State $\mathcal P_{t-1}=\{(\hat x_0(x_{t-1}^{(j)}\mid c),w_{c,t-1}^{(j)})\}_{j=1}^M$,\quad
$\mathcal U_{t-1}=\{(\hat x_0(x_{t-1}^{(j)}),w_{\emptyset,t-1}^{(j)})\}_{j=1}^M$
\State $f_{t-1} \leftarrow \mathcal{O}_{\mathrm{uPU}}(\pi_{t-1},\mathcal P_{t-1},\mathcal U_{t-1},f_t)$
\Comment{Refit classifier}

\vspace{6pt}
\For{$j=1,\dots,M$} \Comment{Update $w_q$~\eqref{eq:wq-recursion}}
\State $w_{q,t-1}^{(j)} \leftarrow w_{q,t}^{(j)}\, r_t^{(j)}
\dfrac{(1-f_{t-1}(\hat x_0(x_{t-1}^{(j)})))^{\lambda_{t-1}}}{(1-f_t(\hat x_0(x_t^{(j)})))^{\lambda_t}}$
\EndFor

\vspace{4pt}
\If{$\mathrm{ESS}(\{w_{q,t-1}^{(j)}\})<\mathrm{ESS}_{\min}$} \Comment{Resampling}
\State Sample ancestors $a(1{:}M)$ using $\{w_{q,t-1}^{(j)}\}$
\For{$j=1,\dots,M$}
\State $x_{t-1}^{(j)}\leftarrow x_{t-1}^{(a(j))}$,\quad
$w_{\emptyset,t-1}^{(j)}\leftarrow \dfrac{w_{\emptyset,t-1}^{(a(j))}}{w_{q,t-1}^{(a(j))}}$,\quad
$w_{c,t-1}^{(j)}\leftarrow \dfrac{w_{c,t-1}^{(a(j))}}{w_{q,t-1}^{(a(j))}}$,\quad
$w_{q,t-1}^{(j)}\leftarrow 1$
\EndFor
\EndIf

\EndFor
\State \Return $\{x_0^{(j)},w_{q,0}^{(j)}\}_{j=1}^M$
\end{algorithmic}
\end{algorithm}

\clearpage

%% file: appendix_proof.tex
\section{Proof and Technical Detail}
\label{app:full-proof}

This appendix provides formal statements and proofs supporting \autoref{sec:theory}.
The object of analysis is the meta-algorithm \autoref{alg:pu-steer-meta}, in its reference-based instantiation (\autoref{sec:method:ref}) and score-based instantiation (\autoref{sec:method:score}).
We identify the avoidance target~\eqref{eq:target} $p_{\mathrm{target}}$ with the terminal Feynman--Kac measure
\[
\nu_0(x)\;\propto\;(1-\eta_0(x))^{\lambda_0}\,p_{\theta,0}(x),
\qquad
\eta_0(x):=\note{p_{\theta,0}(c\mid x)},
\]
and show that the weighted empirical measure $\widehat\nu_0^{M\mid M',N}$ produced by \autoref{alg:pu-steer-meta} converges to $\nu_0$ almost surely in the iterated limit $\lim_{M',N\to\infty}\lim_{M\to\infty}$.
 
For analytical clarity we adopt a proof-oriented variant in which classifier estimation is decoupled from the final SMC step. The classifier $f_t$ is fit on an auxiliary particle system of size $M'$ that shares the kernels of, but is independent of, the main system of size $M$. Positive samples for the PU risk are the $N$ external references in the reference-based setting, and are supplied by the auxiliary system itself in the score-based setting. The proposal kernel is left arbitrary in the algorithm and regularized within the proof so that the importance weights remain uniformly bounded.
 
The argument has three steps. (i) Appendix~\ref{app:pu-fixed-t} establishes consistency of the weighted PU classifier at each fixed diffusion time, taking the positive and unlabeled samples as exogenous inputs satisfying suitable uniform laws of large numbers. (ii) The SMC construction underlying \autoref{alg:pu-steer-meta} is shown to satisfy these uniform laws for both $p_{\theta,t}$ and $p_{\theta,t}(\cdot\mid c)$. (iii) A plug-in argument, combined with the telescoping of the tilting factors $(1-f_t)^{\lambda_t}$ in the recursion~\eqref{eq:wq-recursion}, yields convergence of the main particle system to $\nu_0$. The reference-based instantiation serves as the master argument. The score-based instantiation follows by the same scheme, with the external positive samples replaced by the auxiliary particles reweighted by $w_{c,t}$, which retargets them to $p_{\theta,t}(\cdot\mid c)$ as in \autoref{sec:method:score}.

\subsection{Consistency of Weighted PU Logistic Regression at a Fixed Diffusion Time}
\label{app:pu-fixed-t}

Fix a diffusion time $t$ and write
\[
\eta_t(x) := \note{p_{\theta,t}(c \mid x)}, \qquad x \in \mathcal X_t,
\]
for the conditional class posterior at time $t$. Let $P_t$ denote the marginal law of $x_t$, let
\[
\pi_t := P_t(c) \in (0,1),
\]
and let $P_t^+$ denote the conditional law of $x_t$ given $c$.

We observe:
\begin{itemize}
    \item positive samples $X_1^+,\dots,X_N^+ \overset{\mathrm{i.i.d.}}{\sim} P_t^+$;
    \item unlabeled samples $U_1,\dots,U_M$ together with nonnegative importance weights $w_t(U_j)$.
\end{itemize}
Define the normalized weights
\[
\bar w_{t,j} := \frac{w_t(U_j)}{\sum_{k=1}^M w_t(U_k)}.
\]

Let $\mathcal G_t$ be a class of measurable functions $g:\mathcal X_t\to\mathbb R$, and define
\[
\ell_1(z) := \log(1+e^{-z}),
\qquad
\ell_0(z) := \log(1+e^{z}),
\qquad
\sigma(z) := \frac{1}{1+e^{-z}}.
\]

For $g\in\mathcal G_t$, define the weighted empirical PU risk
\begin{align}
\widehat L_{t,M,N}(g)
:=
&\;\pi_t \frac1N \sum_{i=1}^N \ell_1\!\bigl(g(X_i^+)\bigr)
+
\sum_{j=1}^M \bar w_{t,j}\,\ell_0\!\bigl(g(U_j)\bigr)
-
\pi_t \frac1N \sum_{i=1}^N \ell_0\!\bigl(g(X_i^+)\bigr)
\nonumber\\
=\;
&\;-\pi_t \frac1N \sum_{i=1}^N g(X_i^+)
+
\sum_{j=1}^M \bar w_{t,j}\,\ell_0\!\bigl(g(U_j)\bigr).
\label{eq:empirical_pu_risk_fixed_t}
\end{align}
This is the logistic objective of the standard PU learning \citep{pmlr-v37-plessis15},
with the unlabeled empirical average replaced by an importance-weighted one.
Correspondingly, define the population PU risk
\begin{align}
L_t^{\mathrm{PU}}(g)
:=
&\;\pi_t \,\mathbb E_{P_t^+}[\ell_1(g(X))]
+
\mathbb E_{P_t}[\ell_0(g(X))]
-
\pi_t \,\mathbb E_{P_t^+}[\ell_0(g(X))]
\nonumber\\
=\;
&\;-\pi_t \,\mathbb E_{P_t^+}[g(X)]
+
\mathbb E_{P_t}[\ell_0(g(X))].
\label{eq:population_pu_risk_fixed_t}
\end{align}

We impose the following assumptions.

\begin{enumerate}
    \item[(A1)] \textbf{Positive-sample uniform law of large numbers.}
    There exists a function class $\mathcal H_t$ containing
    $\{g : g\in\mathcal G_t\}
    \cup
    \{\ell_1\circ g : g\in\mathcal G_t\}
    \cup
    \{\ell_0\circ g : g\in\mathcal G_t\}$
    such that, as $N\to\infty$,
    \[
    \sup_{h\in\mathcal H_t}
    \left|
    \frac1N\sum_{i=1}^N h(X_i^+) - \mathbb E_{P_t^+}[h(X)]
    \right|
    \to 0
    \qquad \text{a.s.}
    \]

    \item[(A2)] \textbf{Weighted unlabeled uniform convergence.}
    For the same class $\mathcal H_t$, as $M\to\infty$,
    \[
    \sup_{h\in\mathcal H_t}
    \left|
    \sum_{j=1}^M \bar w_{t,j} h(U_j) - \mathbb E_{P_t}[h(X)]
    \right|
    \to 0
    \qquad \text{a.s.}
    \]

    \item[(A3)] \textbf{Realizability of the Bayes logit.}
    The posterior satisfies $\eta_t(x)\in(0,1)$ for $P_t$-a.e. $x$, and the Bayes logit
    $g_t^\star(x) := \log \frac{\eta_t(x)}{1-\eta_t(x)}$
    belongs to $\mathcal G_t$.
\end{enumerate}

Finally, assume that for every $(M,N)$ under consideration, a measurable empirical minimizer exists:
\[
\widehat g_{t,M,N}\in\arg\min_{g\in\mathcal G_t}\widehat L_{t,M,N}(g).
\]
Define the induced probabilistic discriminator by
$\widehat f_{t,M,N}(x):=\sigma(\widehat g_{t,M,N}(x))$.

\begin{theorem}[Consistency of weighted PU logistic discrimination at a fixed diffusion time]
\label{thm:fixed_t_weighted_pu_consistency}
Suppose \textnormal{(A1)--(A3)} hold. Then, along any sequence $(M,N)$ such that $M\to\infty$ and $N\to\infty$,
\[
\|\widehat f_{t,M,N}-\eta_t\|_{L^1(P_t)} \to 0
\qquad \text{a.s.}
\]
\end{theorem}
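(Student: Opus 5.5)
The plan is to reduce to an excess-risk bound for logistic loss and then turn it into an $L^1$ bound via Pinsker's inequality.

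\textbf{Step 1: the population PU risk is the ordinary logistic risk.} Since $\pi_t P_t^+(dx)=\eta_t(x)P_t(dx)$, every $P_t^+$-expectation weighted by $\pi_t$ becomes a $P_t$-expectation weighted by $\eta_t$. Applying this to \eqref{eq:population_pu_risk_fixed_t}, and using $\ell_1(z)-\ell_0(z)=-z$, gives
\[
L_t^{\mathrm{PU}}(g)=\mathbb E_{P_t}\bigl[\eta_t(X)\,\ell_1(g(X))+(1-\eta_t(X))\,\ell_0(g(X))\bigr].
\]
Pointwise, the integrand equals the binary entropy of $\eta_t$ plus $\mathrm{KL}\bigl(\mathrm{Ber}(\eta_t)\,\|\,\mathrm{Ber}(\sigma(g))\bigr)$. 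The entropy part is the same for every $g$. Hence, for $g_t^\star$ from (A3),
\[
L_t^{\mathrm{PU}}(g)-L_t^{\mathrm{PU}}(g_t^\star)=\mathbb E_{P_t}\bigl[\mathrm{KL}(\mathrm{Ber}(\eta_t)\,\|\,\mathrm{Ber}(\sigma(g)))\bigr].
\]
Pinsker's inequality gives $\mathrm{KL}\ge 2|\eta_t-\sigma(g)|^2$ pointwise. Jensen's inequality then yields
\[
\|\sigma(g)-\eta_t\|_{L^1(P_t)}^2\le \tfrac12\bigl(L_t^{\mathrm{PU}}(g)-L_t^{\mathrm{PU}}(g_t^\star)\bigr).
\]

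\textbf{Step 2: uniform convergence of the empirical risk.} By the simplified form \eqref{eq:empirical_pu_risk_fixed_t},
\[
\sup_{g\in\mathcal G_t}\bigl|\widehat L_{t,M,N}(g)-L_t^{\mathrm{PU}}(g)\bigr|
\le \pi_t\sup_{h\in\mathcal H_t}\Bigl|\tfrac1N\textstyle\sum_i h(X_i^+)-\mathbb E_{P_t^+}h\Bigr|
+\sup_{h\in\mathcal H_t}\Bigl|\textstyle\sum_j\bar w_{t,j}h(U_j)-\mathbb E_{P_t}h\Bigr|.
\]
This holds because both $g$ and $\ell_0\circ g$ lie in $\mathcal H_t$. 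By (A1) and (A2), the right-hand side tends to $0$ on the intersection of two probability-one events.

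\textbf{Step 3: the standard argmin argument.} Since $g_t^\star\in\mathcal G_t$, the empirical minimizer satisfies $\widehat L_{t,M,N}(\widehat g_{t,M,N})\le\widehat L_{t,M,N}(g_t^\star)$. Therefore
\[
L_t^{\mathrm{PU}}(\widehat g_{t,M,N})-L_t^{\mathrm{PU}}(g_t^\star)\le 2\sup_{g\in\mathcal G_t}\bigl|\widehat L_{t,M,N}-L_t^{\mathrm{PU}}\bigr|\to0\quad\text{a.s.}
\]
Combining this with Step 1 gives $\|\widehat f_{t,M,N}-\eta_t\|_{L^1(P_t)}\to0$ a.s. along any sequence with $M,N\to\infty$. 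No rate or coupling between $M$ and $N$ is needed, because the two suprema are controlled separately.

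\textbf{Main obstacle: integrability.} The delicate point is making sure all the quantities above are finite, so that the subtractions are legitimate. In particular we need $\mathbb E_{P_t}|g_t^\star|<\infty$ and $L_t^{\mathrm{PU}}(g_t^\star)<\infty$. I would derive this from (A1)/(A2): a uniform law of large numbers over $\mathcal H_t$ implicitly requires finite expectations of every $h\in\mathcal H_t$, and $g_t^\star$ and $\ell_0\circ g_t^\star$ both belong to $\mathcal H_t$. I would state this explicitly in the proof. I would also check that the pointwise KL identity holds $P_t$-a.e. using $\eta_t\in(0,1)$ a.e. from (A3); the boundary cases $\eta_t\in\{0,1\}$ occur only on a null set and are handled by the usual convention $0\log 0=0$.
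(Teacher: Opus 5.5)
Your proposal is correct and matches the paper's proof essentially step for step: you bound $\sup_{g\in\mathcal G_t}|\widehat L_{t,M,N}(g)-L_t^{\mathrm{PU}}(g)|$ using (A1)--(A2), rewrite $L_t^{\mathrm{PU}}$ as the logistic cross-entropy so the excess risk is an expected Bernoulli KL bounded below by $2\|\sigma(g)-\eta_t\|_{L^1(P_t)}^2$ via Pinsker and Jensen, and compare the empirical minimizer with $g_t^\star$ to bound the excess risk by $2\Delta_{M,N}$. Your integrability remark is a harmless refinement that the paper leaves implicit in (A1)--(A2).
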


\begin{proof}
Let
\begin{align*}
\Delta_{M,N}
:=
&\;\sup_{g\in\mathcal G_t}
\bigl|
\widehat L_{t,M,N}(g)-L_t^{\mathrm{PU}}(g)
\bigr|
\\
\le
&\;\pi_t\sup_{g\in\mathcal G_t}
\left|
\frac1N\sum_{i=1}^N g(X_i^+) - \mathbb E_{P_t^+}[g(X)]
\right|
\;+
\sup_{g\in\mathcal G_t}
\left|
\sum_{j=1}^M \bar w_{t,j}\ell_0(g(U_j)) - \mathbb E_{P_t}[\ell_0(g(X))]
\right|.
\end{align*}
By Assumptions~(A1) and~(A2), $\Delta_{M,N}\to 0$ a.s.

We first rewrite the population PU risk. 
Let $P_t^-$ denote the conditional law of $x_t$ given $\neg c$.
Since
$P_t=\pi_t P_t^+ + (1-\pi_t)P_t^-$ and
$\pi_t\,dP_t^+=\eta_t\,dP_t$, we have
\[
L_t^{\mathrm{PU}}(g)
=
\mathbb E_{P_t}\!\left[
\eta_t\,\ell_1(g) + (1-\eta_t)\,\ell_0(g)
\right].
\]
Writing $q_g=\sigma(g)$ and using
$\ell_1(g)=-\log q_g$, $\ell_0(g)=-\log(1-q_g)$,
this becomes
\[
L_t^{\mathrm{PU}}(g)
=
\mathbb E_{P_t}\!\left[
-\eta_t\log q_g - (1-\eta_t)\log(1-q_g)
\right].
\]
Hence
\[
L_t^{\mathrm{PU}}(g)-L_t^{\mathrm{PU}}(g_t^\star)
=
\mathbb E_{P_t}\!\left[
\mathrm{KL}(\mathrm{Bern}(\eta_t)\,\|\,\mathrm{Bern}(q_g))
\right],
\]
and by Pinsker's inequality and Jensen's inequality,
\[
L_t^{\mathrm{PU}}(g)-L_t^{\mathrm{PU}}(g_t^\star)
\ge
2\,\mathbb E_{P_t}\!\left[|\eta_t-q_g|^2\right]
\ge
2\,\|q_g-\eta_t\|_{L^1(P_t)}^2.
\]

Next, since $\widehat g_{t,M,N}$ minimizes $\widehat L_{t,M,N}$ over $\mathcal G_t$,
\[
\widehat L_{t,M,N}(\widehat g_{t,M,N})
\le
\widehat L_{t,M,N}(g_t^\star).
\]
Hence
\begin{align*}
L_t^{\mathrm{PU}}(\widehat g_{t,M,N})-L_t^{\mathrm{PU}}(g_t^\star)
&=
\Bigl(L_t^{\mathrm{PU}}(\widehat g_{t,M,N})-\widehat L_{t,M,N}(\widehat g_{t,M,N})\Bigr)
\\
&\quad+
\Bigl(\widehat L_{t,M,N}(\widehat g_{t,M,N})-\widehat L_{t,M,N}(g_t^\star)\Bigr)
\\
&\quad+
\Bigl(\widehat L_{t,M,N}(g_t^\star)-L_t^{\mathrm{PU}}(g_t^\star)\Bigr)
\\
&\le
\Bigl|L_t^{\mathrm{PU}}(\widehat g_{t,M,N})-\widehat L_{t,M,N}(\widehat g_{t,M,N})\Bigr|
+
\Bigl|\widehat L_{t,M,N}(g_t^\star)-L_t^{\mathrm{PU}}(g_t^\star)\Bigr|
\\
&\le
2\sup_{g\in\mathcal G_t}
\bigl|
\widehat L_{t,M,N}(g)-L_t^{\mathrm{PU}}(g)
\bigr|
=
2\Delta_{M,N}.
\end{align*}
Since $\Delta_{M,N}\to 0$ a.s., it follows that
\[
L_t^{\mathrm{PU}}(\widehat g_{t,M,N})-L_t^{\mathrm{PU}}(g_t^\star)\to 0
\qquad\text{a.s.}
\]

On the other hand, from the lower bound established above, applied with
$g=\widehat g_{t,M,N}$, we have
\[
L_t^{\mathrm{PU}}(\widehat g_{t,M,N})-L_t^{\mathrm{PU}}(g_t^\star)
\ge
2\,\|q_{\widehat g_{t,M,N}}-\eta_t\|_{L^1(P_t)}^2
=
2\,\|\widehat f_{t,M,N}-\eta_t\|_{L^1(P_t)}^2.
\]
Therefore,
\[
0
\le
2\,\|\widehat f_{t,M,N}-\eta_t\|_{L^1(P_t)}^2
\le
L_t^{\mathrm{PU}}(\widehat g_{t,M,N})-L_t^{\mathrm{PU}}(g_t^\star)
\to 0
\qquad\text{a.s.}
\]
and hence
\[
\|\widehat f_{t,M,N}-\eta_t\|_{L^1(P_t)}\to 0
\qquad\text{a.s.}
\]
\end{proof}

\subsection{Verification of Dual-Weight SMC Construction}
\label{app:verify-a2}

This subsection establishes the theoretical validity of the dual-weight SMC construction.
We show that, under a regularized common proposal, the resulting particle system simultaneously provides:
(i) a consistent approximation of the unconditional marginal $p_{\theta,t}$, and
(ii) a particle approximation of the tilted targets $\nu_t$ associated with the negative-alignment tempering.

The key ingredients are:
a common proposal yielding uniformly bounded incremental weights,
a telescoping structure of the associated Feynman--Kac models,
and a ratio transform linking the tilted and unconditional marginals.

These results together imply that the particle system supplies the weighted empirical measure required in Assumption~(A2).

We introduce the following assumption, used throughout the remainder of the appendix.

\begin{enumerate}
    \item[(A4)] \textbf{Non-degeneracy of the conditional class posterior.}
    There exists $\delta\in(0,1)$ such that for all $t\in\{0,\dots,T\}$,
    \[
    \delta \le \eta_t(x) \le 1-\delta
    \qquad\text{for } p_{\theta,t}\text{-a.e.\ }x.
    \]
\end{enumerate}

\subsubsection{Regularized Common Proposal}
\label{app:regularized-proposal}

\paragraph{Reverse kernels and tilted targets.}
Fix a diffusion horizon $T\in\mathbb N$. For each $t\in\{0,1,\dots,T\}$, let
$p_{\theta,t}$ denote the unconditional marginal law at time $t$,
and define the unconditional reverse diffusion kernel by
\[
K_{t-1}(x_t,dx_{t-1}) := p_\theta(dx_{t-1}\mid x_t),
\qquad t=1,\dots,T.
\]
Let $\{\lambda_t\}_{t=0}^T$ be a deterministic tempering schedule satisfying
$0=\lambda_T \le \lambda_{T-1} \le \cdots \le \lambda_0 < \infty$.
Fix a measurable version $\eta_t^\sharp$ of $\eta_t$, and define the clipped posterior
\[
\bar\eta_t(x)
:=
\min\{1-\delta,\max\{\eta_t^\sharp(x),\delta\}\} 
\quad\in [\delta, 1-\delta]
\quad\text{for all } x.
\]
Define
\[
\omega_t(x) := \bigl(1-\bar\eta_t(x)\bigr)^{\lambda_t},
\qquad
\nu_t(dx)
:=
\frac{\omega_t(x)\,p_{\theta,t}(dx)}
{\int \omega_t(x)\,p_{\theta,t}(dx)}.
\]
By Assumption~(A4), we have $\bar\eta_t(x)=\eta_t(x)$ for $p_{\theta,t}$-a.e.\ $x$,
so the definition of $\nu_t$ is unchanged.
Since $\lambda_T=0$, we have $\omega_T\equiv 1$ and $\nu_T = p_{\theta,T}$.
Under Assumption~(A4),
\begin{equation}
\label{eq:omega-bounds}
\delta^{\lambda_0}\le \omega_t(x)\le 1
\qquad\text{for all } x.
\end{equation}

\paragraph{Regularized proposal.}
Let $\widetilde M_{t-1}(x_t,dx_{t-1})$ be any measurable Markov kernel (the nominal guided proposal).
Fix $\rho\in(0,1)$ and define the \emph{regularized common proposal}
\begin{equation}
\label{eq:regularized-common-proposal}
M_{t-1}^{\mathrm{reg}}(x_t,dx_{t-1})
:=
(1-\rho)\,\widetilde M_{t-1}(x_t,dx_{t-1})
+
\rho\,K_{t-1}(x_t,dx_{t-1}).
\end{equation}

\begin{lemma}[Regularized common proposal]
\label{prop:existence-common-proposal}
For any nominal proposal $\widetilde M_{t-1}$ and any $\rho\in(0,1)$,
the mixture kernel $M_{t-1}^{\mathrm{reg}}$ satisfies
\[
K_{t-1}(x_t,\cdot) \ll M_{t-1}^{\mathrm{reg}}(x_t,\cdot),
\qquad
\frac{dK_{t-1}(x_t,\cdot)}{dM_{t-1}^{\mathrm{reg}}(x_t,\cdot)}(x_{t-1})
\le \rho^{-1}
\]
for $M_{t-1}^{\mathrm{reg}}(x_t,\cdot)$-a.e.\ $x_{t-1}$ and every $x_t$.
\end{lemma}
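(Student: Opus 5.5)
The plan is to prove both claims directly from the mixture structure of $M_{t-1}^{\mathrm{reg}}$, pointwise in $x_t$. No earlier result is needed beyond the definition~\eqref{eq:regularized-common-proposal}. Fix $x_t$ and abbreviate $K:=K_{t-1}(x_t,\cdot)$, $\widetilde M:=\widetilde M_{t-1}(x_t,\cdot)$ and $M:=M_{t-1}^{\mathrm{reg}}(x_t,\cdot)=(1-\rho)\widetilde M+\rho K$. All three are probability measures on $\mathcal X_{t-1}$, and since $\widetilde M\ge 0$ we have the setwise domination
\[
\rho\,K(A)\;\le\;M(A)\qquad\text{for every measurable }A.
\]

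\textbf{Absolute continuity.} If $M(A)=0$, the domination gives $\rho K(A)\le 0$. Because $\rho>0$, this forces $K(A)=0$, so $K\ll M$.

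\textbf{Density bound.} By Radon--Nikodym, let $h:=dK/dM$, a nonnegative measurable function. Set $A:=\{h>\rho^{-1}\}$ and suppose, for contradiction, that $M(A)>0$. Then
\[
K(A)=\int_A h\,dM>\rho^{-1}M(A),
\]
which contradicts $\rho K(A)\le M(A)$. Hence $M(A)=0$, that is, $h\le\rho^{-1}$ holds $M$-a.e. A slightly cleaner alternative is to note that $\rho K\le M$ means $M-\rho K$ is a nonnegative measure; then $\int_B(\rho h-1)\,dM\le 0$ for every measurable $B$, and taking $B=A$ again forces $M(A)=0$.

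\textbf{Main obstacle.} The only delicate point is measurability and uniformity in $x_t$: the Radon--Nikodym derivative should be chosen as a jointly measurable function of $(x_t,x_{t-1})$ so that it can later serve as an importance weight. The statement itself is pointwise for every $x_t$, so the argument above suffices as written. If joint measurability is wanted, I would invoke the standard existence of a jointly measurable version of the density of one kernel with respect to a dominating kernel (valid on countably generated spaces such as $\mathbb R^d$). Alternatively, for the Gaussian kernels actually used, the density is explicit:
\[
h=\frac{K}{(1-\rho)\widetilde M+\rho K}=\frac{1}{(1-\rho)\,r_t^{-1}+\rho}\le\rho^{-1},
\]
where $r_t$ is the density ratio from \eqref{eq:wq-recursion}.
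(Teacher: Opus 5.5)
Your proposal is correct and follows the paper's proof. Both rest on the setwise domination $\rho\,K_{t-1}(x_t,\cdot)\le M_{t-1}^{\mathrm{reg}}(x_t,\cdot)$, from which absolute continuity and the bound $dK_{t-1}/dM_{t-1}^{\mathrm{reg}}\le\rho^{-1}$ follow; you spell out the Radon--Nikodym step that the paper states as immediate, and your remarks on joint measurability and the explicit Gaussian form $1/\bigl((1-\rho)r_t^{-1}+\rho\bigr)$ are correct extras not needed for the pointwise statement.
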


\begin{proof}
By construction, $M_{t-1}^{\mathrm{reg}}(x_t,A)\ge \rho\,K_{t-1}(x_t,A)$ for every measurable set $A$.
Absolute continuity is immediate, and the Radon--Nikodym bound follows.
\end{proof}

\subsubsection{Feynman--Kac Models and Path Measures}
\label{app:fk-models}

Define the unconditional and tilted incremental weights on the common proposal. For $t=1,\dots,T$, let
\begin{equation}
\label{eq:G-empty}
G_{t-1}^{\emptyset}(x_t,x_{t-1})
:=
\frac{dK_{t-1}(x_t,\cdot)}{dM_{t-1}^{\mathrm{reg}}(x_t,\cdot)}(x_{t-1}),
\end{equation}
and
\begin{equation}
\label{eq:G-q}
G_{t-1}^{q}(x_t,x_{t-1})
:=
G_{t-1}^{\emptyset}(x_t,x_{t-1})
\frac{\omega_{t-1}(x_{t-1})}{\omega_t(x_t)}.
\end{equation}

\begin{lemma}[Uniform boundedness of both potentials]
\label{lem:bounded-potentials}
Under Assumption~\textnormal{(A4)},
\[
0 \le G_{t-1}^{\emptyset}(x_t,x_{t-1}) \le \rho^{-1},
\qquad
0 \le G_{t-1}^{q}(x_t,x_{t-1}) \le \rho^{-1}\delta^{-\lambda_0},
\]
for all $t=1,\dots,T$, for all $x_t$, and for
$M_{t-1}^{\mathrm{reg}}(x_t,\cdot)$-a.e.\ $x_{t-1}$.
\end{lemma}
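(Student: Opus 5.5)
The plan is to derive both bounds directly from Lemma~\ref{prop:existence-common-proposal} and the two-sided bound \eqref{eq:omega-bounds} on the tilting factors $\omega_t$.

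\textbf{Step 1: the unconditional potential.} Lemma~\ref{prop:existence-common-proposal} gives $K_{t-1}(x_t,\cdot)\ll M_{t-1}^{\mathrm{reg}}(x_t,\cdot)$. Hence the Radon--Nikodym derivative in \eqref{eq:G-empty} is well defined and can be taken nonnegative. The same lemma bounds it by $\rho^{-1}$ for $M_{t-1}^{\mathrm{reg}}(x_t,\cdot)$-a.e.\ $x_{t-1}$. To make this airtight, I would fix a version of the derivative for each $x_t$. If joint measurability in $(x_t,x_{t-1})$ is needed, I would use the standard kernel version of Radon--Nikodym and truncate at $\rho^{-1}$ on the null set where it exceeds this value. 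Truncation changes nothing almost everywhere, so $0\le G^{\emptyset}_{t-1}\le\rho^{-1}$.

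\textbf{Step 2: the tilted potential.} By \eqref{eq:G-q}, $G^{q}_{t-1}$ equals $G^{\emptyset}_{t-1}$ times $\omega_{t-1}(x_{t-1})/\omega_t(x_t)$. The bound \eqref{eq:omega-bounds} holds for \emph{all} $x$, not just a.e., because $\omega_t$ is built from the clipped posterior $\bar\eta_t\in[\delta,1-\delta]$. From it we get $\omega_{t-1}(x_{t-1})\le 1$.

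For the denominator, $\omega_t(x_t)=(1-\bar\eta_t(x_t))^{\lambda_t}\ge\delta^{\lambda_t}$. Since $\delta<1$ and the schedule satisfies $\lambda_t\le\lambda_0$, we have $\delta^{\lambda_t}\ge\delta^{\lambda_0}$. In particular the denominator is strictly positive everywhere, so the ratio is well defined and lies in $[0,\delta^{-\lambda_0}]$. Multiplying this ratio by the bound from Step~1 gives $0\le G^{q}_{t-1}\le\rho^{-1}\delta^{-\lambda_0}$ on the same full-measure set.

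\textbf{Main obstacle.} There is no real difficulty in this argument. The only point needing care is that the bound on $\omega_t$ must hold pointwise rather than merely $p_{\theta,t}$-a.e. The particle $x_t$ is distributed according to the proposal path, not $p_{\theta,t}$, so an a.e.\ bound with respect to $p_{\theta,t}$ would not suffice. This is exactly why the clipped posterior $\bar\eta_t$ was introduced, and I would state it explicitly in the proof.
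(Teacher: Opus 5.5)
Your proposal is correct and follows the paper's proof. The paper likewise reads the bound on $G^{\emptyset}_{t-1}$ off Lemma~\ref{prop:existence-common-proposal}, and gets the bound on $G^{q}_{t-1}$ from $\omega_{t-1}\le 1$ and $\omega_t^{-1}\le\delta^{-\lambda_0}$ in \eqref{eq:omega-bounds}; your added care about the Radon--Nikodym version and the pointwise validity of the $\omega_t$ bound (via the clipped posterior $\bar\eta_t$) is sound and only makes explicit what the paper leaves implicit.
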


\begin{proof}
The bound on $G^{\emptyset}$ is \autoref{prop:existence-common-proposal}.
By Eq.~\eqref{eq:omega-bounds},
$\omega_{t-1}(x_{t-1})\le 1$ and $\omega_t(x_t)^{-1}\le \delta^{-\lambda_0}$,
giving
$G_{t-1}^{q}\le \rho^{-1}\delta^{-\lambda_0}$.
\end{proof}

\paragraph{Path measures.}
Define the common proposal path measure
\[
\mathbf{M}(dx_{0:T})
:=
p_{\theta,T}(dx_T)\prod_{s=1}^T M_{s-1}^{\mathrm{reg}}(x_s,dx_{s-1}),
\]
and the unnormalized path weights
\[
W_t^{a}(x_{t:T})
:=
\prod_{s=t+1}^T G_{s-1}^{a}(x_s,x_{s-1}),
\qquad a\in\{\emptyset,q\},
\]
for $t=0,\dots,T$ (empty products equal $1$).
Define the associated unnormalized path measures
$\mathbf{\Gamma}_t^{a}(dx_{t:T})
:=
W_t^{a}(x_{t:T})\,
p_{\theta,T}(dx_T)\prod_{s=t+1}^T M_{s-1}^{\mathrm{reg}}(x_s,dx_{s-1})$.

\begin{proposition}[Identification of the path models]
\label{prop:path-identification}
For every $t \in \{0,\dots,T\}$:
\begin{enumerate}
    \item[\textnormal{(i)}] the $x_t$-marginal of\/ $\mathbf{\Gamma}_t^{\emptyset}$ is $p_{\theta,t}$;
    \item[\textnormal{(ii)}] the $x_t$-marginal of\/ $\mathbf{\Gamma}_t^{q}$ is $\omega_t(x)\,p_{\theta,t}(dx)$,
    so the normalized marginal is $\nu_t$.
\end{enumerate}
\end{proposition}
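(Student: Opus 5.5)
Both parts follow by backward induction on $t$ from $T$ down to $0$, using the product structure of $\mathbf{\Gamma}_t^{a}$. The key observation is a one-step identity. For any nonnegative measurable $h$,
\[
\int G_{t-1}^{\emptyset}(x_t,x_{t-1})\,h(x_{t-1})\,M_{t-1}^{\mathrm{reg}}(x_t,dx_{t-1})=\int h(x_{t-1})\,K_{t-1}(x_t,dx_{t-1}),
\]
which is the definition of the Radon--Nikodym derivative in \eqref{eq:G-empty}. Its existence is guaranteed by the absolute continuity in \autoref{prop:existence-common-proposal}. Because $\mathbf{\Gamma}_{t-1}^{a}(dx_{t-1:T})=\mathbf{\Gamma}_t^{a}(dx_{t:T})\,G_{t-1}^{a}(x_t,x_{t-1})\,M_{t-1}^{\mathrm{reg}}(x_t,dx_{t-1})$, integrating out $x_{t:T}$ (Tonelli applies since everything is nonnegative) reduces the $x_{t-1}$-marginal at step $t-1$ to the $x_t$-marginal at step $t$ pushed through this identity.

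\textbf{Part (i).} At $t=T$ the weight is the empty product, so the marginal is $p_{\theta,T}$. Suppose the $x_t$-marginal of $\mathbf{\Gamma}_t^{\emptyset}$ is $p_{\theta,t}$. Then for nonnegative $h$,
\[
\mathbf{\Gamma}_{t-1}^{\emptyset}(h)=\int p_{\theta,t}(dx_t)\int K_{t-1}(x_t,dx_{t-1})\,h(x_{t-1})=p_{\theta,t-1}(h).
\]
The last equality holds because $p_{\theta,t-1}$ is, by definition, the marginal of the pretrained reverse chain started from $p_{\theta,T}$ and driven by the kernels $K$. This completes the induction.

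\textbf{Part (ii).} Telescoping gives
\[
W_t^{q}=W_t^{\emptyset}\prod_{s=t+1}^T\frac{\omega_{s-1}(x_{s-1})}{\omega_s(x_s)}=W_t^{\emptyset}\,\frac{\omega_t(x_t)}{\omega_T(x_T)}=W_t^{\emptyset}\,\omega_t(x_t),
\]
since $\omega_T\equiv1$. The ratios are well defined because $\omega_s\ge\delta^{\lambda_0}>0$ everywhere by \eqref{eq:omega-bounds}; this is where the clipping $\bar\eta_t$ is used. As a result, $\mathbf{\Gamma}_t^{q}(dx_{t:T})=\omega_t(x_t)\,\mathbf{\Gamma}_t^{\emptyset}(dx_{t:T})$, and since $\omega_t$ depends only on $x_t$, part (i) gives the $x_t$-marginal $\omega_t(x)\,p_{\theta,t}(dx)$. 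This measure has finite mass, bounded above by $1$ and below by $\delta^{\lambda_0}$, so it normalizes to $\nu_t$.

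The only delicate point is measurability and versions. $G^{\emptyset}$ has to be a jointly measurable version of the Radon--Nikodym derivative in $(x_t,x_{t-1})$, and it is only defined $M^{\mathrm{reg}}$-a.e. I would fix such a version; for mixture kernels one exists, e.g.\ as the density ratio when both kernels have densities, as in the Gaussian case. Modifications on null sets do not change $\mathbf{\Gamma}$, so the identities above hold regardless of which version is chosen.
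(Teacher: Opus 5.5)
Your proof is correct and follows the paper's argument. Your step-by-step induction through the Radon--Nikodym identity is a careful unpacking of the paper's one-line "cancellation of proposal densities" in part (i), and part (ii) is the same telescoping $W_t^{q}=W_t^{\emptyset}\,\omega_t(x_t)$ using $\omega_T\equiv 1$. Your remarks on jointly measurable versions and on $\omega_s\ge\delta^{\lambda_0}$ make explicit details the paper leaves implicit.
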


\begin{proof}
\emph{Part~(i).}
Cancellation of proposal densities gives
\[
\mathbf{\Gamma}_t^{\emptyset}(dx_{t:T})
=
p_{\theta,T}(dx_T)\prod_{s=t+1}^T K_{s-1}(x_s,dx_{s-1}),
\]
the unconditional reverse diffusion path law. Its $x_t$-marginal is $p_{\theta,t}$.

\emph{Part~(ii).}
Using the definition~\eqref{eq:G-q}, the product of tilted-to-unconditional ratios telescopes:
\[
\prod_{s=t+1}^T \frac{\omega_{s-1}(x_{s-1})}{\omega_s(x_s)}
=
\frac{\omega_t(x_t)}{\omega_T(x_T)}
=
\omega_t(x_t),
\]
since $\omega_T\equiv 1$.
Therefore
$W_t^{q}(x_{t:T}) = W_t^{\emptyset}(x_{t:T})\,\omega_t(x_t)$,
and taking the $x_t$-marginal yields $\omega_t(x_t)\,p_{\theta,t}(dx_t)$.
\end{proof}

\paragraph{Ratio recovery.}
Define $R_t(x) := \omega_t(x)^{-1}$.
By Eq.~\eqref{eq:omega-bounds}, $1 \le R_t(x) \le \delta^{-\lambda_0}$.

\begin{lemma}[Ratio identity]
\label{lem:ratio-identity}
For every bounded measurable $\varphi$,
\begin{equation}
\label{eq:ratio-identity}
p_{\theta,t}(\varphi)
=
\frac{\nu_t(R_t \varphi)}{\nu_t(R_t)}.
\end{equation}
\end{lemma}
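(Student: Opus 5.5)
The identity is a change-of-measure computation that uses only the definition of $\nu_t$ and the bounds~\eqref{eq:omega-bounds}. Write $Z_t := \int \omega_t(x)\,p_{\theta,t}(dx)$, so that $\nu_t(dx) = \omega_t(x)\,p_{\theta,t}(dx)/Z_t$. Because $\delta^{\lambda_0}\le\omega_t\le 1$ holds everywhere, not merely almost everywhere, thanks to the clipping in $\bar\eta_t$, we have $Z_t\in[\delta^{\lambda_0},1]$. In particular $Z_t>0$, so $\nu_t$ is a well-defined probability measure.

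Fix a bounded measurable $\varphi$. The function $R_t\varphi$ is bounded, since $|R_t\varphi|\le\delta^{-\lambda_0}\|\varphi\|_\infty$, and measurable, since $\omega_t$ is a measurable composition of $\eta_t^\sharp$ with continuous maps. Hence $\nu_t(R_t\varphi)$ is finite. Expanding the definition of $\nu_t$ gives
\[
\nu_t(R_t\varphi)=\frac{1}{Z_t}\int \omega_t(x)^{-1}\varphi(x)\,\omega_t(x)\,p_{\theta,t}(dx)=\frac{p_{\theta,t}(\varphi)}{Z_t}.
\]
Here the cancellation $\omega_t^{-1}\omega_t=1$ is valid pointwise because $\omega_t>0$ everywhere.

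Applying the same computation with $\varphi\equiv 1$ gives $\nu_t(R_t)=1/Z_t$, which lies in $[1,\delta^{-\lambda_0}]$ and is therefore strictly positive. Dividing the two displays, the unknown normalizing constant $Z_t$ cancels, and this yields~\eqref{eq:ratio-identity}.

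There is no genuine obstacle here. The only points needing care are the strict positivity and boundedness of $\omega_t$, which justify both the cancellation and the finiteness of all integrals, and the observation that the identity does not require knowing $Z_t$. That last point is what later allows the self-normalized tilted particle system to recover $p_{\theta,t}$ through $R_t$.
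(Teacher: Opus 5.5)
Your proof is correct and follows the same route as the paper: expand $\nu_t(R_t\varphi)$ by the definition of $\nu_t$, cancel $R_t\omega_t\equiv 1$, apply the same computation to $\varphi\equiv 1$, and divide so that the normalizing constant cancels. Your extra checks that $\delta^{\lambda_0}\le\omega_t\le 1$ holds everywhere and that $Z_t>0$ make explicit what the paper leaves implicit.
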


\begin{proof}
By definition of $\nu_t$,
$\nu_t(R_t\varphi)
=
\frac{\int R_t(x)\varphi(x)\,\omega_t(x)\,p_{\theta,t}(dx)}
{\int \omega_t(x)\,p_{\theta,t}(dx)}
=
\frac{\int \varphi(x)\,p_{\theta,t}(dx)}
{\int \omega_t(x)\,p_{\theta,t}(dx)}$
since $R_t\,\omega_t\equiv 1$.
Similarly, $\nu_t(R_t) = {1}/{\int \omega_t\,dp_{\theta,t}}$.
Taking the ratio gives the identity~\eqref{eq:ratio-identity}.
\end{proof}

\subsubsection{Particle Approximation and Uniform Convergence}
\label{app:particle-approx}

We introduce one further regularity assumption:

\begin{enumerate}
    \item[(A5)] \textbf{\note{Total} boundedness of the class in (A1)--(A2).}
    For the same class $\mathcal H_t$ as in Assumptions~\textnormal{(A1)}--\textnormal{(A2)},
    $\mathcal H_t$ is uniformly bounded and totally bounded under the supremum norm
    $\|\cdot\|_\infty$.
\end{enumerate}

\emph{Remark}.
Assumption~\textnormal{(A4)} implies that the true posterior satisfies
$\eta_t(x)\in[\delta,1-\delta]$, so the Bayes logit
$g_t^\star(x)=\log\frac{\eta_t(x)}{1-\eta_t(x)}$
is uniformly bounded.
Hence, without excluding the target minimizer, one may work with a uniformly bounded hypothesis class $\mathcal G_t$ containing $g_t^\star$.
For such a class, the logistic loss transforms $\ell_1\circ g$ and $\ell_0\circ g$ are also uniformly bounded,
and it is therefore natural to impose total boundedness on the enlarged class $\mathcal H_t$ used in Assumptions~\textnormal{(A1)}--\textnormal{(A2)}.

\paragraph{SMC convergence for the tilted model.}
Consider an SMC sampler with multinomial resampling targeting the Feynman--Kac model with proposal $\{M_{t-1}^{\mathrm{reg}}\}$ and potentials $\{G_{t-1}^q\}$.
Let
$\nu_t^M := \sum_{j=1}^M \bar w_{q,t}^j \,\delta_{X_t^j}$
denote the normalized particle approximation of $\nu_t$.
Since the potentials are uniformly bounded (\autoref{lem:bounded-potentials}),
the standard strong law of large numbers (LLN) for particle filters with multinomial resampling
\citep[Prop.~11.4]{chopin2020introduction} gives,
for each fixed $t$ and each bounded measurable $\varphi$,
\[
\nu_t^M(\varphi)
\;\xrightarrow{\mathrm{a.s.}}\;
\nu_t(\varphi).
\]

To verify Assumption~\textnormal{(A2)}, we require convergence that is uniform over a class of test functions.
We therefore strengthen the above pointwise convergence to a uniform one over $\mathcal H_t$.

\begin{lemma}[Uniform SMC convergence over a totally bounded class]
\label{lem:uniform_smc}
Under Assumption~\textnormal{(A5)},
\[
\sup_{h\in\mathcal H_t}
|\nu_t^M(h)-\nu_t(h)|
\xrightarrow{\mathrm{a.s.}} 0.
\]
\end{lemma}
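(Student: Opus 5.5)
The approach is the standard finite-net ($\varepsilon$-net) argument that upgrades the pointwise SMC strong law to uniform convergence over $\mathcal H_t$. By (A5), $\mathcal H_t$ is uniformly bounded and totally bounded in $\|\cdot\|_\infty$. So for each $k\in\mathbb N$ we fix a finite $1/k$-net $\{h_{k,1},\dots,h_{k,n_k}\}\subset\mathcal H_t$. This means every $h\in\mathcal H_t$ has some index $i$ with $\|h-h_{k,i}\|_\infty\le 1/k$. If the net elements were not chosen inside $\mathcal H_t$, we may take them as bounded measurable functions instead; this only changes the constant.

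The key steps, in order, are as follows.

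\textbf{Step 1.} Apply the pointwise LLN stated just above the lemma, \citep[Prop.~11.4]{chopin2020introduction}, which is valid because the potentials are bounded by \autoref{lem:bounded-potentials}. It gives $\nu_t^M(h_{k,i})\to\nu_t(h_{k,i})$ almost surely for each pair $(k,i)$. There are only countably many such pairs, so intersecting the corresponding probability-one events yields a single event $\Omega_0$ with $\mathbb P(\Omega_0)=1$ on which all of these convergences hold simultaneously.

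\textbf{Step 2.} Use that $\nu_t^M$ and $\nu_t$ are both probability measures. Indeed, $\nu_t^M$ is a convex combination of Dirac masses because its weights are normalized. Hence for any $h$ and its net neighbour $h_{k,i}$,
\[
|\nu_t^M(h)-\nu_t(h)|\le |\nu_t^M(h-h_{k,i})|+|\nu_t^M(h_{k,i})-\nu_t(h_{k,i})|+|\nu_t(h_{k,i}-h)|\le \tfrac{2}{k}+\max_{1\le i\le n_k}|\nu_t^M(h_{k,i})-\nu_t(h_{k,i})|.
\]
Taking the supremum over $h\in\mathcal H_t$ and then $\limsup_{M\to\infty}$ on $\Omega_0$ gives a bound of $2/k$, because the maximum runs over a finite set. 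Letting $k\to\infty$ concludes the proof.

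The only delicate point is measurability of the supremum over a possibly uncountable class. I would handle it by noting that the bound in Step 2 dominates the supremum by a countable expression. So the statement holds in the sense that the outer probability of non-convergence is zero, or, with a separable version of $\mathcal H_t$, literally almost surely. A secondary check is that the particle LLN applies for the fixed, frozen potentials $G^q_{t-1}$. This holds because, in the proof-oriented variant, the clipped $\bar\eta_t$ is deterministic, so no data-dependent classifier enters $\nu_t^M$ here.
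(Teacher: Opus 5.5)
Your proposal is correct and follows the paper's proof: it takes a finite $\varepsilon$-net from (A5), applies the pointwise SMC strong law on the net, and uses the $2\varepsilon$ triangle-inequality bound, which relies on $\nu_t^M$ and $\nu_t$ being probability measures. Choosing $\varepsilon=1/k$ and intersecting countably many full-measure events makes explicit what the paper leaves implicit when it lets $\varepsilon\downarrow 0$, and your measurability worry is moot: $h\mapsto|\nu_t^M(h)-\nu_t(h)|$ is $2$-Lipschitz in $\|\cdot\|_\infty$ and a totally bounded class is separable, so the supremum equals a supremum over a countable dense subset.
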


\begin{proof}
Let $\varepsilon>0$ and take a finite $\varepsilon$-net
$\{h_1,\dots,h_K\}\subset\mathcal H_t$ in $\|\cdot\|_\infty$.
For each $h\in\mathcal H_t$, choose $h_i$ with $\|h-h_i\|_\infty\le\varepsilon$. Then
\[
\sup_{h\in\mathcal H_t}
|\nu_t^M(h)-\nu_t(h)|
\le
\max_{1\le i\le K}
|\nu_t^M(h_i)-\nu_t(h_i)|
+2\varepsilon.
\]
Each term converges to zero almost surely by the SMC LLN.
Since the maximum is over finitely many $i$, it also converges to zero almost surely.
Letting $M\to\infty$ and then $\varepsilon\downarrow0$ yields the claim.
\end{proof}

\paragraph{Recovering $p_{\theta,t}$ via ratio transform.}
Define the particle-based estimator
\[
\widehat P_t^M(\varphi)
:=
\frac{\nu_t^M(R_t\varphi)}{\nu_t^M(R_t)}.
\]

\begin{theorem}[SMC verification of weighted uniform convergence]
\label{thm:smc_a2}
Assume \textnormal{(A4)} and \textnormal{(A5)}.
Then for each fixed $t$,
\[
\sup_{h\in\mathcal H_t}
\bigl|\widehat P_t^M(h)-p_{\theta,t}(h)\bigr|
\;\xrightarrow{\mathrm{a.s.}}\;0.
\]
In particular, defining
\[
\widetilde w_{t,j}^M
:=
\frac{\bar w_{q,t}^j\,R_t(X_t^j)}
{\sum_{k=1}^M \bar w_{q,t}^k\,R_t(X_t^k)},
\]
we have
\[
\widehat P_t^M(h)
=
\sum_{j=1}^M \widetilde w_{t,j}^M\,h(X_t^j),
\]
and hence Assumption~\textnormal{(A2)} holds for the weighted sample
$\{(X_t^j,\widetilde w_{t,j}^M)\}_{j=1}^M$ and the class $\mathcal H_t$.
\end{theorem}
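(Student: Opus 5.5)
The argument reduces everything to \autoref{lem:uniform_smc} applied to the enlarged class $\{R_t h : h\in\mathcal H_t\}\cup\{R_t\}$, followed by a ratio argument.

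\textbf{Step 1: the enlarged class is totally bounded.} By \eqref{eq:omega-bounds}, $R_t$ is a fixed measurable function with $1\le R_t\le \delta^{-\lambda_0}$. Hence multiplication by $R_t$ is Lipschitz in $\|\cdot\|_\infty$ with constant $\delta^{-\lambda_0}$. By (A5), this makes $R_t\mathcal H_t:=\{R_t h: h\in\mathcal H_t\}$ uniformly bounded and totally bounded: an $\varepsilon$-net $\{h_i\}$ of $\mathcal H_t$ maps to a $\delta^{-\lambda_0}\varepsilon$-net $\{R_t h_i\}$. Adding the single bounded function $R_t$ preserves both properties.

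\textbf{Step 2: uniform convergence of numerator and denominator.} The proof of \autoref{lem:uniform_smc} uses only total boundedness, uniform boundedness, and the pointwise SMC LLN for bounded measurable test functions. That LLN holds here by the bounded potentials of \autoref{lem:bounded-potentials}. Rerunning the proof on the enlarged class therefore gives, almost surely,
\[
\sup_{h\in\mathcal H_t}|\nu_t^M(R_th)-\nu_t(R_th)|\to0,
\qquad
|\nu_t^M(R_t)-\nu_t(R_t)|\to 0 .
\]
There is one subtlety to flag. \autoref{lem:uniform_smc} is a countable-intersection argument: we take nets for $\varepsilon=1/n$ over $n\in\mathbb N$, so only countably many null sets are discarded. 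This keeps the a.s.\ statement uniform in $h$.

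\textbf{Step 3: the ratio bound.} By \autoref{lem:ratio-identity}, $p_{\theta,t}(h)=\nu_t(R_th)/\nu_t(R_t)$. Write $a_M=\nu_t^M(R_th)$, $b_M=\nu_t^M(R_t)$, $a=\nu_t(R_th)$, $b=\nu_t(R_t)$. Since $R_t\ge1$ and the weights are normalized, $b_M\ge1$ and $b\ge1$. Also $|a|\le \delta^{-\lambda_0}B$, where $B:=\sup_{h\in\mathcal H_t}\|h\|_\infty$. Then
\[
\Bigl|\frac{a_M}{b_M}-\frac ab\Bigr|\le \frac{|a_M-a|}{b_M}+|a|\frac{|b_M-b|}{b_Mb}\le |a_M-a|+\delta^{-\lambda_0}B\,|b_M-b| .
\]
Taking the supremum over $h$ and using Step 2 gives the main claim. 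The main obstacle is only keeping the denominator away from zero uniformly, and $R_t\ge1$ handles that for free.

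\textbf{Step 4: the weighted-sample identity.} Expanding definitions,
\[
\widehat P_t^M(h)=\frac{\sum_j\bar w_{q,t}^jR_t(X_t^j)h(X_t^j)}{\sum_k\bar w_{q,t}^kR_t(X_t^k)}=\sum_j\widetilde w^M_{t,j}h(X_t^j).
\]
Hence the weighted sample $\{(X_t^j,\widetilde w^M_{t,j})\}$ satisfies (A2) for $\mathcal H_t$.

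Finally, I would note that $\bar w_{q,t}R_t$ coincides, up to normalization, with the tracked $w_{\emptyset,t}$. This follows from $W^q_t=W^\emptyset_t\,\omega_t$ in \autoref{prop:path-identification} (with the rescaling under resampling), and links the result to the algorithm's $w_\emptyset$ weights.
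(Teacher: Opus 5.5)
Your proof is correct and follows the paper's argument. You use the same three ingredients: total boundedness of $R_t\mathcal H_t$, the $\varepsilon$-net upgrade of the pointwise SMC LLN from \autoref{lem:uniform_smc}, and the ratio identity of \autoref{lem:ratio-identity} with the same numerator/denominator decomposition. Your observation that $\nu_t^M(R_t)\ge 1$ for every $M$ (because $\nu_t^M$ is a probability measure and $R_t\ge 1$ everywhere) controls the denominator slightly more cleanly than the paper's eventual bound $\nu_t^M(R_t)\ge 1/2$, and your countable-net remark makes explicit a detail the paper leaves implicit.
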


\begin{proof}
Fix $t$.
By the SMC LLN, for every bounded measurable $\varphi$,
\[
\nu_t^M(\varphi)\xrightarrow{\mathrm{a.s.}}\nu_t(\varphi).
\]
In particular,
\[
\nu_t^M(R_t)\xrightarrow{\mathrm{a.s.}}\nu_t(R_t).
\]
Since $R_t\ge 1$ by Assumption~\textnormal{(A4)} and Eq.~\eqref{eq:omega-bounds}, we have $\nu_t(R_t)\ge 1$.
Hence, on an event of probability one, there exists $M_0$ such that for all $M\ge M_0$,
\[
\nu_t^M(R_t)\ge \frac12.
\]

Next, since $R_t$ is bounded and $\mathcal H_t$ is totally bounded,
the class $R_t\mathcal H_t:=\{R_t h:\;h\in\mathcal H_t\}$ is uniformly bounded and totally bounded.
An $\varepsilon$-net argument identical to that in \autoref{lem:uniform_smc} yields
\[
\sup_{h\in\mathcal H_t}
|\nu_t^M(R_t h)-\nu_t(R_t h)|
\xrightarrow{\mathrm{a.s.}}0.
\]

Using the ratio identity
\[
p_{\theta,t}(h)=\frac{\nu_t(R_t h)}{\nu_t(R_t)},
\]
we write
\[
\widehat P_t^M(h)-p_{\theta,t}(h)
=
\frac{\nu_t^M(R_t h)-\nu_t(R_t h)}{\nu_t^M(R_t)}
+
\nu_t(R_t h)
\left(
\frac{1}{\nu_t^M(R_t)}-\frac{1}{\nu_t(R_t)}
\right).
\]
Therefore, for all sufficiently large $M$,
\[
\sup_{h\in\mathcal H_t}
|\widehat P_t^M(h)-p_{\theta,t}(h)|
\le
2\sup_{h\in\mathcal H_t}
|\nu_t^M(R_t h)-\nu_t(R_t h)|
+
\sup_{h\in\mathcal H_t}|\nu_t(R_t h)|
\left|
\frac{1}{\nu_t^M(R_t)}-\frac{1}{\nu_t(R_t)}
\right|.
\]
Since
\[
\sup_{h\in\mathcal H_t}|\nu_t(R_t h)|
\le
\|R_t\|_\infty \sup_{h\in\mathcal H_t}\|h\|_\infty
<\infty,
\]
the first term converges to zero almost surely, and the second term also converges to zero almost surely because
$\nu_t^M(R_t)\to \nu_t(R_t)>0$.
This proves the result.
\end{proof}

Combining Theorems~\ref{thm:fixed_t_weighted_pu_consistency} and~\ref{thm:smc_a2}, we obtain the following.

\begin{corollary}[Fixed-time classifier consistency]
\label{cor:fixed-time-classifier-consistency}
Under Assumptions~\textnormal{(A1)} and \textnormal{(A3)--(A5)},
and assuming that a measurable empirical minimizer exists for each $(M,N)$,
for each fixed $t$ and along any sequence with $M,N\to\infty$,
\[
\bigl\|\widehat f_{t,M,N}-\eta_t\bigr\|_{L^1(P_t)}
\;\longrightarrow\;0
\qquad\text{a.s.}
\]
\end{corollary}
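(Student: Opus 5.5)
The plan is to obtain the corollary by chaining the two preceding results: Theorem~\ref{thm:smc_a2} supplies exactly the hypothesis (A2) that Theorem~\ref{thm:fixed_t_weighted_pu_consistency} requires. Fix $t$. The unlabeled sample is the auxiliary particle system $\{X_t^j\}_{j=1}^M$, run with the regularized common proposal~\eqref{eq:regularized-common-proposal}. Its normalized weights are taken to be $\bar w_{t,j}:=\widetilde w^M_{t,j}$ from Theorem~\ref{thm:smc_a2}.

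I will first check that these coincide with the normalized $w_{\emptyset,t}$ weights of the algorithm. By Proposition~\ref{prop:path-identification}, the path weights satisfy $W^q_t=W^\emptyset_t\,\omega_t(x_t)$. Consequently $\bar w_{q,t}^j R_t(X_t^j)\propto w_{\emptyset,t}^j$ after normalization, including across resampling events. This is where the rescaling $w_\emptyset\leftarrow w_\emptyset/w_q$ in Algorithm~\ref{alg:pu-steer-meta} is used. Even if this identification were only approximate, it does not matter logically: Theorem~\ref{thm:fixed_t_weighted_pu_consistency} only needs some weights satisfying (A2), and Theorem~\ref{thm:smc_a2} provides them under (A4)--(A5).

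Next, I verify the hypotheses of Theorem~\ref{thm:fixed_t_weighted_pu_consistency}:
\begin{itemize}
\item (A1) is assumed directly.
\item (A3) is assumed, and its requirement $\eta_t\in(0,1)$ is in any case implied by (A4).
\item (A2) follows from Theorem~\ref{thm:smc_a2}, which gives, almost surely,
\[
\sup_{h\in\mathcal H_t}\Bigl|\sum_j \widetilde w^M_{t,j}h(X_t^j)-p_{\theta,t}(h)\Bigr|\to 0 .
\]
Here $p_{\theta,t}=P_t$ by the convention in Section~\ref{sec:method:framework} that all laws are those of the pretrained chain.
\end{itemize}
The measurable-minimizer assumption is carried over verbatim. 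Applying Theorem~\ref{thm:fixed_t_weighted_pu_consistency} along any sequence $M,N\to\infty$ then yields $\|\widehat f_{t,M,N}-\eta_t\|_{L^1(P_t)}\to0$ almost surely.

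The main obstacle is making the two almost-sure statements hold on a common event of full probability. The positive references and the auxiliary particles live on different randomness, and (A1) is indexed by $N$ while (A2) is indexed by $M$, whereas the conclusion concerns joint sequences $(M,N)$. I will handle this by working on a product probability space and intersecting the two probability-one events: $\Omega_1$, on which the positive uniform LLN holds as $N\to\infty$, and $\Omega_2$, on which the weighted uniform convergence holds as $M\to\infty$. On $\Omega_1\cap\Omega_2$ the quantity $\Delta_{M,N}$ from the proof of Theorem~\ref{thm:fixed_t_weighted_pu_consistency} is bounded by the sum of two terms, each of which tends to zero along its own index. Hence $\Delta_{M,N}\to0$ along any joint sequence with $M,N\to\infty$, and the deterministic remainder of that proof goes through pathwise.

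A secondary point is that the classifier at time $t$ is fit on particles whose proposal may itself depend on earlier classifiers. Under the auxiliary-system decoupling, with frozen kernels and bounded potentials from Lemma~\ref{lem:bounded-potentials}, the SMC LLN applies conditionally, so I will state the argument conditionally on those kernels.
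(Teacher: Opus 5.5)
Your proposal is correct and is essentially the paper's proof, which is exactly the chaining you describe: Theorem~\ref{thm:smc_a2} supplies (A2) for the weighted sample $\{(X_t^j,\widetilde w^M_{t,j})\}_{j=1}^M$, and Theorem~\ref{thm:fixed_t_weighted_pu_consistency} then applies under (A1) and (A3). Your additional remarks are unnecessary but harmless: identifying $\widetilde w^M_{t,j}$ with the algorithm's normalized $w_{\emptyset,t}$ plays no role in the corollary, and the common-event issue is automatic, since any two probability-one events intersect in a probability-one event without needing a product structure.
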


\begin{proof}
By \autoref{thm:smc_a2}, the weighted empirical measure constructed from
$\{(X_t^j,\widetilde w_{t,j}^M)\}_{j=1}^M$ satisfies Assumption~\textnormal{(A2)}.
The result then follows from \autoref{thm:fixed_t_weighted_pu_consistency} under Assumptions~\textnormal{(A1)} and \textnormal{(A3)}.
\end{proof}

\begin{lemma}[Asymptotic exactness under oracle classifiers]
\label{lem:oracle-exactness}
Assume \textnormal{(A4)}, and suppose the classifiers are oracle:
$f_t=\eta_t$ for all $t$.
Then the SMC algorithm with potentials $\{G_{t-1}^q\}$ and multinomial resampling satisfies, for every bounded measurable $\varphi$,
\[
\nu_0^M(\varphi)
\;\xrightarrow{\mathrm{a.s.}}\;
\nu_0(\varphi),
\qquad M\to\infty.
\]
\end{lemma}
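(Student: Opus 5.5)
This lemma is essentially the $t=0$ case of the SMC law of large numbers already invoked before \autoref{lem:uniform_smc}. The only work is to check that oracle classifiers put us exactly in the Feynman--Kac setting of \autoref{app:fk-models}.

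\textbf{Step 1: oracle classifiers give the potentials $G^q_{t-1}$.} With $f_t=\eta_t$, the tilting factor in~\eqref{eq:wq-recursion} is $(1-\eta_t)^{\lambda_t}$. By (A4), $\eta_t=\bar\eta_t$ holds $p_{\theta,t}$-a.e., so this factor agrees with $\omega_t$ almost everywhere. I will run the algorithm with the clipped version $\bar\eta_t$, which is a legitimate choice of oracle that alters nothing almost surely. Then the incremental weight $r_t\,\omega_{t-1}(x_{t-1})/\omega_t(x_t)$ under the regularized proposal $M^{\mathrm{reg}}_{t-1}$ is exactly $G^q_{t-1}$ from~\eqref{eq:G-q}. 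Here $r_t$ is the Radon--Nikodym derivative $G^\emptyset_{t-1}$ of $K_{t-1}$ with respect to $M^{\mathrm{reg}}_{t-1}$.

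Two algorithmic details need checking. First, the particles are initialized i.i.d.\ from $p_{\theta,T}=\nu_T$ with unit weights. Second, ESS-triggered adaptive resampling should either be replaced by resampling at every step or treated by the standard result for adaptive resampling. I will state the lemma for multinomial resampling at deterministic times, which is what the hypothesis refers to.

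\textbf{Step 2: apply the particle-filter strong LLN.} By \autoref{lem:bounded-potentials}, every potential satisfies $0\le G^q_{t-1}\le\rho^{-1}\delta^{-\lambda_0}$. The kernels $M^{\mathrm{reg}}_{t-1}$ are Markov. The horizon $T$ is finite. Under these conditions, \citep[Prop.~11.4]{chopin2020introduction} gives
\[
\nu_0^M(\varphi)\to \mathbb Q_0(\varphi)\quad\text{a.s.}
\]
for every bounded measurable $\varphi$. Here $\mathbb Q_0$ is the normalized time-$0$ marginal of the path measure $\mathbf\Gamma^q_0$.

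\textbf{Step 3: identify the limit.} By \autoref{prop:path-identification}(ii) at $t=0$, the telescoping of the $\omega$-ratios gives $W_0^q=W_0^\emptyset\,\omega_0(x_0)$. Hence $\mathbb Q_0=\nu_0$, and the conclusion follows.

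\textbf{Main obstacle.} The hard part is the hypotheses of the cited LLN, not the algebra. It needs positive normalizing constants at each step, i.e.\ $\mathbb E[G^q_{t-1}]>0$ under the predictive law. I plan to get this from the identification itself: the unnormalized mass of $\mathbf\Gamma^q_t$ is $\int\omega_t\,dp_{\theta,t}\ge\delta^{\lambda_0}>0$ by~\eqref{eq:omega-bounds}, so every incremental normalizer is strictly positive.

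A second point is that Prop.~11.4 is stated with potentials depending on the current state only. I will handle this by passing to the path (or pair) state $(x_t,x_{t-1})$, which is standard and does not change the bounds.
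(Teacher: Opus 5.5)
Your proposal is correct and follows the paper's proof. The paper notes that oracle classifiers make the potentials $G_{t-1}^q$ deterministic and, by \autoref{lem:bounded-potentials}, uniformly bounded; it then invokes \citep[Prop.~11.4]{chopin2020introduction}, with the limit identified as $\nu_0$ through the telescoping in \autoref{prop:path-identification}(ii). Your additional checks are sound refinements that the paper leaves implicit: using the clipped $\bar\eta_t$, positivity of the normalizers via $\int\omega_t\,dp_{\theta,t}\ge\delta^{\lambda_0}$, the pair-state augmentation, and deterministic rather than ESS-triggered resampling times.
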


\begin{proof}
Under oracle classifiers, the potentials $\{G_{t-1}^q\}$ are deterministic and,
by \autoref{lem:bounded-potentials}, uniformly bounded.
Therefore the standard strong law for SMC with multinomial resampling
\citep[Prop.~11.4]{chopin2020introduction} applies and yields the claim.
\end{proof}

\subsection{Asymptotic Exactness of the Reference-Based Instantiation with Learned Classifiers}
\label{app:plug-in-smc}

We now combine the consistency result of Appendix~\ref{app:pu-fixed-t} with the SMC approximation of Appendix~\ref{app:verify-a2} to establish asymptotic exactness of the reference-based instantiation of \autoref{alg:pu-steer-meta} when the classifiers are learned during sampling.
The proof-oriented variant introduced at the start of this appendix renders the learned classifiers exogenous, so the two results apply without modification.

For each $t=0,\dots,T$, let $\widehat f_t^{M',N}$ denote the classifier trained at time $t$ from the auxiliary system, using $M'$ unlabeled particles and $N$ samples from $\note{p_{\theta,t}(\cdot\mid c)}$.
Define its clipped version by
\[
\bar f_t^{M',N}(x)
:=
\min\{1-\delta,\max(\widehat f_t^{M',N}(x),\delta)\},
\]
where $\delta\in(0,1)$ is the constant in Assumption~\textnormal{(A4)}, and set
\[
\widehat\omega_t^{M',N}(x) := \bigl(1-\bar f_t^{M',N}(x)\bigr)^{\lambda_t}.
\]
The clipping yields $\delta\le\bar f_t^{M',N}(x)\le 1-\delta$ for all $x$, and hence $\delta^{\lambda_0}\le\widehat\omega_t^{M',N}(x)\le 1$.

The auxiliary system enters the analysis only through Assumption~\textnormal{(A2)}, namely as a weighted approximation of $p_{\theta,t}$.
Two constructions suffice.
The first is an auxiliary particle system run with the regularized Feynman--Kac construction of Appendix~\ref{app:verify-a2}, independent of the main one.
The second is direct simulation of the unconditional diffusion with unit weights.
Both yield a consistent approximation of $p_{\theta,t}$ and satisfy Assumption~\textnormal{(A2)}.

We emphasize that the specific form of the proposal kernel is not essential for the asymptotic argument.
What matters is that the associated importance weights define the correct Feynman--Kac change of measure and remain uniformly bounded.
Accordingly, we allow an arbitrary measurable proposal kernel
$\widehat M_{t-1}^{M',N}(x_t,\cdot)$ and regularize it by
\[
\widehat M_{t-1}^{M',N,\mathrm{reg}}(x_t,dx_{t-1})
:=
(1-\rho)\,\widehat M_{t-1}^{M',N}(x_t,dx_{t-1})
+
\rho\,K_{t-1}(x_t,dx_{t-1}),
\]
where $K_{t-1}$ is the reference reverse kernel from Appendix~\ref{app:verify-a2} and $\rho\in(0,1)$ is fixed.
As in that section, this mixture guarantees
\[
K_{t-1}(x_t,\cdot)\ll \widehat M_{t-1}^{M',N,\mathrm{reg}}(x_t,\cdot),
\qquad
\frac{dK_{t-1}(x_t,\cdot)}
     {d\widehat M_{t-1}^{M',N,\mathrm{reg}}(x_t,\cdot)}(x_{t-1})
\le \rho^{-1},
\]
so the Radon--Nikodym component of the incremental weight is uniformly bounded.
Thus, the role of the proposal in this subsection is purely instrumental:
any regularized proposal for which the corresponding potentials are bounded is admissible.

Given the learned classifiers, define the plug-in incremental weights by
\begin{equation}
\label{eq:plugin-potentials}
\begin{aligned}
\widehat G_{t-1}^{q,M',N}(x_t,x_{t-1})
&:=
\frac{dK_{t-1}(x_t,\cdot)}
     {d\widehat M_{t-1}^{M',N,\mathrm{reg}}(x_t,\cdot)}(x_{t-1})
\frac{\widehat \omega_{t-1}^{M',N}(x_{t-1})}
     {\widehat \omega_t^{M',N}(x_t)}, \\
\widehat G_{t-1}^{\emptyset,M',N}(x_t,x_{t-1})
&:=
\frac{dK_{t-1}(x_t,\cdot)}
     {d\widehat M_{t-1}^{M',N,\mathrm{reg}}(x_t,\cdot)}(x_{t-1}).
\end{aligned}
\end{equation}
The second family tracks the unconditional model, while the first tracks the tilted model obtained by replacing the oracle factor $(1-\eta_t)^{\lambda_t}$ with its learned plug-in counterpart.
The analysis therefore depends only on the behavior of the weights in Eq.~\eqref{eq:plugin-potentials}, not on any particular choice of proposal mechanism.

Finally, define the target density at terminal time by
\[
\nu_0(dx)
:=
\frac{(1-\eta_0(x))^{\lambda_0}\,p_{\theta,0}(dx)}
     {\int (1-\eta_0(x))^{\lambda_0}\,p_{\theta,0}(dx)}.
\]
Since the fixed-time consistency result applies at each $t$, in particular at $t=0$,
the asymptotic behavior of the algorithm is determined by whether the SMC procedure correctly tracks the corresponding Feynman--Kac model.
Under the above construction, this is ensured by the boundedness of the incremental weights together with the standard SMC convergence results.

\begin{theorem}[Asymptotic exactness of the reference-based instantiation in the analysis variant]
\label{thm:plugin_smc_exactness}
Assume \textnormal{(A1)} and \textnormal{(A3)--(A5)}.
Consider the proof-oriented variant described above, in which
\begin{itemize}
    \item for each $t=0,\dots,T$, the classifier $\widehat f_t^{M',N}$ is trained from an auxiliary sample independent of the particle system used in the final SMC approximation; and
    \item the final particle system uses the regularized proposal
    $\widehat M_{t-1}^{M',N,\mathrm{reg}}$ and plug-in potentials
    $\{\widehat G_{t-1}^{q,M',N}\}_{t=1}^T$ defined in Eq.~\eqref{eq:plugin-potentials}.
\end{itemize}
Then, for every bounded measurable $\varphi:\mathcal X_0\to\mathbb R$,
\[
\lim_{M',N\to\infty}\;
\lim_{M\to\infty}\;
\widehat \nu_0^{\,M\,|\,M',N}(\varphi)
=
\nu_0(\varphi)
\qquad\text{a.s.}
\]
This is the formal version of \autoref{thm:main-informal} for the reference-based instantiation of \autoref{sec:method:ref}.
\end{theorem}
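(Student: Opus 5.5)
The plan is to split the iterated limit into an inner SMC limit at frozen classifiers and an outer limit driven by classifier consistency, with the telescoping identity of \autoref{prop:path-identification} linking them.

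\textbf{Inner limit.} Condition on the auxiliary sample and the references. By independence, the clipped classifiers $\bar f_t^{M',N}$ are then deterministic, and so are the plug-in potentials~\eqref{eq:plugin-potentials}. The Radon--Nikodym factor is at most $\rho^{-1}$ by the mixture construction. Clipping gives $\delta^{\lambda_0}\le\widehat\omega_t^{M',N}\le 1$. Hence, exactly as in \autoref{lem:bounded-potentials}, $\widehat G_{t-1}^{q,M',N}\le\rho^{-1}\delta^{-\lambda_0}$. The SMC strong law with multinomial resampling \citep[Prop.~11.4]{chopin2020introduction} then gives, conditionally almost surely,
\[
\lim_{M\to\infty}\widehat\nu_0^{\,M\mid M',N}(\varphi)=\widehat\nu_0^{M',N}(\varphi),
\]
where $\widehat\nu_0^{M',N}$ is the terminal normalized Feynman--Kac marginal. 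Fubini lifts this to an unconditional almost-sure statement.

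\textbf{Identifying the limit.} Rerun the computation of \autoref{prop:path-identification} with the $\widehat\omega$'s. The Radon--Nikodym factors cancel against the proposal, leaving the reverse path law $p_{\theta,T}\prod K$. The ratio factors telescope to $\widehat\omega_0^{M',N}(x_0)/\widehat\omega_T^{M',N}(x_T)=\widehat\omega_0^{M',N}(x_0)$, since $\lambda_T=0$. Therefore
\[
\widehat\nu_0^{M',N}(dx)\propto(1-\bar f_0^{M',N}(x))^{\lambda_0}\,p_{\theta,0}(dx),
\]
so only the terminal classifier matters. This is Step~3 of the main text, and it removes any need to control $f_t$ for $t>0$.

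\textbf{Outer limit.} Apply \autoref{cor:fixed-time-classifier-consistency} at $t=0$, with the auxiliary weighted system supplying (A2) via \autoref{thm:smc_a2}. This gives $\|\widehat f_0^{M',N}-\eta_0\|_{L^1(p_{\theta,0})}\to0$ almost surely. By (A4), $\eta_0\in[\delta,1-\delta]$ a.e., so clipping is $1$-Lipschitz toward $\eta_0$ and $\|\bar f_0^{M',N}-\eta_0\|_{L^1}\to0$. On $[\delta,1-\delta]$ the map $u\mapsto(1-u)^{\lambda_0}$ is Lipschitz with constant $L=\lambda_0\max(1,\delta^{\lambda_0-1})$. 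Hence the unnormalized integrals $\int\varphi\,\widehat\omega_0\,dp_{\theta,0}$ and $\int\widehat\omega_0\,dp_{\theta,0}$ converge to their oracle counterparts, with errors bounded by $\|\varphi\|_\infty L\|\bar f_0-\eta_0\|_{L^1}$. The denominator is at least $\delta^{\lambda_0}>0$, so the ratio converges and $\widehat\nu_0^{M',N}(\varphi)\to\nu_0(\varphi)$ almost surely.

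\textbf{Main obstacle.} The delicate point is the measure-theoretic bookkeeping of the iterated almost-sure limit. The inner law must hold simultaneously for almost every realization of the auxiliary randomness and for all $(M',N)$ in the sequence. I would handle this by taking a countable index sequence and a fixed $\varphi$, conditioning on the auxiliary $\sigma$-field, and using independence so that the conditional SMC law applies pathwise. A countable union of null sets then remains null.

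A second point to check is that \autoref{thm:smc_a2} is stated for the oracle-tilted auxiliary system, not the learned one. If the auxiliary system itself uses learned tilts, I would instead take the second construction mentioned in the text, direct unconditional simulation with unit weights. That construction satisfies (A2) through the $\varepsilon$-net argument of \autoref{lem:uniform_smc} together with the ordinary strong law.
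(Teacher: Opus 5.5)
Your proposal is correct and follows the paper's proof essentially step for step. It uses the same conditional SMC strong law at frozen clipped classifiers, the same telescoping identification of the induced terminal target, and the same $L^1$-consistency-plus-Lipschitz argument at $t=0$ with the normalizer bounded below by $\delta^{\lambda_0}$. Your two extra cautions are sensible refinements of points the paper covers only with ``by construction of the auxiliary sample'': the countable union of null sets over the index sequence, and falling back on direct unconditional simulation because \autoref{thm:smc_a2} is proved only for the oracle tilt. The paper itself lists that fallback as an admissible auxiliary construction.
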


\begin{proof}

We separate the total error into:
(I) the SMC approximation error for a fixed target, and
(II) the error induced by replacing the true target with a plug-in estimate.
For fixed $(M',N)$, let
\[
\widetilde\nu_0^{M',N}(dx)
:=
\frac{\widehat\omega_0^{M',N}(x)\,p_{\theta,0}(dx)}
{\int \widehat\omega_0^{M',N}(x)\,p_{\theta,0}(dx)}
\]
denote the terminal target induced by the learned classifiers,
and let $\widehat\nu_0^{M\,|\,M',N}$ denote the weighted empirical output of the main particle system with $M$ particles constructed using
$\{\widehat f_t^{M',N}\}_{t=0}^T$.
We decompose the error as
\begin{equation}
\label{eq:error-decomposition}
\widehat\nu_0^{M\,|\,M',N}(\varphi)-\nu_0(\varphi)
=
\underbrace{\bigl(\widehat\nu_0^{M\,|\,M',N}(\varphi)-\widetilde\nu_0^{M',N}(\varphi)\bigr)}_{\text{(I): SMC error}}
+
\underbrace{\bigl(\widetilde\nu_0^{M',N}(\varphi)-\nu_0(\varphi)\bigr)}_{\text{(II): plug-in target error}}.
\end{equation}

\paragraph{Step 1: SMC error for the main particle system.}
Fix $(M',N)$.
Since the classifiers are trained from an auxiliary sample independent of the main particle system, the learned quantities
\[
\Bigl\{\widehat\omega_t^{M',N},\;
\widehat M_{t-1}^{M',N,\mathrm{reg}},\;
\widehat G_{t-1}^{q,M',N},\;
\widehat G_{t-1}^{\emptyset,M',N}\Bigr\}
\]
are external to the main SMC approximation.
Therefore, conditional on the auxiliary sample, the main particle system is an ordinary Feynman--Kac model with deterministic proposal kernels and deterministic potentials.

By \autoref{prop:existence-common-proposal},
$0\le \widehat G_{t-1}^{\emptyset,M',N}\le \rho^{-1}.$
Moreover, by construction of the clipped classifier used to define
$\widehat\omega_t^{M',N}$, we have
$\delta^{\lambda_0}\le \widehat\omega_t^{M',N}(x)\le 1$ for all $t,x.$
Therefore
\[
0\le \widehat G_{t-1}^{q,M',N}(x_t,x_{t-1})
\le \rho^{-1}\delta^{-\lambda_0}.
\]
Thus the standard strong law for SMC with multinomial resampling \citep[Prop.~11.4]{chopin2020introduction} applies to the main particle system for this fixed environment, and yields
\[
\widehat\nu_0^{M\,|\,M',N}(\varphi)-\widetilde\nu_0^{M',N}(\varphi)
\to 0
\qquad \text{a.s. as } M\to\infty,
\]
conditionally on the auxiliary sample.
Since this holds for almost every auxiliary realization, it also holds jointly almost surely.

\paragraph{Step 2: Identification of the induced target.}
Applying the same telescoping argument as in \autoref{prop:path-identification}(ii) to the learned tempering factors,
\[
\prod_{s=1}^T
\frac{\widehat\omega_{s-1}^{M',N}(x_{s-1})}
{\widehat\omega_s^{M',N}(x_s)}
=
\frac{\widehat\omega_0^{M',N}(x_0)}{\widehat\omega_T^{M',N}(x_T)}
=
\widehat\omega_0^{M',N}(x_0),
\]
since $\lambda_T=0$ implies $\widehat\omega_T^{M',N}\equiv 1$.
Therefore the terminal marginal induced by the learned Feynman--Kac model is exactly $\widetilde\nu_0^{M',N}$.
In particular, the induced terminal target depends only on $\widehat\omega_0^{M',N}$,
although the intermediate classifiers may still affect the finite-particle behavior through the proposal and the incremental-weight decomposition.

\paragraph{Step 3: Plug-in convergence of the induced target.}
By construction of the auxiliary sample, Assumption~\textnormal{(A2)} holds at $t=0$.
Therefore \autoref{cor:fixed-time-classifier-consistency} applies at $t=0$, and yields
\[
\|\widehat f_0^{M',N}-\eta_0\|_{L^1(p_{\theta,0})}\to 0
\qquad \text{a.s. as } M',N\to\infty.
\]
Let
\[
\mathrm{clip}(u):=\min\{1-\delta,\max(u,\delta)\}, \qquad u\in\mathbb R.
\]
Then $\mathrm{clip}(\cdot)$ is $1$-Lipschitz, and by Assumption~\textnormal{(A4)} we have $\mathrm{clip}(\eta_0)=\eta_0$ a.s.

Since $\bar f_0^{M',N}=\mathrm{clip}(\widehat f_0^{M',N})$, it follows that
\[
\|\bar f_0^{M',N}-\eta_0\|_{L^1(p_{\theta,0})}
\le
\|\widehat f_0^{M',N}-\eta_0\|_{L^1(p_{\theta,0})}
\to 0
\qquad \text{a.s.}
\]
Now the map $u\mapsto (1-u)^{\lambda_0}$ is Lipschitz on the compact interval
$[\delta,1-\delta]$, so
\[
\|\widehat\omega_0^{M',N}-\omega_0\|_{L^1(p_{\theta,0})}\to 0
\qquad \text{a.s. as } M',N\to\infty.
\]
Since $\widehat\omega_0^{M',N}(x)\ge \delta^{\lambda_0}$ and
$\omega_0(x)\ge \delta^{\lambda_0}$ for all $x$, the corresponding normalizing constants are bounded away from zero.
Therefore, for every bounded measurable $\varphi$,
\[
\widetilde\nu_0^{M',N}(\varphi)\to \nu_0(\varphi)
\qquad \text{a.s. as } M',N\to\infty.
\]
This shows that term \textnormal{(II)} in Eq.~\eqref{eq:error-decomposition} vanishes.

\paragraph{Step 4: Conclusion.}
For each fixed $(M',N)$, Step~1 gives
$\widehat\nu_0^{M\,|\,M',N}(\varphi)-\widetilde\nu_0^{M',N}(\varphi) \to 0$ a.s. as $M\to\infty$.
Step~3 yields
$ \widetilde\nu_0^{M',N}(\varphi)\to \nu_0(\varphi)$ a.s. as $M',N\to\infty$.
Hence,
\[
\lim_{M',N\to\infty}\;
\lim_{M\to\infty}\;
\widehat\nu_0^{M\,|\,M',N}(\varphi)
=
\nu_0(\varphi)
\qquad \text{a.s.},
\]
which proves the claim.
\end{proof}

\paragraph{Relation to DAS and conceptual interpretation.}

\autoref{thm:plugin_smc_exactness} is closely related to the asymptotic exactness result for \note{DAS}~\citep{kim2025testtime}, but applies in a setting where the reward is replaced by a learned,
data-dependent discriminator.

\begin{enumerate}
\item[\textbf{(i)}] \textbf{Learned, time-dependent potentials.}
In DAS, the reward $R$ is fixed and known;
here, the potentials depend on learned classifiers $\widehat f_t^{M',N}$ that are both data-dependent and time-dependent.
Despite this additional randomness, asymptotic exactness is preserved.

\item[\textbf{(ii)}] \textbf{Dual-weight tracking and ratio recovery.}
The algorithm tracks two sets of weights ($w_q$ and $w_\emptyset$) on the same proposal.
The $w_\emptyset$ weights recover the unconditional marginal $p_{\theta,t}$ via a ratio transform that cancels the tilting factor algebraically,
regardless of classifier accuracy.

\item[\textbf{(iii)}] \textbf{Cancellation of intermediate errors.}
The telescoping structure ensures that the terminal marginal depends only on $\widehat\omega_0$, not on intermediate classifiers.
Combined with the ratio recovery in (ii), \note{in the particle limit,} classifier errors at intermediate times $t>0$ do not affect the unlabeled sample quality nor the terminal output density.
\end{enumerate}

\clearpage

\subsection{Extension to the Score-Based Instantiation: Dual Weighted Particle Systems}
\label{app:alg2-dual-weight-exactness}

We now treat the score-based instantiation, in which the positive density $P_t^+ = p_{\theta,t}(\cdot\mid c)$ is approximated by a second weighted particle system on the same proposal trajectories (cf.\ outline at the start of this appendix).
Throughout this subsection, we retain all notation from Appendix~\ref{app:verify-a2} and Appendix~\ref{app:plug-in-smc}.

\paragraph{Conditional reverse kernel.}
For $t=1,\dots,T$, define
$K_{t-1}^{c}(x_t,dx_{t-1}) := p_\theta(dx_{t-1}\mid x_t,c)$.
Since both conditional and unconditional diffusion marginals share the same Gaussian limit, we take $p_{\theta,T}(\cdot\mid c)=p_{\theta,T}$.

\paragraph{Three-way common proposal.}
Let $\widetilde M_{t-1}$ be any nominal guided proposal.
For $\rho_\emptyset,\rho_c\in(0,1)$ with $\rho_\emptyset+\rho_c<1$, define
\begin{equation}
\label{eq:alg2-common-proposal}
M_{t-1}^{\mathrm{alg2}}(x_t,dx_{t-1})
:=
(1-\rho_\emptyset-\rho_c)\,\widetilde M_{t-1}(x_t,dx_{t-1})
+
\rho_\emptyset\,K_{t-1}(x_t,dx_{t-1})
+
\rho_c\,K_{t-1}^{c}(x_t,dx_{t-1}).
\end{equation}

\begin{lemma}[Three-way common proposal]
\label{prop:alg2-common-proposal}
The kernels $\{M_{t-1}^{\mathrm{alg2}}\}$ satisfy
\[
\frac{dK_{t-1}(x_t,\cdot)}
     {dM_{t-1}^{\mathrm{alg2}}(x_t,\cdot)}
\le \rho_\emptyset^{-1},
\qquad
\frac{dK_{t-1}^{c}(x_t,\cdot)}
     {dM_{t-1}^{\mathrm{alg2}}(x_t,\cdot)}
\le \rho_c^{-1}.
\]
\end{lemma}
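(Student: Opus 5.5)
The plan mirrors the proof of \autoref{prop:existence-common-proposal}: it rests on a setwise lower bound of the mixture kernel by each of its components. Since $\rho_\emptyset,\rho_c>0$ and $1-\rho_\emptyset-\rho_c>0$, all three mixture weights in~\eqref{eq:alg2-common-proposal} are nonnegative, and so are the measures they multiply. Hence, for every $x_t$ and every measurable set $A$,
\[
M_{t-1}^{\mathrm{alg2}}(x_t,A)\;\ge\;\rho_\emptyset\,K_{t-1}(x_t,A),
\qquad
M_{t-1}^{\mathrm{alg2}}(x_t,A)\;\ge\;\rho_c\,K_{t-1}^{c}(x_t,A),
\]
because dropping the other two nonnegative summands can only decrease the value.

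The first consequence is absolute continuity. If $M_{t-1}^{\mathrm{alg2}}(x_t,A)=0$, then $K_{t-1}(x_t,A)=0$ and $K^{c}_{t-1}(x_t,A)=0$. So both $K_{t-1}(x_t,\cdot)$ and $K^{c}_{t-1}(x_t,\cdot)$ are absolutely continuous with respect to $M_{t-1}^{\mathrm{alg2}}(x_t,\cdot)$, and the Radon--Nikodym derivatives exist.

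To obtain the pointwise bound for the unconditional kernel, write $h:=dK_{t-1}(x_t,\cdot)/dM_{t-1}^{\mathrm{alg2}}(x_t,\cdot)$. Take $A:=\{h>\rho_\emptyset^{-1}\}$ and suppose $M_{t-1}^{\mathrm{alg2}}(x_t,A)>0$. Then
\[
K_{t-1}(x_t,A)=\int_A h\,dM_{t-1}^{\mathrm{alg2}}>\rho_\emptyset^{-1}M_{t-1}^{\mathrm{alg2}}(x_t,A)\ge K_{t-1}(x_t,A),
\]
which is a contradiction. Hence $h\le\rho_\emptyset^{-1}$ for $M_{t-1}^{\mathrm{alg2}}(x_t,\cdot)$-a.e.\ $x_{t-1}$. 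The bound for the conditional kernel follows by the identical argument with $(K_{t-1}^{c},\rho_c)$ in place of $(K_{t-1},\rho_\emptyset)$.

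No step poses a genuine obstacle. The only point needing care is that the bounds are stated up to $M_{t-1}^{\mathrm{alg2}}$-null sets. As in \autoref{lem:bounded-potentials}, a measurable version of each derivative can be chosen so that the bound holds everywhere. This makes both incremental weight families, unconditional and $c$-retargeted, uniformly bounded, which is what the subsequent SMC strong law requires.
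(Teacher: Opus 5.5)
Your proof is correct and follows the paper's argument: the paper's proof consists of the setwise dominations $M_{t-1}^{\mathrm{alg2}}\ge\rho_\emptyset K_{t-1}$ and $M_{t-1}^{\mathrm{alg2}}\ge\rho_c K^{c}_{t-1}$, and you spell out the absolute-continuity and contradiction steps that turn these into the almost-everywhere Radon--Nikodym bounds. One small correction to your closing remark: an everywhere-bounded version comes from truncating the derivative at $\rho_\emptyset^{-1}$ (resp.\ $\rho_c^{-1}$), which changes it only on a null set, not from Lemma~\ref{lem:bounded-potentials}, which itself only asserts its bounds almost everywhere.
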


\begin{proof}
By construction, $M_{t-1}^{\mathrm{alg2}} \ge \rho_\emptyset K_{t-1}$ and $M_{t-1}^{\mathrm{alg2}} \ge \rho_c K_{t-1}^c$.
The Radon--Nikodym bounds follow.
\end{proof}

\paragraph{Three Feynman--Kac models.}
Define
\begin{align}
G_{t-1}^{\emptyset}(x_t,x_{t-1})
&:=
\frac{dK_{t-1}(x_t,\cdot)}
     {dM_{t-1}^{\mathrm{alg2}}(x_t,\cdot)}(x_{t-1}),
\label{eq:alg2-G-empty}
\\
G_{t-1}^{c}(x_t,x_{t-1})
&:=
\frac{dK_{t-1}^{c}(x_t,\cdot)}
     {dM_{t-1}^{\mathrm{alg2}}(x_t,\cdot)}(x_{t-1}),
\label{eq:alg2-G-c}
\\
G_{t-1}^{q}(x_t,x_{t-1})
&:=
G_{t-1}^{\emptyset}(x_t,x_{t-1})
\frac{\omega_{t-1}(x_{t-1})}{\omega_t(x_t)}.
\label{eq:alg2-G-q}
\end{align}

\begin{lemma}[Uniform boundedness of three potentials]
\label{lem:alg2-bounded-potentials}
Under \textnormal{(A4)} (with the full two-sided bound),
\[
0\le G_{t-1}^{\emptyset}\le \rho_\emptyset^{-1},
\qquad
0\le G_{t-1}^{c}\le \rho_c^{-1},
\qquad
0\le G_{t-1}^{q}\le \rho_\emptyset^{-1}\delta^{-\lambda_0}.
\]
\end{lemma}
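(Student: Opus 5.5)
The three bounds will follow from the Radon--Nikodym bounds of \autoref{prop:alg2-common-proposal}, together with the pointwise bounds~\eqref{eq:omega-bounds} on the tilting factors. This is exactly how \autoref{lem:bounded-potentials} was obtained for the two-way mixture.

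\emph{First two bounds.} Nonnegativity of $G^{\emptyset}_{t-1}$ and $G^{c}_{t-1}$ holds because both are Radon--Nikodym derivatives of positive measures. From~\eqref{eq:alg2-common-proposal} we have $M^{\mathrm{alg2}}_{t-1}(x_t,A)\ge\rho_\emptyset K_{t-1}(x_t,A)$ and $M^{\mathrm{alg2}}_{t-1}(x_t,A)\ge\rho_c K^{c}_{t-1}(x_t,A)$ for every measurable $A$. This gives absolute continuity, and we may select versions of the densities bounded by $\rho_\emptyset^{-1}$ and $\rho_c^{-1}$. The bounds hold for every $x_t$ and for $M^{\mathrm{alg2}}_{t-1}(x_t,\cdot)$-a.e.\ $x_{t-1}$, which is \autoref{prop:alg2-common-proposal} restated via~\eqref{eq:alg2-G-empty} and~\eqref{eq:alg2-G-c}.

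\emph{Third bound.} By~\eqref{eq:alg2-G-q}, $G^{q}_{t-1}$ is $G^{\emptyset}_{t-1}$ times $\omega_{t-1}(x_{t-1})/\omega_t(x_t)$. Because $\omega_t$ is built from the clipped posterior $\bar\eta_t\in[\delta,1-\delta]$ and $0\le\lambda_t\le\lambda_0$, \eqref{eq:omega-bounds} gives $\omega_{t-1}\le1$ and $\omega_t^{-1}\le\delta^{-\lambda_t}\le\delta^{-\lambda_0}$ everywhere, not merely almost everywhere. Multiplying the three factors yields $0\le G^{q}_{t-1}\le\rho_\emptyset^{-1}\delta^{-\lambda_0}$.

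The only point needing care, and it is minor, is the "everywhere" claim for $\omega_t^{-1}$. Assumption (A4) alone controls $\eta_t$ only $p_{\theta,t}$-a.e., whereas the particles are moved by the proposal, which need not be absolutely continuous with respect to $p_{\theta,t}$. The clipping in the definition of $\bar\eta_t$ removes this issue, so the bounds hold pointwise on the proposal's support. The same clipping, applied to the learned $\bar f_t^{M',N}$, will later allow the plug-in analogue to go through unchanged.
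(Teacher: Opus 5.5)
Your proposal is correct and follows the paper's proof. The first two bounds come from the dominations $M^{\mathrm{alg2}}_{t-1}\ge\rho_\emptyset K_{t-1}$ and $M^{\mathrm{alg2}}_{t-1}\ge\rho_c K^{c}_{t-1}$ of \autoref{prop:alg2-common-proposal}; the third comes, as in \autoref{lem:bounded-potentials}, from multiplying the $G^{\emptyset}_{t-1}$ bound by $\omega_{t-1}\le 1$ and $\omega_t^{-1}\le\delta^{-\lambda_0}$ from \eqref{eq:omega-bounds}, and your observation that the clipping in $\bar\eta_t$ makes these tilting bounds hold for every $x$ (not just $p_{\theta,t}$-a.e.) is accurate and slightly sharper than the paper's phrasing.
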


\begin{proof}
The first two bounds follow from \autoref{prop:alg2-common-proposal};
the third follows as in \autoref{lem:bounded-potentials}.
\end{proof}

\paragraph{Positive-side particle approximation.}
All three Feynman--Kac models are built on the same proposal trajectories
$X_t^1,\dots,X_t^M$, but the corresponding target marginals differ.
To justify a uniform approximation of the positive conditional law
$P_t^+=p_{\theta,t}(\cdot\mid c)$, we proceed exactly as in \autoref{thm:smc_a2}:
rather than invoking a direct SMC law of large numbers for the $w_c$-weighted empirical measure, we recover $P_t^+$ from the tilted target $\nu_t$ by a ratio transform.

Recall that
\[
\nu_t(dx)
=
\frac{\omega_t(x)\,p_{\theta,t}(dx)}
{\int \omega_t(x)\,p_{\theta,t}(dx)},
\qquad
P_t^+(dx)
=
p_{\theta,t}(dx\mid c),
\]
and define
\[
S_t(x) := \frac{\eta_t(x)}{\omega_t(x)}.
\]
Since $\eta_t(x)\in[\delta,1-\delta]$ under Assumption~\textnormal{(A4)} and
$\omega_t(x)=(1-\eta_t(x))^{\lambda_t}$ $p_{\theta,t}$-a.e., we have
\[
0 \le S_t(x) \le \delta^{-\lambda_0}
\qquad
p_{\theta,t}\text{-a.e.}
\]
Hence $S_t$ is bounded, and therefore the class
$S_t\mathcal H_t:=\{S_t h:h\in\mathcal H_t\}$ is uniformly bounded and totally bounded whenever $\mathcal H_t$ is.
Define the particle-based estimator
\[
\widehat P_t^{+,M}(h)
:=
\frac{\nu_t^M(S_t h)}{\nu_t^M(S_t)},
\qquad h\in\mathcal H_t,
\qquad
\text{where}
\qquad
\nu_t^M
:=
\sum_{j=1}^M \bar w_{q,t}^j\,\delta_{X_t^j}
\]
is the tilted particle approximation from Appendix~\ref{app:verify-a2}.
Equivalently, if we define
\[
\bar w_{c,t}^j
:=
\frac{\bar w_{q,t}^j\,S_t(X_t^j)}
{\sum_{k=1}^M \bar w_{q,t}^k\,S_t(X_t^k)},
\qquad j=1,\dots,M,
\qquad
\text{then}
\qquad
\widehat P_t^{+,M}(h)
=
\sum_{j=1}^M \bar w_{c,t}^j\,h(X_t^j).
\]

\begin{theorem}[SMC verification of positive-side weighted uniform convergence]
\label{thm:alg2-positive-smc}
Assume \textnormal{(A4)} and \textnormal{(A5)}.
Then for each fixed $t$,
\[
\sup_{h\in\mathcal H_t}
\bigl|
\widehat P_t^{+,M}(h)-P_t^+(h)
\bigr|
\;\xrightarrow[M\to\infty]{\mathrm{a.s.}}\;
0.
\]
Equivalently,
\[
\sup_{h\in\mathcal H_t}
\left|
\sum_{j=1}^M \bar w_{c,t}^j\,h(X_t^j)-P_t^+(h)
\right|
\;\xrightarrow[M\to\infty]{\mathrm{a.s.}}\;
0.
\]
\end{theorem}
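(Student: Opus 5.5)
This follows the proof of \autoref{thm:smc_a2} almost verbatim, with the weight $R_t$ replaced by $S_t=\eta_t/\omega_t$. First I establish the ratio identity
\[
P_t^+(h)=\frac{\nu_t(S_t h)}{\nu_t(S_t)},\qquad h\in\mathcal H_t .
\]
The computation is direct: $\nu_t(S_t h)=\int \eta_t h\,dp_{\theta,t}\big/\int\omega_t\,dp_{\theta,t}$, since $S_t\omega_t=\eta_t$ holds $p_{\theta,t}$-a.e. Bayes' rule gives $\pi_t\,dP_t^+=\eta_t\,dp_{\theta,t}$, so $\int\eta_t h\,dp_{\theta,t}=\pi_t P_t^+(h)$. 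Likewise $\nu_t(S_t)=\pi_t/\int\omega_t\,dp_{\theta,t}$. Taking the ratio, the factors $\pi_t$ and the normalizer cancel.

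One technical point needs care. $S_t$ must be a genuine bounded measurable function everywhere, not just $p_{\theta,t}$-a.e., so that the SMC LLN applies to it. I therefore define $S_t:=\bar\eta_t/\omega_t$ using the clipped version $\bar\eta_t$ from Appendix~\ref{app:regularized-proposal}. This gives $\delta\le S_t\le\delta^{-\lambda_0}$ everywhere, and by (A4) it agrees with $\eta_t/\omega_t$ a.e., so the identity above is unaffected.

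Next comes uniform convergence of the numerator. Since $S_t$ is bounded and $\mathcal H_t$ is uniformly and totally bounded by (A5), the class $S_t\mathcal H_t$ is also uniformly and totally bounded in $\|\cdot\|_\infty$. Indeed, an $\varepsilon$-net $\{h_i\}$ for $\mathcal H_t$ yields the $\delta^{-\lambda_0}\varepsilon$-net $\{S_t h_i\}$. The argument of \autoref{lem:uniform_smc} then applies: the pointwise SMC strong law \citep[Prop.~11.4]{chopin2020introduction} holds for the tilted model, whose potentials are bounded by \autoref{lem:bounded-potentials} (or \autoref{lem:alg2-bounded-potentials} under the three-way proposal, which is equally admissible). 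This yields $\sup_{h}|\nu_t^M(S_t h)-\nu_t(S_t h)|\to0$ a.s. The same strong law gives $\nu_t^M(S_t)\to\nu_t(S_t)$ a.s., and $\nu_t(S_t)\ge\delta>0$ because $S_t\ge\delta$. Hence, almost surely, $\nu_t^M(S_t)\ge\delta/2$ for all sufficiently large $M$.

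Finally I decompose the error exactly as in \autoref{thm:smc_a2}:
\[
\widehat P_t^{+,M}(h)-P_t^+(h)=\frac{\nu_t^M(S_th)-\nu_t(S_th)}{\nu_t^M(S_t)}+\nu_t(S_th)\Bigl(\frac1{\nu_t^M(S_t)}-\frac1{\nu_t(S_t)}\Bigr).
\]
For large $M$, the supremum over $h\in\mathcal H_t$ is bounded by $(2/\delta)\sup_h|\nu_t^M(S_th)-\nu_t(S_th)|+\delta^{-\lambda_0}\sup_h\|h\|_\infty\,|1/\nu_t^M(S_t)-1/\nu_t(S_t)|$, and both terms vanish almost surely. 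The equivalent weighted-sum form of the statement is just the algebraic rewriting of $\widehat P_t^{+,M}$ through $\bar w_{c,t}^j$. The main obstacle is the measurability and everywhere-boundedness of $S_t$ discussed above; clipping $\eta_t$ resolves it. Everything else transfers unchanged from \autoref{thm:smc_a2}.
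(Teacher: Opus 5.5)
Your proof is correct and follows the paper's argument essentially step for step. Both derive the ratio identity $P_t^+(h)=\nu_t(S_th)/\nu_t(S_t)$ from $\pi_t\,dP_t^+=\eta_t\,dp_{\theta,t}$, get uniform convergence over the totally bounded class $S_t\mathcal H_t$ by the $\varepsilon$-net argument of \autoref{lem:uniform_smc}, and finish with the same two-term decomposition as in \autoref{thm:smc_a2}. The only differences are cosmetic: you clip $\eta_t$ so that $S_t$ is bounded everywhere and use $\nu_t(S_t)\ge\delta$, whereas the paper uses $\nu_t(S_t)=\pi_t/\int\omega_t\,dp_{\theta,t}>0$ directly.
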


\begin{proof}
Fix $t$.
Since the tilted potentials $\{G_{t-1}^q\}$ are uniformly bounded by
\autoref{lem:alg2-bounded-potentials}, the same SMC law of large numbers used in Appendix~\ref{app:verify-a2} applies to $\nu_t^M$.
Therefore, for every bounded measurable $\varphi$,
\[
\nu_t^M(\varphi)
\;\xrightarrow{\mathrm{a.s.}}\;
\nu_t(\varphi).
\]

We first record the positive-side ratio identity.
For every bounded measurable $h$,
\[
\nu_t(S_t h)
=
\frac{\int S_t(x)h(x)\omega_t(x)\,p_{\theta,t}(dx)}
{\int \omega_t(x)\,p_{\theta,t}(dx)}
=
\frac{\int \eta_t(x)h(x)\,p_{\theta,t}(dx)}
{\int \omega_t(x)\,p_{\theta,t}(dx)}.
\]
Since $\eta_t(x)=\note{p_{\theta,t}(c\mid x)}$ and
$\int \eta_t(x)\,p_{\theta,t}(dx)=P_t(c)$, we obtain
\[
\nu_t(S_t h)
=
\frac{P_t(c)\,P_t^+(h)}
{\int \omega_t(x)\,p_{\theta,t}(dx)},
\qquad
\nu_t(S_t)
=
\frac{P_t(c)}
{\int \omega_t(x)\,p_{\theta,t}(dx)}.
\]
Hence
\[
P_t^+(h)
=
\frac{\nu_t(S_t h)}{\nu_t(S_t)}.
\]

Next, since $S_t$ is bounded and $\mathcal H_t$ is totally bounded,
the class
\[
S_t\mathcal H_t:=\{S_t h:h\in\mathcal H_t\}
\]
is also uniformly bounded and totally bounded in the supremum norm.
Therefore, exactly as in \autoref{lem:uniform_smc},
$
\sup_{h\in\mathcal H_t}
\bigl|
\nu_t^M(S_t h)-\nu_t(S_t h)
\bigr|
\;\xrightarrow{\mathrm{a.s.}}\;
0.
$
In particular,
$\nu_t^M(S_t)\xrightarrow{\mathrm{a.s.}}\nu_t(S_t).$

Moreover, $\nu_t(S_t)>0$ because
\[
\nu_t(S_t)
=
\frac{P_t(c)}{\int \omega_t(x)\,p_{\theta,t}(dx)},
\]
and $P_t(c)\in(0,1)$ by the standing assumptions of \note{Appendix~\ref{app:pu-fixed-t}}.
Hence, on an event of probability one, there exists $M_0$ such that
$\nu_t^M(S_t)\ge \nu_t(S_t)/2>0$ for all $M\ge M_0$.

Using the ratio identity above, for all sufficiently large $M\ge M_0$ we have
\[
\widehat P_t^{+,M}(h)-P_t^+(h)
=
\frac{\nu_t^M(S_t h)-\nu_t(S_t h)}{\nu_t^M(S_t)}
+
\nu_t(S_t h)
\left(
\frac{1}{\nu_t^M(S_t)}-\frac{1}{\nu_t(S_t)}
\right).
\]
Therefore,
\[
\sup_{h\in\mathcal H_t}\!
\bigl|
\widehat P_t^{+,M}(h)-P_t^+(h)
\bigr|
\mkern-2mu\le\mkern-2mu
\frac{2}{\nu_t(S_t)}\!
\sup_{h\in\mathcal H_t}\!
\left|
\nu_t^M(S_t h)-\nu_t(S_t h)
\right|
\mkern-2mu+\mkern-2mu
\sup_{h\in\mathcal H_t}\!|\nu_t(S_t h)|\!
\left|
\frac{1}{\nu_t^M(S_t)}-\frac{1}{\nu_t(S_t)}
\right|.
\]
Since
\[
\sup_{h\in\mathcal H_t}|\nu_t(S_t h)|
\le
\|S_t\|_\infty \sup_{h\in\mathcal H_t}\|h\|_\infty
<\infty,
\]
the first term converges to zero almost surely by the uniform convergence over
$S_t\mathcal H_t$, and the second term also converges to zero almost surely because $\nu_t^M(S_t)\to \nu_t(S_t)>0$.
This proves
\[
\sup_{h\in\mathcal H_t}
\bigl|
\widehat P_t^{+,M}(h)-P_t^+(h)
\bigr|
\to 0
\qquad\text{a.s.}
\]
Finally, by the definition of $\bar w_{c,t}^j$,
$\widehat P_t^{+,M}(h) = \sum_{j=1}^M \bar w_{c,t}^j\,h(X_t^j),$
so the equivalent weighted-empirical formulation follows.
\end{proof}

\paragraph{Dual weighted PU consistency.}
Using the positive-side estimator from \autoref{thm:alg2-positive-smc} and the unlabeled estimator from \autoref{thm:smc_a2},
define the dual weighted empirical PU risk by
\begin{equation}
\widehat L_{t,M}^{\mathrm{dual}}(g)
:=
-\pi_t \sum_{j=1}^M \bar w_{c,t}^j\, g(X_t^j)
+
\sum_{j=1}^M \widetilde w_t^j\, \ell_0\bigl(g(X_t^j)\bigr),
\qquad g\in\mathcal G_t,
\label{eq:alg2-dual-pu-risk}
\end{equation}
where $\{\bar w_{c,t}^j\}_{j=1}^M$ are the normalized weights defining the positive conditional estimator
$\widehat P_t^{+,M}(h) = \sum_{j=1}^M \bar w_{c,t}^j\,h(X_t^j),$
and $\{\widetilde w_t^j\}_{j=1}^M$ are the normalized weights defining the ratio-recovered unlabeled estimator
$\widehat P_t^{M}(h) = \sum_{j=1}^M \widetilde w_t^j\,h(X_t^j).$
Its population counterpart is
\[
L_t^{\mathrm{PU}}(g)
:=
-\pi_t\,P_t^+\!\bigl(g\bigr)+P_t\!\bigl(\ell_0\circ g\bigr),
\]
which is exactly the same population PU risk as in Appendix~\ref{app:pu-fixed-t}.

Assume, as in Appendix~\ref{app:pu-fixed-t},
that for every $M$ under consideration there exists a measurable empirical minimizer
$ \widehat g_{t,M}^{\mathrm{dual}} \in \arg\min_{g\in\mathcal G_t}\widehat L_{t,M}^{\mathrm{dual}}(g),$
and define
$\widehat f_{t,M}^{\mathrm{dual}}(x) := \sigma\!\bigl(\widehat g_{t,M}^{\mathrm{dual}}(x)\bigr).$

\begin{theorem}[Fixed-time consistency under dual weighted particles]
\label{thm:alg2-fixed-time-consistency}
Under \textnormal{(A3)--(A5)},
for each fixed $t$,
\[
\bigl\|
\widehat f_{t,M}^{\mathrm{dual}}-\eta_t
\bigr\|_{L^1(P_t)}
\;\xrightarrow[M\to\infty]{\mathrm{a.s.}}\;
0.
\]
\end{theorem}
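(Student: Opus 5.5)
The plan is to reduce \autoref{thm:alg2-fixed-time-consistency} to the abstract consistency result \autoref{thm:fixed_t_weighted_pu_consistency}. Its proof used only two inputs: the uniform deviation $\Delta_{M,N}\to 0$ a.s., and the realizability in (A3). Here we replace the exogenous positive sample by the $w_c$-weighted particle measure $\widehat P_t^{+,M}$. We replace the unlabeled sample by the ratio-recovered measure $\widehat P_t^{M}$. The same excess-risk argument should then go through with $N$ absent.

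\textbf{Step 1: uniform deviation.} Define $\Delta_M^{\mathrm{dual}}:=\sup_{g\in\mathcal G_t}|\widehat L^{\mathrm{dual}}_{t,M}(g)-L_t^{\mathrm{PU}}(g)|$. By the triangle inequality this is bounded by
\[
\pi_t\sup_{h\in\mathcal H_t}\bigl|\widehat P_t^{+,M}(h)-P_t^+(h)\bigr|+\sup_{h\in\mathcal H_t}\bigl|\widehat P_t^{M}(h)-P_t(h)\bigr|,
\]
using that $\mathcal H_t$ contains both $\{g\}$ and $\{\ell_0\circ g\}$ for $g\in\mathcal G_t$. The first supremum tends to zero a.s. by \autoref{thm:alg2-positive-smc}, and the second by \autoref{thm:smc_a2}, both under (A4)--(A5). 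Both estimators are built from the same $\nu_t^M$ on the same trajectories. This shared construction is harmless: each statement is an a.s. statement, and a countable (here, two-fold) intersection of probability-one events still has probability one. Hence $\Delta_M^{\mathrm{dual}}\to0$ a.s.

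\textbf{Step 2: excess risk and calibration.} The population risk is literally the one of Appendix~\ref{app:pu-fixed-t}, so the identity $L_t^{\mathrm{PU}}(g)-L_t^{\mathrm{PU}}(g_t^\star)=\mathbb E_{P_t}[\mathrm{KL}(\mathrm{Bern}(\eta_t)\|\mathrm{Bern}(\sigma(g)))]\ge 2\|\sigma(g)-\eta_t\|_{L^1(P_t)}^2$ carries over verbatim. By (A3), $g_t^\star\in\mathcal G_t$. The minimizing property of $\widehat g^{\mathrm{dual}}_{t,M}$ and the three-term decomposition then give
\[
L_t^{\mathrm{PU}}(\widehat g^{\mathrm{dual}}_{t,M})-L_t^{\mathrm{PU}}(g_t^\star)\le 2\Delta_M^{\mathrm{dual}}.
\]
Combining the two bounds yields $\|\widehat f^{\mathrm{dual}}_{t,M}-\eta_t\|_{L^1(P_t)}^2\le\Delta_M^{\mathrm{dual}}\to0$ a.s.

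\textbf{Main obstacle.} The only delicate point is Step 1: confirming that the weights in \eqref{eq:alg2-dual-pu-risk} are exactly those for which \autoref{thm:alg2-positive-smc} and \autoref{thm:smc_a2} were proved. These are the $S_t$- and $R_t$-reweighted versions of $\bar w_{q,t}$. The appropriately normalized algorithmic $w_c$ and $w_\emptyset$ are the same objects in the particle limit, but the theorem should be read with the ratio-recovered weights. One must also check that the two theorems apply with the three-way proposal \eqref{eq:alg2-common-proposal}. This follows because \autoref{lem:alg2-bounded-potentials} supplies the same bounded-potential hypothesis for $G^q$ that was used there, so the SMC LLN for $\nu_t^M$, and hence both uniform convergences, remain valid.
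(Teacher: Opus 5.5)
Your proposal is correct and follows the paper's proof essentially step for step: you bound $\Delta_M^{\mathrm{dual}}$ by the two uniform deviations supplied by Theorems~\ref{thm:alg2-positive-smc} and~\ref{thm:smc_a2} (using that $\mathcal H_t$ contains $\mathcal G_t$ and $\{\ell_0\circ g:g\in\mathcal G_t\}$), and then rerun the excess-risk and Pinsker argument of Theorem~\ref{thm:fixed_t_weighted_pu_consistency} with $\Delta_{M,N}$ replaced by $\Delta_M^{\mathrm{dual}}$. Your two extra checks are points the paper leaves implicit rather than deviations from it: that the dual risk must be read with the ratio-recovered weights $\bar w_{c,t}^j,\widetilde w_t^j$, and that Lemma~\ref{lem:alg2-bounded-potentials} keeps the SMC law of large numbers valid under the three-way proposal.
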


\begin{proof}
Fix $t$.
By \autoref{thm:alg2-positive-smc}, for the class $\mathcal H_t$,
$
\sup_{h\in\mathcal H_t}
\bigl|
\widehat P_t^{+,M}(h)-P_t^+(h)
\bigr|
\;\xrightarrow{\mathrm{a.s.}}\;
0.
$
By \autoref{thm:smc_a2},
$
\sup_{h\in\mathcal H_t}
\bigl|
\widehat P_t^{M}(h)-P_t(h)
\bigr|
\;\xrightarrow{\mathrm{a.s.}}\;
0.
$
Since $\mathcal H_t$ contains $\mathcal G_t$ and $\{\ell_0\circ g:g\in\mathcal G_t\}$ by Assumption~\textnormal{(\note{A1})} (inherited from Appendix~\ref{app:pu-fixed-t}),
it follows that
\begin{align*}
\Delta_M^{\mathrm{dual}}
&:=
\sup_{g\in\mathcal G_t}
\bigl|
\widehat L_{t,M}^{\mathrm{dual}}(g)-L_t^{\mathrm{PU}}(g)
\bigr|
\\
&\le
\pi_t
\sup_{g\in\mathcal G_t}
\bigl|
\widehat P_t^{+,M}(g)-P_t^+(g)
\bigr|
+
\sup_{g\in\mathcal G_t}
\bigl|
\widehat P_t^{M}(\ell_0\circ g)-P_t(\ell_0\circ g)
\bigr|
\\
&\xrightarrow{\mathrm{a.s.}} 0.
\end{align*}

Thus the dual weighted empirical risk satisfies the same uniform convergence property as the empirical risk in \autoref{thm:fixed_t_weighted_pu_consistency}.
Moreover, its population target is the same risk $L_t^{\mathrm{PU}}$.
Therefore the proof of
\autoref{thm:fixed_t_weighted_pu_consistency}
applies verbatim with $\Delta_{M,N}$ replaced by
$\Delta_M^{\mathrm{dual}}$.
Using Assumption~\textnormal{(A3)} and the existence of the measurable empirical minimizer $\widehat g_{t,M}^{\mathrm{dual}}$, we obtain
\[
\bigl\|
\widehat f_{t,M}^{\mathrm{dual}}-\eta_t
\bigr\|_{L^1(P_t)}
\;\xrightarrow{\mathrm{a.s.}}\;
0.
\]
\end{proof}

\paragraph{Proof-oriented variant and its asymptotic exactness.}
The proof-oriented variant of Appendix~\ref{app:plug-in-smc} carries over: at each $t$, the classifier is trained from an auxiliary system, here a dual-weight particle system of size $M'$ independent of the main one.

For each $t$, let $\widehat f_t^{M'}$ denote the classifier trained from the auxiliary system, and define its clipped version by
\[
\bar f_t^{M'}(x)
:=
\min\{1-\delta,\max(\widehat f_t^{M'}(x),\delta)\},
\]
where $\delta\in(0,1)$ is the constant in Assumption~\textnormal{(A4)}, and set
\[
\widehat\omega_t^{M'}(x)
:=
\bigl(1-\bar f_t^{M'}(x)\bigr)^{\lambda_t}.
\]
This yields $\delta^{\lambda_0}\le\widehat\omega_t^{M'}(x)\le 1$ for all $t$ and $x$.

Let the main particle system use these clipped classifiers as exogenous objects,
together with any measurable proposal kernels regularized as in Appendix~\ref{app:alg2-dual-weight-exactness}, and denote by
$\widehat\nu_0^{M\,|\,M'}$
its weighted empirical output at time $0$.
Further define the plug-in terminal target
\[
\widetilde\nu_0^{M'}(dx)
:=
\frac{\widehat\omega_0^{M'}(x)\,p_{\theta,0}(dx)}
{\int \widehat\omega_0^{M'}(x)\,p_{\theta,0}(dx)}.
\]

\begin{theorem}[Asymptotic exactness of the score-based instantiation in the analysis variant]
\label{thm:alg2-plugin-exactness}
Assume \textnormal{(A3)--(A5)}.
Consider the proof-oriented variant described above,
in which the classifiers are trained from an auxiliary dual-weight particle system independent of the main particle system used in the final SMC approximation.
Assume moreover that, for each auxiliary size $M'$ and each $t$, a measurable empirical minimizer defining $\widehat f_t^{M'}$ exists.

Then, for every bounded measurable
$\varphi:\mathcal X_0\to\mathbb R$,
\[
\lim_{M'\to\infty}\;
\lim_{M\to\infty}\;
\widehat\nu_0^{M\,|\,M'}(\varphi)
=
\nu_0(\varphi)
\qquad\text{a.s.},
\]
where
\[
\nu_0(dx)
:=
\frac{(1-\eta_0(x))^{\lambda_0}\,p_{\theta,0}(dx)}
{\int (1-\eta_0(x))^{\lambda_0}\,p_{\theta,0}(dx)}.
\]
This establishes the extension of \autoref{thm:main-informal} to the score-based instantiation of \autoref{sec:method:score}, as stated informally at the end of \autoref{sec:theory}.
\end{theorem}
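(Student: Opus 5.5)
The proof will mirror that of \autoref{thm:plugin_smc_exactness} almost verbatim. The only change is the source of the classifier consistency at $t=0$: \autoref{thm:alg2-fixed-time-consistency} replaces \autoref{cor:fixed-time-classifier-consistency}. As before, I will split
\[
\widehat\nu_0^{M\,|\,M'}(\varphi)-\nu_0(\varphi)
=\bigl(\widehat\nu_0^{M\,|\,M'}(\varphi)-\widetilde\nu_0^{M'}(\varphi)\bigr)+\bigl(\widetilde\nu_0^{M'}(\varphi)-\nu_0(\varphi)\bigr)
\]
into an SMC error, controlled by the inner limit $M\to\infty$ at frozen classifiers, and a plug-in target error, controlled by the outer limit $M'\to\infty$.

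\textbf{Step 1: SMC error at frozen classifiers.} Because the auxiliary dual-weight system is independent of the main one, conditioning on it makes the clipped factors $\widehat\omega_t^{M'}$ and the regularized kernels $M_{t-1}^{\mathrm{alg2}}$ deterministic. The main system is then an ordinary Feynman--Kac model. By \autoref{prop:alg2-common-proposal} the Radon--Nikodym part of the potential is at most $\rho_\emptyset^{-1}$. Clipping gives $\delta^{\lambda_0}\le\widehat\omega_t^{M'}\le 1$, so each tilted potential is bounded by $\rho_\emptyset^{-1}\delta^{-\lambda_0}$. The strong law \citep[Prop.~11.4]{chopin2020introduction} then gives convergence of the first term to zero a.s. conditionally on the auxiliary system. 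Fubini lifts this to joint a.s. convergence.

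\textbf{Step 2: identifying the induced target.} The telescoping in \autoref{prop:path-identification}(ii) applies to the learned factors, using $\widehat\omega_T^{M'}\equiv1$ since $\lambda_T=0$. It shows that the terminal marginal is exactly $\widetilde\nu_0^{M'}$, so only $\widehat f_0^{M'}$ matters.

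\textbf{Step 3: plug-in convergence.} I apply \autoref{thm:alg2-fixed-time-consistency} at $t=0$ to the auxiliary system with $M'$ particles, which gives $\|\widehat f_0^{M'}-\eta_0\|_{L^1(p_{\theta,0})}\to0$ a.s. The map $\mathrm{clip}$ is $1$-Lipschitz and fixes $\eta_0$ by (A4), and $u\mapsto(1-u)^{\lambda_0}$ is Lipschitz on $[\delta,1-\delta]$. Together these give $\widehat\omega_0^{M'}\to\omega_0$ in $L^1(p_{\theta,0})$. Both normalizers are at least $\delta^{\lambda_0}$ and $\varphi$ is bounded, so
\[
|\widetilde\nu_0^{M'}(\varphi)-\nu_0(\varphi)|\lesssim\|\varphi\|_\infty\,\|\widehat\omega_0^{M'}-\omega_0\|_{L^1}\,\delta^{-2\lambda_0}\to0 .
\]
Combining Steps 1--3 yields the iterated limit.

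\textbf{Main obstacle.} The delicate point is Step 3, specifically checking that \autoref{thm:alg2-fixed-time-consistency} really applies to the auxiliary system as constructed. Its proof recovers both $P_0^+$ and $P_0$ through the ratio transforms $S_0$ and $R_0$, applied to the oracle-tilted measure $\nu_t^M$. The auxiliary system, however, is itself driven by learned or frozen tilts, so this needs justification.

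I would handle it first by noting that at $t=0$ the ratio recovery only needs some bounded tilt whose inverse is known. The $w_\emptyset$ and $w_c$ potentials, bounded by $\rho_\emptyset^{-1}$ and $\rho_c^{-1}$ via \autoref{lem:alg2-bounded-potentials}, target $p_{\theta,t}$ and $P_t^+$ directly. The uniform LLN over $\mathcal H_0$ then follows from the $\varepsilon$-net argument of \autoref{lem:uniform_smc} together with the self-normalization step used in \autoref{thm:smc_a2}.

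As a fallback, I would invoke the remark in Appendix~\ref{app:plug-in-smc} that any construction satisfying the two uniform laws suffices for the unlabeled side. I would then treat the positive side via \autoref{thm:alg2-positive-smc} applied to an auxiliary system with oracle-clipped tilts, which is permitted because the auxiliary system is purely an analysis device.
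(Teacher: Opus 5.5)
Your proof is correct and follows the paper's argument step for step: the same SMC/plug-in decomposition, bounded plug-in potentials under the three-way regularized proposal with conditional-to-joint almost-sure lifting, telescoping to identify $\widetilde\nu_0^{M'}$, and \autoref{thm:alg2-fixed-time-consistency} at $t=0$ followed by the clipping and Lipschitz bound. The paper does not isolate your \emph{main obstacle}; it implicitly takes the auxiliary dual-weight system to be the oracle-tilted construction behind \autoref{thm:alg2-positive-smc} and invokes \autoref{thm:alg2-fixed-time-consistency} directly, which is exactly your fallback (your first suggestion also works, but only if the $w_c$-model is run with its own resampling or none, since under $w_q$-driven resampling the ratio $w_c/w_q$ carries the unbounded path factors $dK^c_{s-1}/dK_{s-1}$).
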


\begin{proof}
Fix a bounded measurable test function $\varphi$.
As in the proof of \autoref{thm:plugin_smc_exactness}, decompose
\begin{equation}
\label{eq:alg2-error-decomposition}
\widehat\nu_0^{M\,|\,M'}(\varphi)-\nu_0(\varphi)
=
\underbrace{\Bigl(
\widehat\nu_0^{M\,|\,M'}(\varphi)-\widetilde\nu_0^{M'}(\varphi)
\Bigr)}_{\textnormal{(I): SMC error}}
+
\underbrace{\Bigl(
\widetilde\nu_0^{M'}(\varphi)-\nu_0(\varphi)
\Bigr)}_{\textnormal{(II): plug-in target error}}.
\end{equation}

\paragraph{Step 1: SMC error for fixed learned classifiers.}
Fix $M'$.
Because the auxiliary particle system is independent of the main one, the collection
\[
\Bigl\{
\bar f_t^{M'},\widehat\omega_t^{M'}
\Bigr\}_{t=0}^T
\]
is deterministic from the viewpoint of the main particle system.
Hence, conditional on the auxiliary system, the main particle system is an ordinary Feynman--Kac model with deterministic proposal kernels and deterministic potentials.

By the three-way regularized common-proposal construction of Appendix~\ref{app:alg2-dual-weight-exactness}, the Radon--Nikodym part of the incremental weights is uniformly bounded.
Since
\[
\delta^{\lambda_0}
\le
\widehat\omega_t^{M'}(x)
\le 1
\qquad\text{for all }t,x,
\]
the corresponding plug-in tilted potentials are also uniformly bounded.
Therefore the standard strong law for SMC with multinomial resampling applies,
exactly as in Appendix~\ref{app:plug-in-smc}, and yields
\[
\widehat\nu_0^{M\,|\,M'}(\varphi)
-
\widetilde\nu_0^{M'}(\varphi)
\;\xrightarrow[M\to\infty]{\mathrm{a.s.}}\;
0
\]
conditionally on the auxiliary system, hence jointly almost surely.

\paragraph{Step 2: Identification of the induced terminal target.}
For fixed $M'$, the plug-in tilted potentials have the same telescoping structure as in the reference-based instantiation:
\[
\prod_{s=1}^T
\frac{\widehat\omega_{s-1}^{M'}(x_{s-1})}
     {\widehat\omega_s^{M'}(x_s)}
=
\frac{\widehat\omega_0^{M'}(x_0)}
     {\widehat\omega_T^{M'}(x_T)}.
\]
Since $\lambda_T=0$, we have $\widehat\omega_T^{M'}\equiv 1$.
Hence the terminal marginal induced by the learned Feynman--Kac model is exactly
\[
\widetilde\nu_0^{M'}(dx)
=
\frac{\widehat\omega_0^{M'}(x)\,p_{\theta,0}(dx)}
{\int \widehat\omega_0^{M'}(x)\,p_{\theta,0}(dx)}.
\]
As in the reference-based instantiation, intermediate classifier errors may affect the proposal mechanism and finite-particle performance, but not the form of the induced terminal target.

\paragraph{Step 3: Plug-in convergence of the terminal target.}
By \autoref{thm:alg2-fixed-time-consistency} applied at $t=0$,
\[
\|\widehat f_0^{M'}-\eta_0\|_{L^1(P_0)}
\;\xrightarrow[M'\to\infty]{\mathrm{a.s.}}\;
0.
\]
Since the clipping map
\[
\mathrm{clip}(u):=\min\{1-\delta,\max(u,\delta)\}
\]
is $1$-Lipschitz and, by Assumption~\textnormal{(A4)},
$\eta_0(x)\in[\delta,1-\delta]$ for $P_0$-a.e.\ $x$, we have
$\mathrm{clip}(\eta_0)=\eta_0$ $P_0$-a.e., and therefore
\[
\|\bar f_0^{M'}-\eta_0\|_{L^1(P_0)}
\le
\|\widehat f_0^{M'}-\eta_0\|_{L^1(P_0)}
\;\xrightarrow[M'\to\infty]{\mathrm{a.s.}}\;
0.
\]

Now the map
\[
u\mapsto (1-u)^{\lambda_0}
\]
is Lipschitz on the compact interval $[\delta,1-\delta]$.
Hence
\[
\|\widehat\omega_0^{M'}-\omega_0\|_{L^1(P_0)}
\;\xrightarrow[M'\to\infty]{\mathrm{a.s.}}\;
0,
\qquad
\omega_0(x):=(1-\eta_0(x))^{\lambda_0}.
\]
Moreover,
\[
\widehat\omega_0^{M'}(x)\ge \delta^{\lambda_0},
\qquad
\omega_0(x)\ge \delta^{\lambda_0},
\]
so the corresponding normalizing constants are uniformly bounded away from zero.
Therefore, for every bounded measurable $\varphi$,
\[
\widetilde\nu_0^{M'}(\varphi)
\;\xrightarrow[M'\to\infty]{\mathrm{a.s.}}\;
\nu_0(\varphi).
\]
Thus term \textnormal{(II)} in Eq.~\eqref{eq:alg2-error-decomposition} vanishes.

\paragraph{Step 4: Conclusion.}
For each fixed $M'$, Step~1 gives
\[
\lim_{M\to\infty}\widehat\nu_0^{M\,|\,M'}(\varphi)
=
\widetilde\nu_0^{M'}(\varphi)
\qquad\text{a.s.}
\]
Step~3 gives
\[
\widetilde\nu_0^{M'}(\varphi)
\;\xrightarrow[M'\to\infty]{\mathrm{a.s.}}\;
\nu_0(\varphi).
\]
Combining these two limits proves
\[
\lim_{M'\to\infty}\;
\lim_{M\to\infty}\;
\widehat\nu_0^{M\,|\,M'}(\varphi)
=
\nu_0(\varphi)
\qquad\text{a.s.},
\]
as claimed.
\end{proof}

\clearpage

%% file: appendix_exp.tex
\section{Experimental Details and Additional Results}
\label{app:exp}

This appendix expands \autoref{sec:experiments} with full setups and additional results.
\autoref{app:exp:gmm} covers the score-based instantiation (\autoref{sec:method:score}, \autoref{alg:neg_das_pu_singlechain}),
and \autoref{app:exp:cifar10} the reference-based instantiation (\autoref{sec:method:ref}, \autoref{alg:neg_das_pu_fixed}).
\note{\autoref{app:exp:sd} scales the score-based instantiation to Stable Diffusion.}
Code and per-experiment documentation are provided in the supplementary material.

\paragraph{Compute.}
GMM experiments run on CPU (Intel Xeon Max 9480).
CIFAR-10 \note{and Stable Diffusion} experiments run on a single NVIDIA GH200 GPU.
Python and library versions are pinned in the supplementary material.

\subsection{GMM with Score-Based Setting}
\label{app:exp:gmm}

This appendix gives the full specification of the GMM benchmark in \autoref{sec:exp:gmm}.
The closed-form structure of the GMM yields exact expressions for conditional and unconditional scores $\nabla_{x_t}\log p_t(x_t\mid\cdot)$, which lets us evaluate samplers against ground-truth quantities without confounding from a learned score model.

\subsubsection{Setup}
\label{app:exp:gmm:setup}

\paragraph{Data density and class partition.}
The unconditional density $p(x)$ is a $12$-component GMM in $\mathbb{R}^{3}$, generated as follows.
\textbf{Mode means.} For $i=0,\ldots,11$, let $\tilde\theta_i = 2\pi i/12 + \delta_i$ with $\delta_i \sim U(-0.2, 0.2)$, $r_i = 10(1+\varepsilon_i)$ with $\varepsilon_i \sim U(-0.3, 0.3)$, and $z_i \sim U(-1, 1)$, all independent. Set $\tilde\mu_i = (r_i\cos\tilde\theta_i,\; r_i\sin\tilde\theta_i,\; z_i)^\top$ and center the means by $\mu_i = \tilde\mu_i - \tfrac{1}{12}\sum_{j} \tilde\mu_j$.
\textbf{Mode covariances.} For each mode $i$, define base unit vectors $\mathbf{e}_r = (\cos\tilde\theta_i, \sin\tilde\theta_i, 0)^\top$ and $\mathbf{e}_t = (-\sin\tilde\theta_i, \cos\tilde\theta_i, 0)^\top$. Draw $\rho \sim U(0, 0.3)$, $\alpha \sim U(0, 2\pi)$, and set $\phi = \alpha\rho$. Rotate the base frame in the $(\mathbf{e}_r, \mathbf{e}_t)$ plane: $\mathbf{u}_0 = \cos\phi\, \mathbf{e}_r + \sin\phi\, \mathbf{e}_t$ and $\mathbf{v}_0 = -\sin\phi\, \mathbf{e}_r + \cos\phi\, \mathbf{e}_t$. Perturb the first axis with $\boldsymbol{\xi} \sim \mathcal{N}(\mathbf{0}, 0.01\, I_3)$ and normalize: $\mathbf{u} = (\mathbf{u}_0 + \boldsymbol{\xi})/\|\mathbf{u}_0 + \boldsymbol{\xi}\|$. Re-orthogonalize via Gram--Schmidt: $\mathbf{v} = (\mathbf{v}_0 - (\mathbf{v}_0^\top \mathbf{u})\mathbf{u})/\|\mathbf{v}_0 - (\mathbf{v}_0^\top \mathbf{u})\mathbf{u}\|$ and $\mathbf{w} = \mathbf{u} \times \mathbf{v}$. Draw eigenvalues $\lambda_r \sim U(1.0, 2.5)$, $\lambda_t \sim U(0.3, 0.8)$, $\lambda_z \sim U(0.5, 2.0)$ independently, and set $\Sigma_i = R_i\, \mathrm{diag}(\lambda_r, \lambda_t, \lambda_z)\, R_i^\top$ with $R_i = [\mathbf{u}\;\, \mathbf{v}\;\, \mathbf{w}]$.
\textbf{Weights and class partition.} Draw $(w_0, \ldots, w_{11}) \sim \mathrm{Dir}(5 \cdot \mathbf{1}_{12})$. Select an index set $\mathcal{P} \subset \{0, \ldots, 11\}$ of size $3$ uniformly without replacement, and let $\mathcal{N}$ be its complement. Renormalize within each class: $w_i^{+} = w_i / \sum_{j \in \mathcal{P}} w_j$ for $i \in \mathcal{P}$ and $w_i^{-} = w_i / \sum_{j \in \mathcal{N}} w_j$ for $i \in \mathcal{N}$. The joint density is $p(x) = \pi \sum_{i \in \mathcal{P}} w_i^{+}\, \mathcal{N}(x; \mu_i, \Sigma_i) + (1 - \pi) \sum_{i \in \mathcal{N}} w_i^{-}\, \mathcal{N}(x; \mu_i, \Sigma_i)$ with class prior $\pi = 3/12 = 0.25$. All randomness above is controlled by a single integer layout seed.

\begin{figure}[H]
\centering
\includegraphics[width=0.95\linewidth]{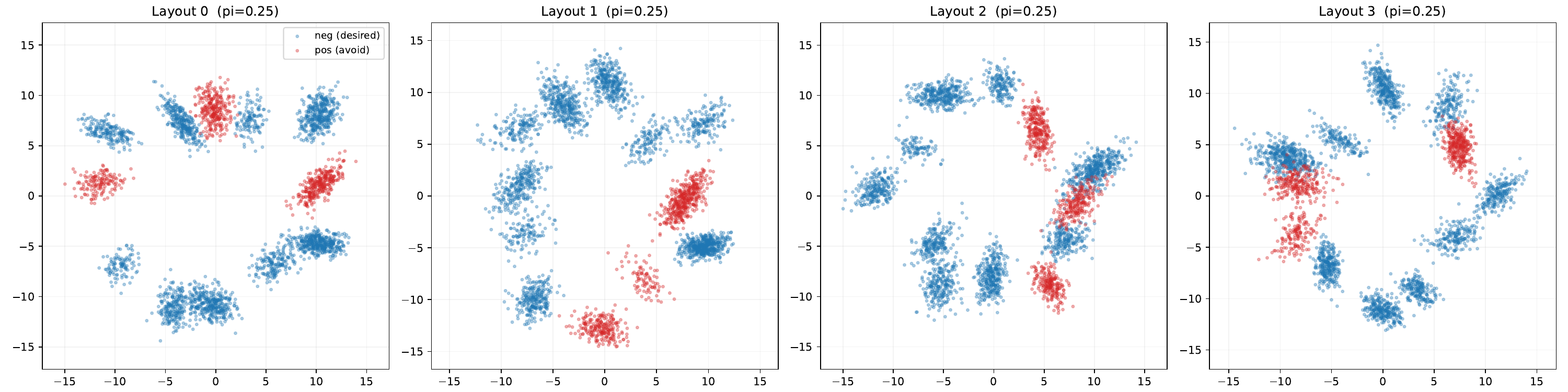}
\caption{Four GMM layouts drawn at random.
Each panel projects $3000$ unconditional samples onto the $xy$-plane.
Red points are samples from the avoided component density $p(\cdot\mid c)$ and blue points are samples from $p(\cdot\mid\neg c)$.
}
\label{fig:gmm-layouts}
\end{figure}

\paragraph{Diffusion process and ground-truth quantities.}
We use the cosine schedule of variance-preserving DDPM~\cite{pmlr-v139-nichol21a}, 
offset $s=0.008$, $\beta_{\max}=0.999$, and $T=100$ timesteps.
Each marginal $p_t(\cdot\mid c)$ and $p_t(\cdot\mid\neg c)$ is a closed-form GMM with means $\sqrt{\bar\alpha_t}\mu_k$ and covariances $\bar\alpha_t\Sigma_k+(1-\bar\alpha_t)I$, so the conditional scores and the posterior $p_t(c\mid x_t)$ are analytic.
DNG and ours in \autoref{alg:neg_das_pu_singlechain} receive the unconditional score $\nabla_{x_t}\log p_t(x_t)$ and the conditional score $\nabla_{x_t}\log p_t(x_t\mid c)$.

\clearpage

\paragraph{Methods and sweep grids.}
We compare four samplers below.
\begin{itemize}
\item \textbf{Oracle.} DDPM sampling with $\nabla_{x_t}\log p_t(x_t\mid\neg c)$.
\item \textbf{NegCFG.} Repulsion strength $\eta\in\{0,0.1,0.2,0.5,0.7,1.0,1.5,2.0,3.0\}$.
\item \textbf{DNG} \citep{koulischer2025dynamic}.
The CFG strength is $-\eta \  \hat p_t(c|x_t) / (1-\hat p_t(c|x_t))$.
We follow their implementation in \citep{koulischer2025dynamic}, which estimates $\hat p_t(c|x_t)$ by
$
\log \hat p_{t-1}(c| x_{t-1})
\leftarrow
\log \hat p_t(c| x_t)
-
\tfrac{\tau}{2\sigma_t^2}
\left(
\|x_{t-1}-\mu_\theta(x_t,t| c)\|^2
-
\|x_{t-1}-\mu_\theta(x_t,t)\|^2
\right) + \tfrac{\delta}{2\sigma_t^2},
$
introducing the hyperparameters $\tau$ and $\delta$ in place of the original update~\eqref{eq:dngposterior}. We set $\hat p_T(c\mid x_T)=\pi$ and $\delta=0$,
and sweep $\lambda_0\in\{0.5,1,2,5,10,20\}$ at $\tau\in\{0.2,1.0\}$;
the main-text comparison uses $(\lambda_0,\tau)=(1,1)$, written as ``DNG ($\eta{=}1$)'' there.
\item \textbf{Ours.} \autoref{alg:neg_das_pu_singlechain} with
$\mu_t^{\mathrm{prop}} = \mu_\theta(x_t,t) + \sigma_t^2\lambda_{t-1}\nabla_{x_t}\log(1-f_t(\hat x_0(x_t)))$
of \eqref{eq:prop-fixed}.
\end{itemize}

\paragraph{Online PU classifier.}
$f_t = g_0 \circ \hat x_0$ is a kernel logistic regression in $\hat x_0$ space, refit at every step. The dictionary at step $t$ comprises $n_p{=}128$ centers subsampled from the conditional Tweedie estimates and $n_u{=}64$ from the unconditional ones, taken over the current proposal particles. The bandwidth uses the median heuristic scaled by $0.3$. Weights are obtained by L-BFGS minimization of the uPU logistic risk \eqref{eq:upu} with an RKHS-norm penalty $\tfrac{\note{\lambda_{\mathrm{reg}}}}{2}\|g\|_{\mathcal H_k}^2$, $\note{\lambda_{\mathrm{reg}}} = 10^{-4}$; the uPU class prior is annealed as $\pi_t = 0.02 + 0.13\exp(-5(1 - t/T))$.

\paragraph{SMC parameters.}
Tempering of $\lambda_t$ is adaptive: $\delta_t \in [0, 1.5/T]$ is found by binary search so that the ESS of $\{w_{q,t}^j(1 - f_t(\hat x_0(x_t^j)))^{\delta_t}\}$ matches the target $0.9M$, and $\lambda_{t-1} \leftarrow \min(\lambda_t + \delta_t, 1)$. ESS-based resampling \citep{chopin2020introduction} fires below $0.9M$ and is disabled for the final $5$ steps.

\subsubsection{Results}
\label{app:exp:gmm:results}

\begin{table}[t]
\centering
\small
\caption{NegCFG $\eta$ sweep, $50$ trials, mean$\pm$std.
Mode coverage\ $\!=\mathbb P[\min_{k\in\neg c}\|x-\mu_k\|<5]$.}
\label{tab:gmm-negcfg}
\begin{tabular}{rccc}
\toprule
$\eta$ & Leak (mean) $\downarrow$ & $W_2$ $\downarrow$ & Mode coverage\ $\uparrow$\\
\midrule
$0.0$ (uncond) & $0.253\pm0.016$ & $3.6\pm0.6$ & $1.00$ \\
$0.1$          & $0.193\pm0.014$ & $2.7\pm0.4$ & $1.00$ \\
$0.2$          & $0.150\pm0.013$ & $3.5\pm0.7$ & $1.00$ \\
$0.5$          & $0.068\pm0.008$ & $9.7\pm2.4$ & $0.99$ \\
$0.7$          & $0.041\pm0.007$ & $14.1\pm3.8$ & $0.97$ \\
$1.0$          & $0.018\pm0.005$ & $20.5\pm6.3$ & $0.94$ \\
$1.5$          & $0.005\pm0.002$ & $34.6\pm17.3$ & $0.85$ \\
$2.0$          & $0.001\pm0.001$ & $50.9\pm34.5$ & $0.74$ \\
$3.0$          & $0.000\pm0.000$ & $102.0\pm107.2$ & $0.50$ \\
\bottomrule
\end{tabular}
\end{table}

\begin{table}[t]
\centering
\small
\caption{DNG sweep, $50$ trials. The main-text comparison uses $(\lambda_0,\tau)=(1,1)$.}
\label{tab:gmm-dng}
\begin{tabular}{rcccc}
\toprule
       & \multicolumn{2}{c}{$\tau=1.0$} & \multicolumn{2}{c}{$\tau=0.2$}\\
\cmidrule(lr){2-3}\cmidrule(lr){4-5}
$\lambda_0$ & Leak $\downarrow$ & $W_2$ $\downarrow$ & Leak $\downarrow$ & $W_2$ $\downarrow$ \\
\midrule
$0.5$  & $0.101\pm0.013$ & $2.2\pm0.4$ & $0.164\pm0.013$ & $2.3\pm0.4$ \\
$1.0$  & $0.052\pm0.010$ & $1.6\pm0.3$ & $0.105\pm0.011$ & $2.1\pm0.3$ \\
$2.0$  & $0.015\pm0.005$ & $3.5\pm0.7$ & $0.040\pm0.006$ & $4.6\pm0.7$ \\
$5.0$  & $0.001\pm0.001$ & $7.8\pm1.1$ & $0.002\pm0.001$ & $9.5\pm1.1$ \\
$10.0$ & $0.000\pm0.000$ & $11.1\pm1.2$ & $0.000\pm0.000$ & $12.9\pm1.6$ \\
$20.0$ & $0.000\pm0.000$ & $14.6\pm1.6$ & $0.000\pm0.000$ & $15.7\pm1.9$ \\
\bottomrule
\end{tabular}
\end{table}

\paragraph{Baseline sweeps.}
Tables~\ref{tab:gmm-negcfg} and~\ref{tab:gmm-dng} report the full sweeps for NegCFG and DNG.
Both methods exhibit a clear trade-off between leak and $W_2$.
NegCFG drives leak monotonically toward zero as $\eta$ grows, but $W_2$ inflates and mode coverage drops below $1$ for $\eta\ge 0.7$, indicating mode collapse onto a few $\neg c$-modes.
DNG shows the same trade-off at both $\tau\in\{0.2,1.0\}$, and the configuration closest to the Oracle leak ($\lambda_0=2,\tau=1$) still attains $W_2=3.5$, more than twice the Oracle.
These patterns are consistent with the local-repulsion limitations discussed in \autoref{sec:existing}.

\note{
\paragraph{Timestep count for DNG.}
\label{app:exp:gmm-dng-T}
With exact scores and at $\eta{=}1$, DNG is exact only as $T\to\infty$ (Appendix~\ref{app:prelim:dng}).
Table~\ref{tab:gmm-dng-T} measures how the sampler approaches that limit.
Only the timestep count $T$ varies; the method is held at the main-text configuration $(\lambda_0,\tau)=(1,1)$ at $M{=}1024$ over $15$ trials,
and the Oracle is recomputed at each $T$ as a matched floor.
The leak falls monotonically from $0.053$ to $0.020$, and most of the gain is already reached by $T{=}500$.
The gap to the matched Oracle narrows from $0.041$ to $0.009$ without closing,
so a twentyfold serial budget still leaves the leak at about $1.8$ times the Oracle value.

\begin{table}[t]
\centering
\small
\caption{Timestep-count ablation for DNG on the Gaussian mixture,
mean$\pm$std over $15$ trials, at $(\lambda_0,\tau)=(1,1)$ and $M{=}1024$.
The Oracle is recomputed at each $T$, and Gap is the difference between the two leak columns.}
\label{tab:gmm-dng-T}
\begin{tabular}{rccc}
\toprule
$T$ & Leak $\downarrow$ & Oracle leak & Gap \\
\midrule
$100$  & $0.053\pm0.010$ & $0.012$ & $0.041$ \\
$200$  & $0.040\pm0.009$ & $0.012$ & $0.028$ \\
$500$  & $0.028\pm0.006$ & $0.010$ & $0.018$ \\
$1000$ & $0.023\pm0.008$ & $0.011$ & $0.012$ \\
$2000$ & $0.020\pm0.009$ & $0.011$ & $0.009$ \\
\bottomrule
\end{tabular}
\end{table}
}

\note{
\paragraph{Particle count.}
Table~\ref{tab:gmm-M} varies the particle count $M$ with true scores over $15$ trials.
The Oracle is recomputed at each $M$, since the empirical $W_2$ of $M$ particles carries a finite-sample floor that grows as $M$ shrinks.
The leak is nearly $M$-independent at the matched Oracle floor (the mean at $M{=}256$ is inflated by a few trials), and $W_2$ decreases with $M$ as the floor does.
The ESS$/M$ stays at $0.96$--$0.97$ with two to three resampling events at every $M$,
so the degeneracy signature of a poor SMC path, an ESS collapse or constant resampling, is not observed.
Against the closed-form posterior, the AUC of $f_t$ near the terminal steps is $0.997$, $0.989$, and $0.980$ for $M{=}64$, $256$, and $1024$,
while the cosine similarity between $\nabla\log(1-f_t)$ and $\nabla\log(1-p(c\mid x))$ stays around $0.17$--$0.18$ at every $M$:
the estimate identifies the region to avoid accurately at every $M$ without being pointwise accurate.

\begin{table}[t]
\centering
\small
\caption{Particle-count ablation on the Gaussian mixture with true scores, mean$\pm$std over $15$ trials.
The Oracle is recomputed at each $M$. The row at $M{=}1024$ differs from Table~\ref{tab:mog}, which averages over $50$ trials.}
\label{tab:gmm-M}
\begin{tabular}{rcccccc}
\toprule
$M$ & Oracle leak & Ours leak $\downarrow$ & Oracle $W_2$ & Ours $W_2$ $\downarrow$ & ESS$/M$ & Resamples \\
\midrule
$64$   & $0.011$ & $0.009\pm0.014$ & $3.6$ & $3.5\pm0.7$ & $0.96$ & $3.1$ \\
$256$  & $0.014$ & $0.021\pm0.028$ & $2.5$ & $2.1\pm0.4$ & $0.96$ & $2.7$ \\
$1024$ & $0.012$ & $0.008\pm0.006$ & $1.5$ & $1.9\pm0.6$ & $0.97$ & $2.2$ \\
\bottomrule
\end{tabular}
\end{table}
}

\paragraph{Comparison with Ours.}
\note{Among the guidance methods in Table~\ref{tab:mog}, only ours} matches the Oracle leak ($0.012$ vs.\ $0.012$) without inflating $W_2$ ($1.8\pm0.6$ vs.\ $1.5\pm0.3$).
This is in line with the asymptotic-exactness result in \autoref{sec:theory}.
The residual gap in $W_2$ relative to the Oracle is attributable to a slight inter-mode imbalance induced by resampling, visible in \autoref{fig:mog} as a non-uniform allocation of particles across the nine $\neg c$-modes.

\subsection{CIFAR-10 with Reference-Based Setting}
\label{app:exp:cifar10}

This appendix gives the full specification of the CIFAR-10 experiment in \autoref{sec:exp:cifar10}.
The complete code is available in our supplementary material.

\subsubsection{Setup}
\label{app:exp:cifar10:setup}

\paragraph{Pretrained score model.}
We use the unconditional CIFAR-10 checkpoint from \cite{pmlr-v139-nichol21a}, distributed under the MIT license at \url{https://github.com/openai/improved-diffusion}.
Following the instructions in that repository, the weights were obtained from \url{https://openaipublic.blob.core.windows.net/diffusion/march-2021/cifar10_uncond_50M_500K.pt}.
This URL was confirmed accessible on 2026-05-05.
Sampling uses DDPM with the respaced schedule of $T=250$ steps, and images are normalized to $[-1,1]$.

\paragraph{Avoidance conditions and references.}
The single-class conditions for the avoidance experiment are $c\in\{\{\text{plane}\},\{\text{cat}\},\{\text{frog}\},\{\text{ship}\}\}$, and the multi-class condition \emph{Vehicles} is $c=\{\text{plane},\text{car},\text{ship},\text{truck}\}$.
This gives $N=200$ references for single-class conditions and $N=800$ for Vehicles.
The reference-selection seed is offset from the diffusion-noise seed so that the two are statistically independent.

\paragraph{Reference-based guidance: Safe Denoiser.}
We use the Safe Denoiser of \citet{kim2025trainingfree} (\autoref{sec:existing}) both as a baseline and as the proposal kernel in our SMC sampler.
Its bandwidth is set once per run by the median pairwise pixel distance among the references.
Following the original implementation, guidance is applied for $t > 0.08\,T = 20$ and disabled for the final steps.
We sweep the repulsion strength $\eta\in\{0.0, 0.2, 0.3, 0.5, 0.7, 1.0\}$.

\paragraph{Online PU classifier on UNet bottleneck features.}
The classifier $f_t$ is a linear logistic head on the $256$-dim mean-pooled output of the UNet bottleneck block (shape $4{\times}4{\times}256$).
Reference features are encoded once at $t=0$ on the clean references.
For the single-class conditions, particle features are read from the same forward pass that produces $\mu_\theta(x_t,t)$, giving $f_t(x_t) = h(\Phi_t(x_t))$ with a time-conditioned encoder $\Phi_t$.
For Vehicles, an additional UNet pass at $t=0$ on the clipped Tweedie estimate is applied, giving $f_t(x_t) = h(\Phi_0(\hat x_0(x_t)))$.
The head is fit by L-BFGS with warm-starting from the previous step, using the BCE surrogate of \autoref{sec:exp:cifar10}.

\paragraph{SMC parameters.}
We use a precomputed tempering schedule: $\lambda_t$ ramps linearly from $0$ to a terminal $\lambda$ over the last $0.6\,T=150$ steps.
ESS-based sorted systematic resampling fires below $0.9M$, and is enabled only for $t>15$ in single-class runs and $t>50$ in Vehicles.

\paragraph{Implementation choices that depart from the abstract algorithm.}
We set the proposal-ratio factor $r_t$ in \eqref{eq:wq-recursion} to $1$ \note{in the recursion of $w_{q,t}$}.
At $d=3072$, this ratio has high variance, and a $128$-particle filter cannot maintain a healthy ESS when it is included.
\note{The unconditional weight $w_{\emptyset,t}$ keeps the full ratio $r_t$ at every step as in~\eqref{eq:wempty-recursion}, and its update at resampling is the standard one of \autoref{sec:method:framework}.}
This treatment, together with the BCE surrogate, deviates from the analysis in \autoref{sec:theory}\note{, so the procedure is a practical approximation not covered by \autoref{thm:main-informal}}; the experiments then test whether the existing negative-guidance method can be corrected by a lightweight auxiliary classifier and resampling.
The classifier is trained and used only on the steps where $\lambda_t > 0$; earlier steps run the Safe Denoiser alone.
The L-BFGS head uses up to $100$ iterations on the first step and up to $20$ iterations on later steps, so the online training cost is small.

\paragraph{Metrics.}

The generated samples carry no class labels by construction, so we evaluate them with an independently trained ResNet-$18$ classifier~\citep{He_2016_CVPR}.
This classifier is not used during sampling and is fully decoupled from the diffusion pipeline.
We start from the torchvision ResNet-$18$ with the \texttt{IMAGENET1K\_V1} pretrained weights\footnote{\url{https://docs.pytorch.org/vision/stable/models/generated/torchvision.models.resnet18.html}} and apply two  modifications for low-resolution inputs: the first convolutional layer is replaced by a $3{\times}3$, stride-$1$ convolution, and the max-pooling layer is replaced by an identity mapping.
The final fully connected layer is replaced by a 10-class linear head.
We fine-tune all parameters for $50$ epochs on the CIFAR-10 training split using SGD (learning rate $0.01$, momentum $0.9$, weight decay $5{\times}10^{-4}$) with cosine annealing, a batch size of $256$, and data augmentation (random horizontal flipping and random $32{\times}32$ crops with four pixels of padding).
Train accuracy exceeds $99\%$, and the checkpoint with the highest test accuracy reaches ${\sim}95\%$ Top-1 on the CIFAR-10 test split, which we consider sufficient for classifying our generated samples.
The training code is included in the supplementary material.

\begin{itemize}
\item \emph{Target-class rate.}
The fraction of generated samples classified into $c$ by the ResNet-$18$.
For multi-class $c$, the rate counts any class in $c$.
\item \emph{FID.}
Computed via \texttt{pytorch-fid}~$0.3.0$ \citep{Seitzer2020FID} between the generated samples and the CIFAR-10 \emph{training} images that lie in $\neg c$, so the floor is the Fr\'echet distance between samples from $p(\cdot\mid\neg c)$ and the $\neg c$-only training set.
\item \emph{KL.}
$\mathrm{KL}(u_{\neg c}\,\|\,\hat p_{10})$, where $u_{\neg c}$ is the uniform distribution over non-avoided classes (mass $1/(10-|c|)$ on each $i \notin c$, $0$ on $c$), and $\hat p_{10}$ is the empirical $10$-class histogram of the ResNet-$18$ predictions clipped at $10^{-10}$.
\item \emph{Pairwise LPIPS.}
Average pairwise LPIPS \citep{Zhang_2018_CVPR} with the AlexNet backbone, computed on $256$ random samples per configuration.
\end{itemize}
The seed pool is shared across configurations so that paired comparisons (Ours vs.\ the Safe Denoiser at the same $\eta$) are well-defined.

\subsubsection{Results}
\label{app:exp:cifar10:results}

\begin{table}[!b]
\centering

\caption{PU regularization ablation at $\lambda=1.5$, $80$ seeds per cell.
Our method is run at $\eta=0.3$, while the Safe Denoiser baseline is shown at $\eta\in\{0.3, 0.5\}$, the two values that bracket the knee of its rate--FID curve in \autoref{fig:cifar10} and \autoref{fig:cifar-perclass-sweeps}.}
\label{tab:cifar-reg-ablation}

\begin{tabular}{llccccc}
\toprule
Class & Method & $\eta$ & Rate (\%) $\downarrow$ & FID $\downarrow$ & KL $\downarrow$ & LPIPS $\uparrow$ \\
\midrule
Plane & Safe Denoiser                          & $0.3$ & $4.78$ & $20.75$ & $0.127$ & $0.200$ \\
      & Safe Denoiser                          & $0.5$ & $2.95$ & $45.96$ & $0.279$ & $0.175$ \\
      & Ours, $\lambda_{\mathrm{reg}}=10^{-3}$ & $0.3$ & $3.50$ & $24.65$ & $0.148$ & $0.197$ \\
      & Ours, $\lambda_{\mathrm{reg}}=1.0$     & $0.3$ & $4.39$ & $23.66$ & $0.136$ & $0.198$ \\
\midrule
Cat   & Safe Denoiser                          & $0.3$ & $7.16$ & $20.63$ & $0.150$ & $0.207$ \\
      & Safe Denoiser                          & $0.5$ & $6.10$ & $44.53$ & $0.282$ & $0.183$ \\
      & Ours, $\lambda_{\mathrm{reg}}=10^{-3}$ & $0.3$ & $5.66$ & $24.95$ & $0.163$ & $0.203$ \\
      & Ours, $\lambda_{\mathrm{reg}}=1.0$     & $0.3$ & $6.17$ & $25.38$ & $0.200$ & $0.205$ \\
\midrule
Frog  & Safe Denoiser                          & $0.3$ & $2.55$ & $41.53$ & $0.181$ & $0.183$ \\
      & Safe Denoiser                          & $0.5$ & $0.87$ & $93.59$ & $0.503$ & $0.162$ \\
      & Ours, $\lambda_{\mathrm{reg}}=1.0$     & $0.3$ & $1.96$ & $49.25$ & $0.249$ & $0.183$ \\
      & Ours, $\lambda_{\mathrm{reg}}=10^{-3}$ & $0.3$ & $1.90$ & $50.31$ & $0.245$ & $0.185$ \\
\midrule
Ship  & Safe Denoiser                          & $0.3$ & $3.63$ & $23.68$ & $0.130$ & $0.194$ \\
      & Safe Denoiser                          & $0.5$ & $1.76$ & $55.77$ & $0.296$ & $0.169$ \\
      & Ours, $\lambda_{\mathrm{reg}}=1.0$     & $0.3$ & $2.52$ & $27.44$ & $0.133$ & $0.196$ \\
      & Ours, $\lambda_{\mathrm{reg}}=10^{-3}$ & $0.3$ & $3.22$ & $26.08$ & $0.138$ & $0.203$ \\
\bottomrule
\end{tabular}
\end{table}

\paragraph{Choice of the PU L2 regularizer.}
We study the sensitivity of our method to the L2 regularization strength $\lambda_{\mathrm{reg}}$ by comparing two values spanning three orders of magnitude ($10^{-3}$ and $1.0$).
Table~\ref{tab:cifar-reg-ablation} reports a per-class ablation at $\lambda=1.5$, with our method run at $\eta=0.3$.
For reference, we also list the Safe Denoiser at $\eta=0.3$ and $\eta=0.5$, the two operating points that bracket the knee of its rate--FID curve.
The choice of $\lambda_{\mathrm{reg}}$ shifts the metrics by a small amount, with within-class variation of at most $0.9$ percentage points on rate and $1.4$ FID points.
At either value, our method already reaches a lower target-class rate than the Safe Denoiser at $\eta=0.3$, while paying only \note{$2$--$9$} FID points above it.
The Safe Denoiser only matches our rate by raising $\eta$ to $0.5$, at which point its FID degrades by \note{$24$--$52$} points beyond $\eta=0.3$.
In other words, both choices of $\lambda_{\mathrm{reg}}$ yield Pareto improvements over the guidance-only baseline.
Based on this, we fix $\lambda_{\mathrm{reg}}=10^{-3}$ for plane and cat, and $\lambda_{\mathrm{reg}}=1.0$ for frog, ship, and Vehicles.

\paragraph{Per-class sweeps.}
\autoref{fig:cifar-perclass-sweeps} reports the rate-vs-quality trade-off for the cat, frog, and ship single-class conditions, using the defaults above.
The qualitative picture is consistent with the plane and Vehicles results in \autoref{fig:cifar10} in \autoref{sec:exp:cifar10}: our $\lambda$-sweep Pareto-dominates the Safe Denoiser at the same $\eta$ on FID and KL \note{in most configurations}, and remains comparable on pairwise LPIPS.
The shape of the Safe Denoiser curve clarifies why our method helps: in the small-$\eta$ regime, increasing $\eta$ improves the avoidance rate at modest cost to FID and KL, but beyond a knee around $\eta=0.5$ the quality metrics degrade sharply for any further rate reduction.
Our method takes a Safe Denoiser proposal in this small-$\eta$ regime and adds resampling driven by the PU classifier, which extracts further rate reduction along the part of the curve where the quality cost is still mild.

\begin{figure}[tp]
\centering
\includegraphics[width=0.891\linewidth]{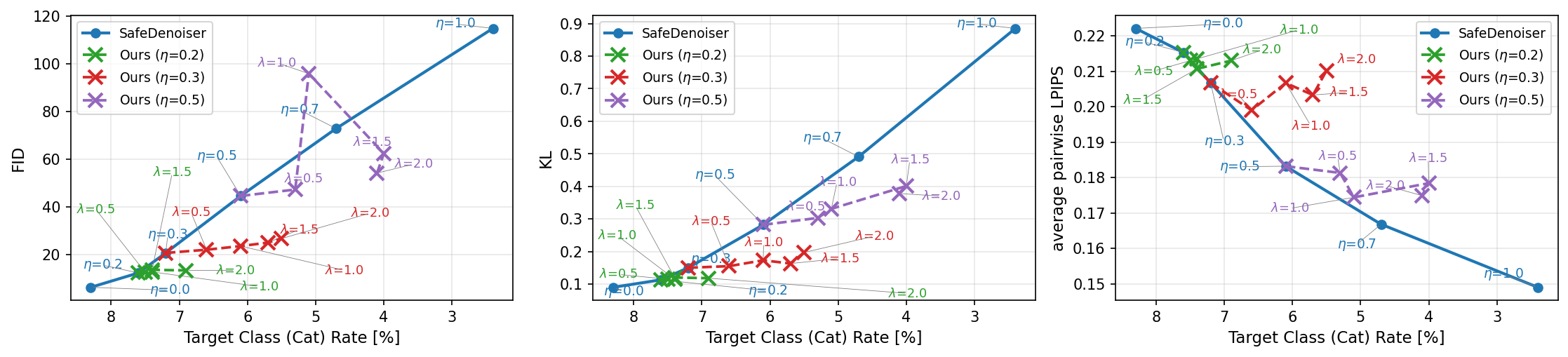}
\includegraphics[width=0.891\linewidth]{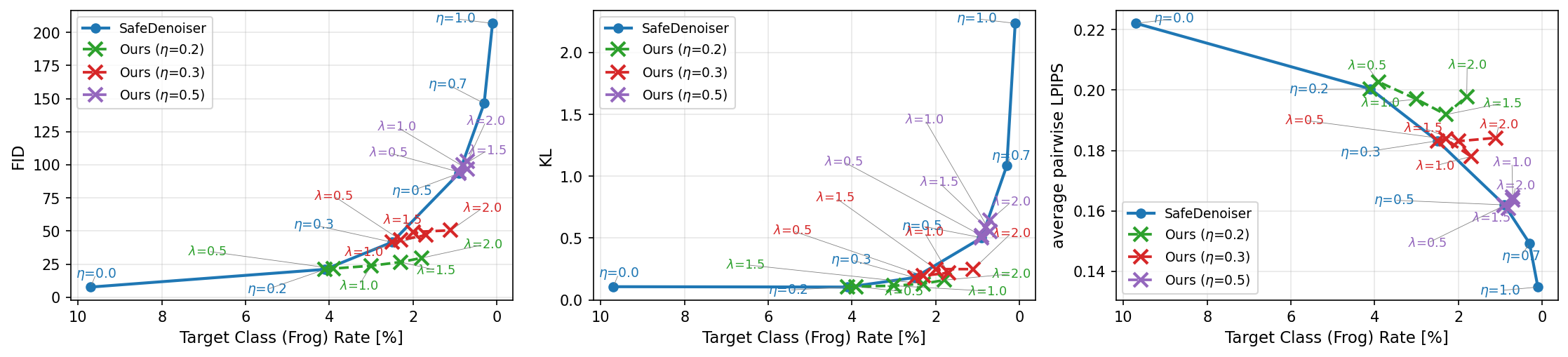}
\includegraphics[width=0.891\linewidth]{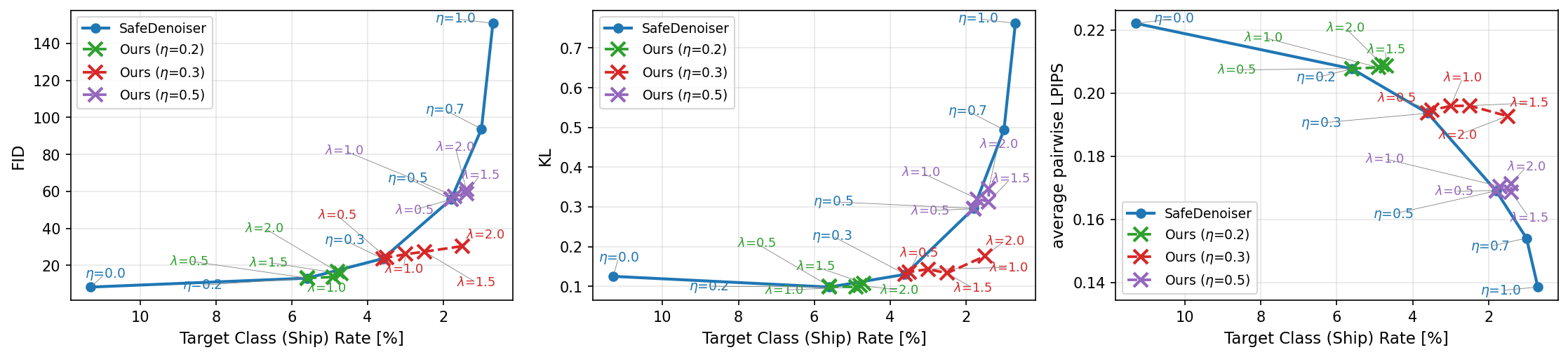}
\caption{Avoidance rate vs.\ generation quality on CIFAR-10 for the cat (top), frog (middle), and ship (bottom) single-class conditions. Columns plot target-class rate against FID, KL, and average pairwise LPIPS (left to right); lower is better except for LPIPS.}
\label{fig:cifar-perclass-sweeps}
\end{figure}

\note{
\paragraph{ESS diagnostics.}
Table~\ref{tab:cifar-ess} reports the ESS of $w_{q,t}$, the number of distinct surviving ancestors,
and the number of resampling events on the main configurations ($\eta{=}0.3$, $M{=}128$, $T{=}250$;
mean$\pm$std over $80$ runs each).
The time-averaged ESS stays at $110$--$127$ of $128$, and resampling stays sparse ($0$--$7$ events over the $250$ steps) under the conservative threshold $0.9M$,
a stability that the $r_t{=}1$ substitution provides.
Stronger avoidance $\lambda$ or the harder Vehicles condition deepens the ESS minimum, yet even the hardest setting keeps $56$ of $128$ distinct ancestors,
so the particles do not collapse onto a few ancestors.
The ESS minima occur in the final steps, where the tilt is largest and resampling is intentionally disabled to avoid duplicating nearly final particles.

\begin{table}[tp]
\centering
\small
\caption{ESS diagnostics on CIFAR-10 at $\eta{=}0.3$, $M{=}128$, $T{=}250$, mean$\pm$std over $80$ runs.
Ancestors counts the distinct ancestors among the final $128$ particles.}
\label{tab:cifar-ess}
\begin{tabular}{lccccc}
\toprule
Class & $\lambda$ & ESS (time avg.) & ESS (min) & Ancestors$/128$ & Resamples \\
\midrule
Plane    & $0.5$ & $126.5\pm0.6$ & $59.6\pm22.7$ & $105.4\pm19.0$ & $1.3\pm1.4$ \\
Plane    & $1.0$ & $125.9\pm0.8$ & $43.9\pm25.5$ & $85.2\pm27.0$  & $2.9\pm2.2$ \\
Plane    & $2.0$ & $125.1\pm0.9$ & $37.5\pm24.1$ & $66.6\pm28.2$  & $4.7\pm3.0$ \\
Vehicles & $0.5$ & $121.9\pm1.8$ & $72.5\pm18.0$ & $121.1\pm9.2$  & $0.4\pm0.5$ \\
Vehicles & $1.0$ & $116.4\pm2.8$ & $33.2\pm18.6$ & $94.0\pm14.1$  & $2.3\pm1.3$ \\
Vehicles & $2.0$ & $110.4\pm3.1$ & $10.1\pm7.1$  & $56.2\pm18.4$  & $6.4\pm2.3$ \\
\bottomrule
\end{tabular}
\end{table}

\paragraph{Batch-level spread.}
Table~\ref{tab:cifar-spread} reports the target-class rate per $128$-sample batch as mean$\pm$std over the $80$ seeds at $\eta{=}0.3$.
The means equal the rates plotted in \autoref{fig:cifar10}, and the standard error of each mean is std$/\sqrt{80}$.
The larger std under strong avoidance (Vehicles, $\lambda{=}2.0$) is consistent with the low ESS minimum in Table~\ref{tab:cifar-ess}.
The evaluator (test accuracy about $95\%$, training accuracy above $99\%$) follows the protocol of prior class-avoidance work~\citep{takahashi2025positiveunlabeled};
all methods are scored by the same evaluator, which is independent of every generation pipeline.

\begin{table}[!b]
\centering
\small
\caption{Target-class rate [\%] per $128$-sample batch on CIFAR-10 at $\eta{=}0.3$, mean$\pm$std over $80$ seeds.}
\label{tab:cifar-spread}
\begin{tabular}{lcc}
\toprule
Method & Plane & Vehicles \\
\midrule
Safe Denoiser            & $4.78\pm1.99$ & $27.60\pm3.98$ \\
Ours ($\lambda{=}0.5$)   & $3.88\pm2.31$ & $25.86\pm5.27$ \\
Ours ($\lambda{=}1.0$)   & $3.28\pm2.66$ & $19.87\pm6.09$ \\
Ours ($\lambda{=}2.0$)   & $2.54\pm3.01$ & $20.93\pm18.06$ \\
\bottomrule
\end{tabular}
\end{table}
}

\clearpage
\note{
\subsection{Stable Diffusion with Score-Based Setting}
\label{app:exp:sd}

This subsection instantiates the score-based setting of
\autoref{sec:method:score} on Stable Diffusion v1.4 \citep{Rombach_2022_CVPR}\footnote{\url{https://huggingface.co/CompVis/stable-diffusion-v1-4} \note{(CreativeML OpenRAIL-M license).}},
a latent diffusion model that generates $512\times512$ images from text prompts and whose state is a $64\times64$ latent grid with four channels, so $d = 16{,}384$.

\paragraph{Task and prompts.}
The content prompt is \texttt{"a painting of a cafe terrace at night under a starry sky"}, which never names the avoided style.
Following the negative-prompting format \citep{ho2022classifierfreediffusionguidance,10.1007/978-3-031-73024-5_12},
the style prompt appends \texttt{"in the style of Van Gogh"} to the content prompt,
and supplies the conditional predictor $\hat\epsilon_\theta(\cdot,\cdot\mid c)$ required by \autoref{sec:method:score}.
The evaluation pools and the detector calibration set use the longer suffix \texttt{"in the style of Vincent van Gogh"}.

\paragraph{Scores and sampling.}
The base model $p_{\theta,t}$ is the classifier-free-guidance combination of the content-prompt and unconditional predictions at scale $7.5$,
which is also the sampler that generates the content anchor.
Sampling uses $T{=}50$ steps in half precision.
SLD and DNG modify this same prediction, so their density without avoidance is also the content anchor; NegCFG is instead anchored on the style prediction (below).
SLD \citep{Schramowski_2023_CVPR} at its \textsc{xxmax} setting below serves as the SMC proposal $\mu_t^{\mathrm{prop}}$ for PU Steering throughout,
as the Safe Denoiser does in \autoref{app:exp:cifar10}.

\paragraph{Online PU classifier.}
Each particle is represented by average-pooling the four latent channels of its Tweedie estimate to a $16\times16$ grid, giving a $1024$-dimensional feature.
Positives are the style-conditional Tweedie estimates and unlabeled samples are the content Tweedie estimates of the same particles,
\note{a practical variant of the construction of \autoref{sec:method:score} with uniformly weighted positives and $w_{q,t}$-weighted unlabeled samples}.
Both reuse noise predictions that the SLD proposal already computes at every timestep, so the features cost no additional UNet evaluations.
The classifier is a kernel logistic model with an RBF bandwidth set by the median heuristic scaled by $0.4$,
fit to the uPU risk~\eqref{eq:upu} with an RKHS-norm penalty of $10^{-2}$ by L-BFGS at class prior $\pi{=}0.15$, refit at every timestep over the last $70\%$ of the trajectory.
Its logit is standardized across particles before the sigmoid and the resulting probability is clamped to $[10^{-3},1-10^{-3}]$.
The two estimates start from the same particle, so they share the prompted content and differ mainly in the avoided style, and this contrast keeps the uPU fit stable:
unlike the CIFAR-10 experiment of \autoref{sec:exp:cifar10}, where the small $N$ and $M$ forced the BCE substitution, this instantiation uses the uPU objective as written.

\paragraph{SMC control.}
As in \autoref{app:exp:cifar10}, we set the proposal-ratio factor $r_t$ in \eqref{eq:wq-recursion} to $1$.
The tempering increment is chosen by bisection to hold the effective sample size at $0.5M$, capped at $1.5\lambda/T$ per timestep and at the terminal $\lambda$.
Systematic resampling fires when the effective sample size falls below $0.7M$,
only within the middle window between $30\%$ and $70\%$ of the reverse steps, at a cadence of five steps (four for the $M{=}128$, $\lambda{=}20$ configuration),
and never at the final timestep, so each run returns weighted particles that can be read either unweighted as plain images or weighted as the SMC estimate.
Log-weights are clipped to $[-2,2]$ around their mean.

\paragraph{Baselines.}
SLD \citep{Schramowski_2023_CVPR} runs at three settings on the same guidance scale.
\textsc{max} is its strongest published preset \textsc{Hyp-Max} (no warm-up, safety scale $5000$, threshold $1$, momentum scale $0.5$, momentum decay $0.7$),
which leaves the style rate at $0.334$ against the floor value $0.238$.
We therefore add two settings beyond the published presets:
\textsc{xxmax} raises the safety scale to $20000$ and the momentum scale to $0.7$, and \textsc{x3max} further raises the momentum scale to $1$ and the momentum decay to $0.8$.
For NegCFG, preliminary sweeps on the guidance combination did not lower the style rate, the content guidance at scale $7.5$ dominating the style term,
so it is run in the negative-prompting form
$\hat\epsilon_{\mathrm{style}}+s\,(\hat\epsilon_{\mathrm{content}}-\hat\epsilon_{\mathrm{style}})$ with $s = \eta+1$ and $\eta\in\{0.5,1,1.5,2,3,4,5,6.5\}$.
DNG \citep{koulischer2025dynamic} runs the guided prediction with the recursion~\eqref{eq:dngposterior} in the form of its released implementation,
at temperature $\tau{=}0.1$, offset $\delta{=}0.02$,
squared distances summed over the $16{,}384$ latent dimensions, an initial posterior estimate of $0.05$, and the estimate clamped to $[10^{-6},0.99]$,
swept over $\eta\in\{100,150,180,200,220,240,260,280,300\}$.

\begin{figure}[t]
\centering
\begin{minipage}[t]{0.48\linewidth}
  \centering
  \includegraphics[width=\linewidth]{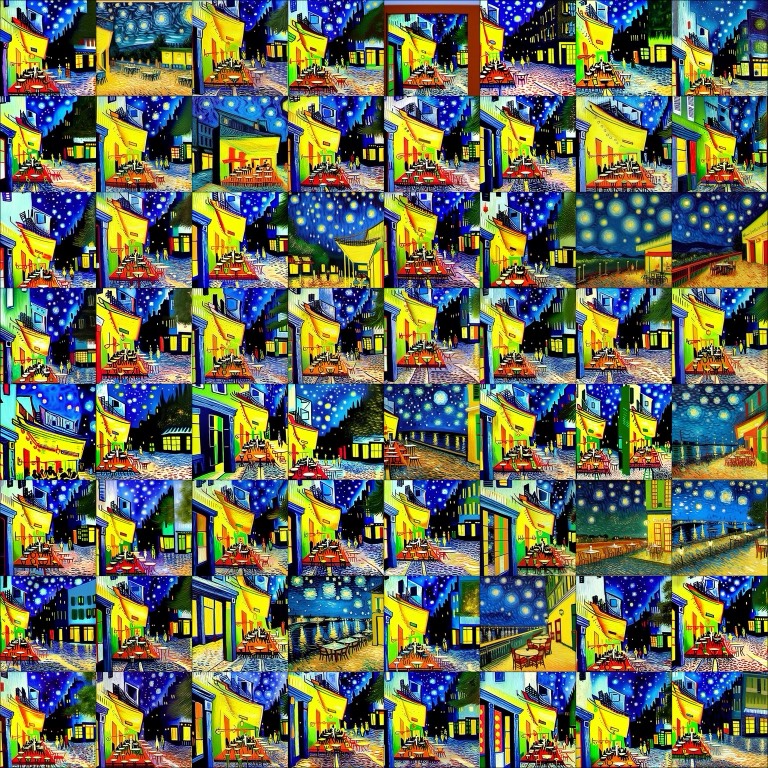}\\
  {\small (a) Avoided style}
\end{minipage}\hfill
\begin{minipage}[t]{0.48\linewidth}
  \centering
  \includegraphics[width=\linewidth]{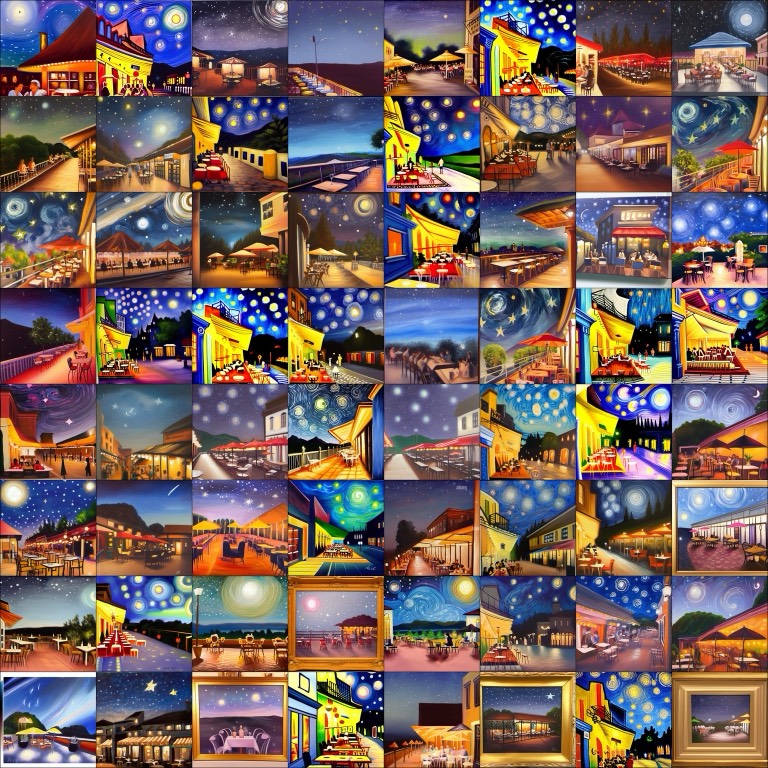}\\
  {\small (b) PU Steering}
\end{minipage}\\[4pt]
\begin{minipage}[t]{0.48\linewidth}
  \centering
  \includegraphics[width=\linewidth]{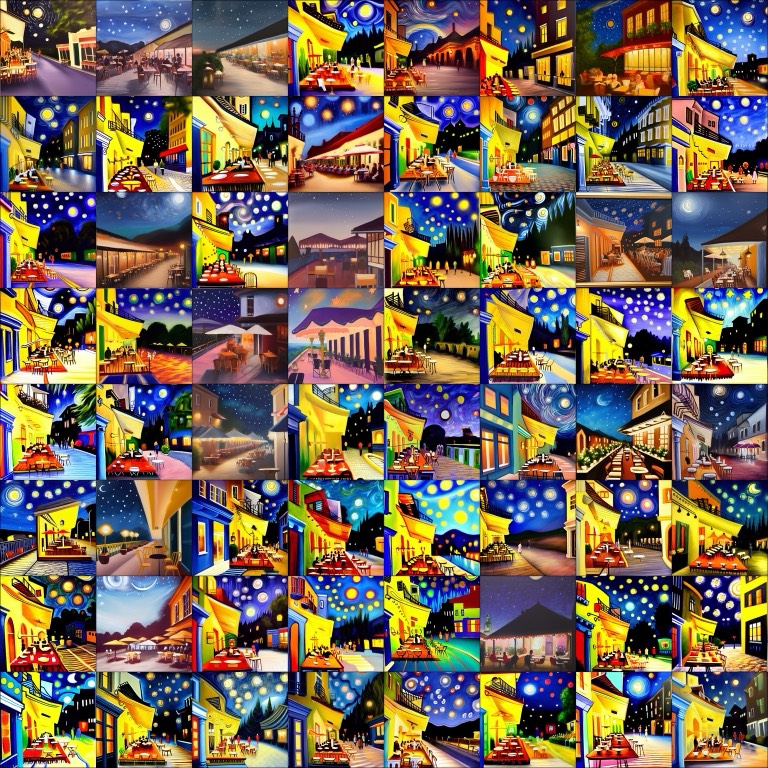}\\
  {\small (c) SLD}
\end{minipage}\hfill
\begin{minipage}[t]{0.48\linewidth}
  \centering
  \includegraphics[width=\linewidth]{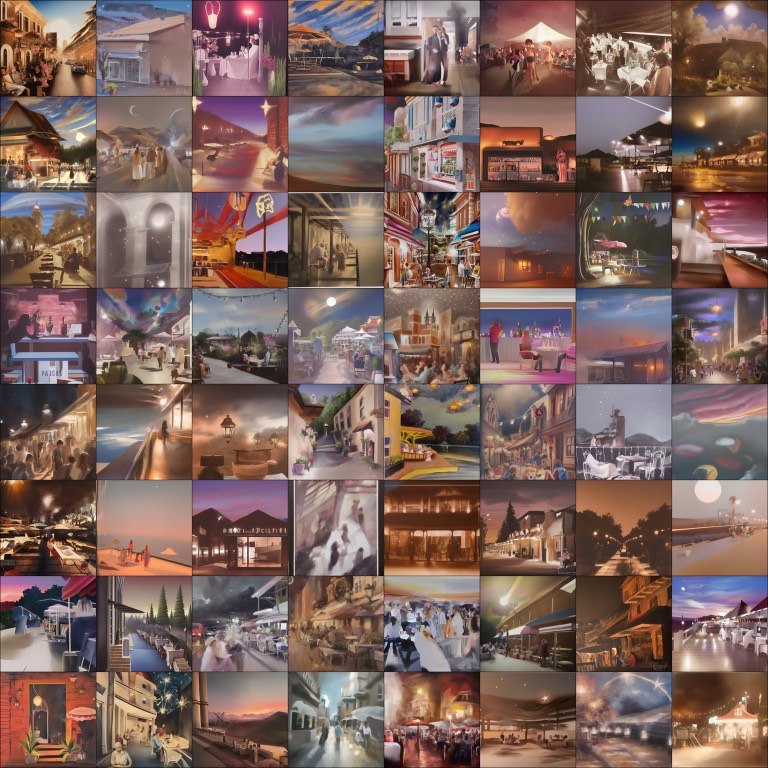}\\
  {\small (d) NegCFG}
\end{minipage}
\caption{Randomly selected $8\times8$ image grids,
from Table~\ref{tab:sd-full}: PU Steering at $M{=}128$, $\lambda{=}16$ read as the equal-weight output after resampling, SLD at \textsc{xxmax},
and NegCFG at $\eta{=}3$, its lowest-FID strength.}
\label{fig:sd-grids}
\end{figure}

\paragraph{Detector, reference, and floor.}
The style detector is a logistic head on the CSD style embedding
\citep{somepalli2024measuring}\footnote{\url{https://huggingface.co/tomg-group-umd/CSD-ViT-L} \note{(CC BY 4.0 license).}},
a $768$-dimensional descriptor of artistic style,
fit by L-BFGS with an $\ell_2$ penalty of $10^{-2}$ on a calibration set whose positives are Van Gogh renderings of the content and whose negatives are Rembrandt renderings and plain content,
all from seed blocks disjoint from every evaluation pool.
The reference set keeps the $2048$ images least like the avoided style, the bottom quarter by detector score of an $8192$-image content pool.
The oracle floor regenerates an independent $8192$-image pool with disjoint seeds and applies the same filter.
FID uses the Inception-$2048$ features of \texttt{pytorch-fid} \citep{Seitzer2020FID},
KID \citep{binkowski2018demystifying} is the unbiased cubic-polynomial-kernel estimate averaged over $100$ random subsets of $1024$,
and Style-FD is the Fr\'echet distance in the CSD embedding space.
The reported rate is the mean of the detector probability over the evaluated images, not a thresholded fraction.
For the weighted rows, all three distances instead use closed-form importance-weighted moments over the full set, without subsampling.
Because the oracle floor is itself a filtered content pool, its rate $0.238$ is the residual style rate of that distribution; it is a reference point rather than a lower bound.

\paragraph{Qualitative comparison.}
\autoref{fig:sd-grids} shows randomly selected images from each sampler.
Grid~(a) is the avoided style, generated with the style prompt: most images follow the composition of \emph{Caf\'e Terrace at Night}, with the yellow awning and the swirling starry sky.
PU Steering in grid~(b) keeps the prompted scene while spreading over a variety of painting styles, a minority of its images still resembling the avoided style.
SLD in grid~(c) stays close to the avoided style in a visibly larger fraction of images, and NegCFG in grid~(d) loses the saturated palette and shows washed-out colors and distortions,
removing the style at the cost of the scene.

\paragraph{Results.}
Table~\ref{tab:sd-full} reports every configuration and \autoref{fig:sd-frontier} plots the resulting frontier.
PU Steering at $M{=}128$, $\lambda{=}16$ reduces the rate of the equal-weight output to $0.235$, essentially the floor value,
while attaining the lowest FID ($28.10$ against the floor's $18.52$) and KID ($0.0054$ against $0.0002$) among the equal-weight outputs.
NegCFG reaches rates down to $0.222$, but its FID never falls below $82.75$,
worse than the content anchor at $60.98$, and its Style-FD stays at or above $0.897$:
it removes the style by destroying the content.
DNG traces a different curve, its FID falling to a minimum of $44.88$ at $\eta{=}200$ and rising again to $93.17$ at $\eta{=}300$ while the rate never falls below $0.277$.
SLD leaves the rate at $0.334$ at its strongest published preset, and the amplified settings reach $0.296$ with FID between $41.82$ and $43.70$.
Raising the particle count from $M{=}64$ to $M{=}128$ at $\lambda{=}16$ lowers the rate from $0.276$ to $0.235$, the FID from $33.46$ to $28.10$ and the KID from $0.0105$ to $0.0054$,
while Style-FD rises slightly from $0.086$ to $0.095$.
The mean terminal ESS stays between $0.47M$ and $0.51M$ across \note{configurations}.
The last block disables resampling and switches the tempering to a linear ramp to keep the run stable;
the weighted estimate then stays accurate while the plain images degrade toward the proposal,
which is why the main protocol reads the resampled particles unweighted.

\begin{table}[p]
\centering
\footnotesize
\setlength{\tabcolsep}{3pt}
\caption{Stable Diffusion v1.4 avoiding Van Gogh style, $2048$ images per configuration, lower is better throughout.
Gray rows are anchors rather than competing methods.
For PU Steering, ``u.'' evaluates the plain images and ``w.'' the SMC-weighted estimate, ESS$/M$ is the mean per-run terminal effective sample size, and $\lambda$ is the terminal tempering strength.
Bold marks the best FID, KID, and Style-FD among the methods outside the no-resampling ablation. Representative baseline strengths are listed;
\autoref{fig:sd-frontier} plots every configuration of the sweeps.}
\label{tab:sd-full}
\begin{tabular}{llccccc}
\toprule
Method & Config & Rate $\downarrow$ & FID $\downarrow$ & KID $\downarrow$ & Style-FD $\downarrow$ & ESS$/M$ \\
\midrule
\textcolor{gray}{Oracle floor} & \textcolor{gray}{$2048$ vs.\ $2048$} & \textcolor{gray}{$0.238$} & \textcolor{gray}{$18.52$} & \textcolor{gray}{$0.0002$} & \textcolor{gray}{$0.009$} & \textcolor{gray}{---} \\
\textcolor{gray}{Content (no avoidance)} & & \textcolor{gray}{$0.409$} & \textcolor{gray}{$60.98$} & \textcolor{gray}{$0.0552$} & \textcolor{gray}{$0.213$} & \textcolor{gray}{---} \\
\textcolor{gray}{Prompted Van Gogh} & & \textcolor{gray}{$0.645$} & \textcolor{gray}{$164.32$} & \textcolor{gray}{$0.1982$} & \textcolor{gray}{$0.661$} & \textcolor{gray}{---} \\
\midrule
SLD & \textsc{max}   & $0.334$ & $43.08$ & $0.0294$ & $0.114$ & --- \\
SLD & \textsc{xxmax} & $0.317$ & $41.82$ & $0.0263$ & $0.118$ & --- \\
SLD & \textsc{x3max} & $0.296$ & $43.70$ & $0.0284$ & $0.180$ & --- \\
NegCFG & $\eta{=}0.5$ & $0.279$ & $112.58$ & $0.1074$ & $0.939$ & --- \\
NegCFG & $\eta{=}2$   & $0.222$ & $85.56$ & $0.0678$ & $0.900$ & --- \\
NegCFG & $\eta{=}3$   & $0.224$ & $82.75$ & $0.0608$ & $0.897$ & --- \\
NegCFG & $\eta{=}6.5$ & $0.280$ & $97.35$ & $0.0775$ & $0.992$ & --- \\
DNG & $\eta{=}100$ & $0.382$ & $56.87$ & $0.0500$ & $0.192$ & --- \\
DNG & $\eta{=}200$ & $0.303$ & $44.88$ & $0.0273$ & $0.145$ & --- \\
DNG & $\eta{=}260$ & $0.277$ & $63.84$ & $0.0411$ & $0.243$ & --- \\
DNG & $\eta{=}300$ & $0.289$ & $93.17$ & $0.0737$ & $0.366$ & --- \\
\midrule
PU Steering (u.) & $M{=}128$, $\lambda{=}16$ & $0.235$ & $\mathbf{28.10}$ & $0.0054$ & $0.095$ & --- \\
\quad (w.) & & $0.228$ & $38.68$ & $\mathbf{0.0047}$ & $0.110$ & $0.47$ \\
PU Steering (u.) & $M{=}128$, $\lambda{=}20$ & $0.243$ & $31.07$ & $0.0056$ & $0.085$ & --- \\
\quad (w.) & & $0.245$ & $43.25$ & $0.0072$ & $0.092$ & $0.51$ \\
PU Steering (u.) & $M{=}128$, $\lambda{=}24$ & $0.253$ & $29.54$ & $0.0058$ & $\mathbf{0.082}$ & --- \\
\quad (w.) & & $0.239$ & $39.95$ & $0.0060$ & $0.111$ & $0.50$ \\
PU Steering (u.) & $M{=}64$, $\lambda{=}16$ & $0.276$ & $33.46$ & $0.0105$ & $0.086$ & --- \\
\quad (w.) & & $0.283$ & $43.27$ & $0.0115$ & $0.096$ & $0.48$ \\
PU Steering (u.) & $M{=}64$, $\lambda{=}16$ (repeat) & $0.280$ & $33.41$ & $0.0106$ & $0.090$ & --- \\
\quad (w.) & & $0.281$ & $43.42$ & $0.0116$ & $0.094$ & $0.49$ \\
\midrule
PU Steering, no rs.\ (u.) & $M{=}64$, $\lambda{=}8$ & $0.314$ & $40.41$ & $0.0243$ & $0.114$ & --- \\
\quad (w.) & & $0.272$ & $39.23$ & $0.0104$ & $0.103$ & $0.50$ \\
PU Steering, no rs.\ (u.) & $M{=}64$, $\lambda{=}12$ & $0.319$ & $41.76$ & $0.0256$ & $0.122$ & --- \\
\quad (w.) & & $0.268$ & $38.08$ & $0.0090$ & $0.106$ & $0.51$ \\
\bottomrule
\end{tabular}
\end{table}

\begin{figure}[p]
\centering
\includegraphics[width=0.99\linewidth]{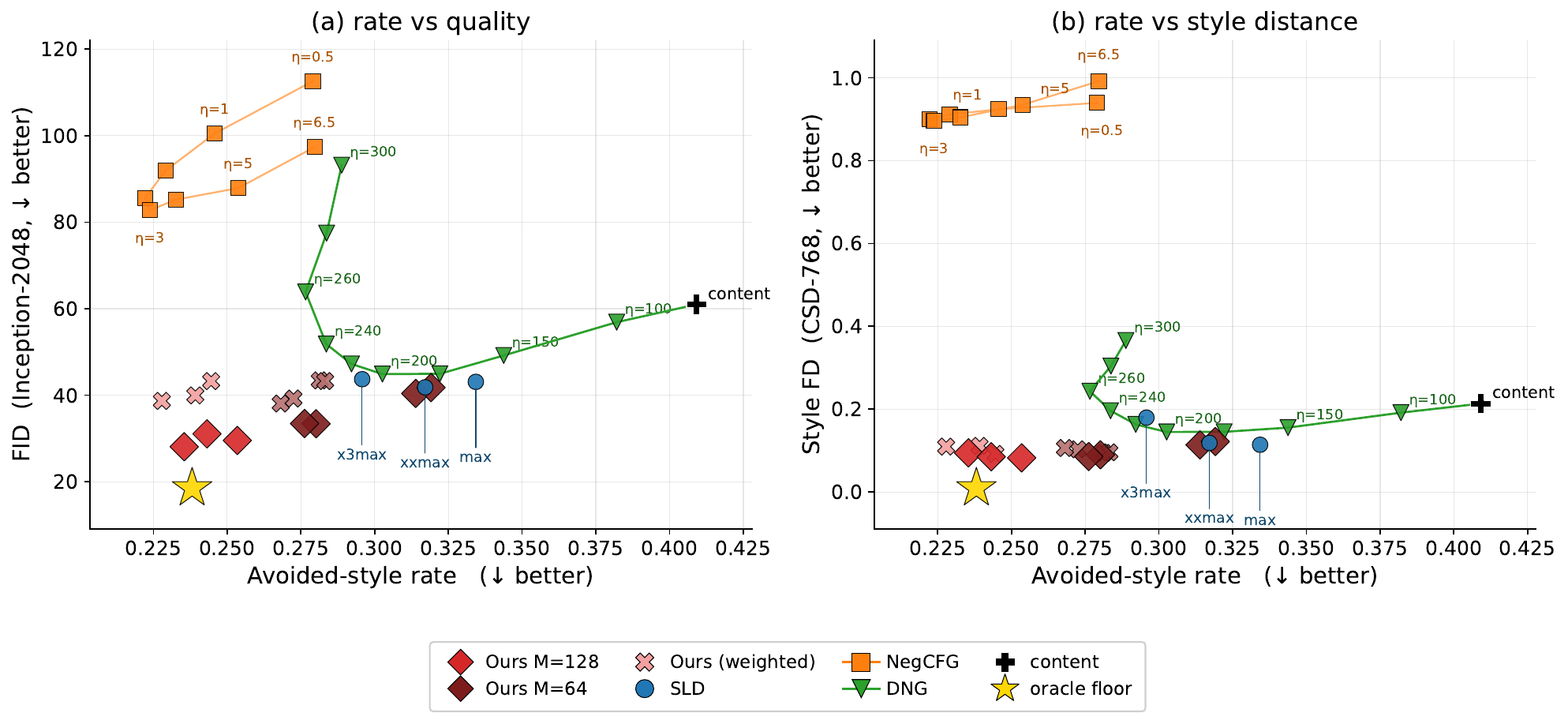}
\caption{Stable Diffusion, avoided-style rate against FID (left) and Style-FD (right); lower left is better in both panels.
Every configuration of Table~\ref{tab:sd-full} is plotted, including the two no-resampling points,
which are the rightmost $M{=}64$ markers.}
\label{fig:sd-frontier}
\end{figure}

\paragraph{Sample-size stability.}
The $2048$ images per configuration are far fewer than the $30{,}000$ on which FID is usually evaluated for text-to-image models \citep{Rombach_2022_CVPR}.
The independent repetition of the $M{=}64$, $\lambda{=}16$ configuration in Table~\ref{tab:sd-full} gives a rate of $0.280$ against $0.276$ and an FID of $33.41$ against $33.46$,
with the remaining metrics similarly close, so the reported values are stable across seeds at this sample size.
}